\documentclass[11pt]{article} 

\usepackage[american]{babel}
\usepackage{tikz}
\usepackage{pgfplots}
\pgfplotsset{compat=1.16}

\usepackage{fullpage,times,url,bm}
\usepackage{amsthm,amsfonts,amsmath,amssymb,mathtools,epsfig,color,float,graphicx,verbatim}
\usepackage{algorithm,algorithmic}
\usepackage{bbm}
\usepackage[square,numbers]{natbib}
\usepackage{epstopdf}
\usepackage{authblk}
\usepackage[utf8]{inputenc}

\usepackage{hyperref}
\hypersetup{
	colorlinks   = true,
	urlcolor     = blue,
	linkcolor    = blue,
	citecolor   = black
}

\usepackage{pgfgantt}

\usepackage{natbib} 
\usepackage{mathtools} 
\usepackage{booktabs} 
\usepackage{tikz} 
\usetikzlibrary{positioning}
\usepackage{pgfplots}
\usepackage{amsthm,amsmath,amssymb}

\newtheorem{theorem}{Theorem}[section]
\newtheorem{proposition}[theorem]{Proposition}
\newtheorem{lemma}[theorem]{Lemma}
\newtheorem{corollary}[theorem]{Corollary}
\newtheorem{definition}[theorem]{Definition}

\newtheorem{assumption}[theorem]{Assumption}
\newtheorem*{acknowledgments}{Acknowledgments}

\newcommand{\reals}{\mathbb{R}}

\newcommand{\sign}{\mathrm{sign}}

\newcommand{\relu}[1]{\left[ #1 \right]_+}
\newcommand{\set}[1]{\left\{#1\right\}}

\newcommand{\bx}{\mathbf{x}}

\newcommand{\btheta}{\boldsymbol{\theta}}

\newcommand{\Lcal}{\mathcal{L}}
\newcommand{\Ocal}{\mathcal{O}}

\newcommand{\Ncal}{\mathcal{N}}

\newcommand{\norm}[1]{\left\|#1\right\|}

\newcommand{\p}[1]{\left(#1\right)}

\newcommand{\abs}[1]{\left|#1\right|}

\newcommand{\floor}[1]{\left\lfloor#1\right\rfloor}

\DeclareMathOperator{\erfc}{erfc}

\newcommand{\secref}[1]{Section~\ref{#1}}
\newcommand{\subsecref}[1]{Subsection~\ref{#1}}
\newcommand{\figref}[1]{Figure~\ref{#1}}
\renewcommand{\eqref}[1]{Equation~(\ref{#1})}
\newcommand{\lemref}[1]{Lemma~\ref{#1}}

\newcommand{\thmref}[1]{Theorem~\ref{#1}}
\newcommand{\propref}[1]{Proposition~\ref{#1}}
\newcommand{\appref}[1]{Appendix~\ref{#1}}

\newcommand{\itemref}[1]{Item~\ref{#1}}
\newcommand{\asmref}[1]{Assumption~\ref{#1}}

\title{On the Rate of Convergence of GD in Non-linear Neural Networks:\\An Adversarial Robustness Perspective}

\author[1]{Guy Smorodinsky}
\author[2,1]{Sveta Gimpleson}
\author[1]{Itay Safran}
\affil[1]{%
    Faculty of Computer and Information Science\\
    Ben-Gurion University of the Negev\\
    Israel
}
\affil[2]{%
    Department of Computer Science\\
    Reichman University\\
    Israel\\
}

\pgfmathdeclarefunction{erfcapprox}{1}{%
  \begingroup
  \pgfmathsetmacro\t{1/(1+0.3275911*#1)}%
  \pgfmathparse{(
    ((
      ((1.061405429*\t - 1.453152027)*\t + 1.421413741)*\t - 0.284496736
    )*\t + 0.254829592)*\t
  ) * exp(-(#1)^2)}%
  \pgfmathsmuggle\pgfmathresult
  \endgroup
}
\date{}
  
\begin{document}
\maketitle

\begin{abstract}
    We study the convergence dynamics of Gradient Descent (GD) in a minimal binary classification setting, consisting of a two-neuron ReLU network and two training instances. We prove that even under these strong simplifying assumptions, while GD successfully converges to an optimal robustness margin, effectively maximizing the distance between the decision boundary and the training points, this convergence occurs at a prohibitively slow rate, scaling strictly as $\Theta(1/\ln(t))$. To the best of our knowledge, this establishes the first explicit lower bound on the convergence rate of the robustness margin in a non-linear model. Through empirical simulations, we further demonstrate that this inherent failure mode is pervasive, exhibiting the exact same tight convergence rate across multiple natural network initializations. Our theoretical guarantees are derived via a rigorous analysis of the GD trajectories across the distinct activation patterns of the model. Specifically, we develop tight control over the system's dynamics to bound the trajectory of the decision boundary, overcoming the primary technical challenge introduced by the non-linear nature of the architecture.
\end{abstract}

\section{Introduction}\label{sec:intro}

The implicit bias of optimization algorithms has become a central pillar of deep learning theory in the effort to better understand the generalization capabilities of neural networks. It is now well-understood that continuing optimization long after achieving perfect classification can significantly improve standard generalization performance. Furthermore, research has increasingly shown that this over-training can also enhance a model's adversarial robustness---its resilience against imperceptible perturbations designed to induce misclassification \citep{biggio2013evasion,szegedy2013intriguing}. Gradient descent (GD) remains the fundamental algorithm for training these models; however, its theoretical analysis is often confounded by the non-differentiability of ReLU activations and the complexities of discrete step-size tuning. To bypass these technical hurdles, many researchers employ gradient flow (GF) as a continuous-time surrogate. While GF yields more amenable analyses, it often abstracts away the discrete dynamics of actual training. Thus, determining the precise convergence rates of its discrete-time counterpart, GD, remains a critical and non-trivial challenge.

Unlike its discrete counterpart, the implicit bias of GF is significantly better characterized. In the context of homogeneous networks, GF has been shown to maximize the classification margin in the sense of the Karush-Kuhn-Tucker (KKT) conditions \citep{lyu2019gradient,ji2020directional}. Apart from improving generalization \citep{lyu2021gradient,sarussi2021towards,frei2022implicit}, because this geometric margin dictates the distance to the decision boundary, such margin-maximizing behavior is widely hypothesized to also foster natural resilience against adversarial perturbations. Building on this intuition, recent work has explicitly guaranteed that gradient flow can inherently drive certain networks toward adversarially robust solutions \citep{min2025gradient}.

However, it remains an open question whether GD inherently exhibits the same bias. Even when explicit robustness guarantees are established for GF, and in specific regimes where GD and GF are known to align \citep{elkabetz2021continuous}, GF fundamentally abstracts away computational constraints. Consequently, it remains unclear whether practical algorithms like GD can converge to these robust optima within a tractable number of iterations. Moreover, the introduction of adaptive methods such as Adam complicates this landscape, as their distinct implicit biases may render robustness guarantees derived from GF analysis inapplicable. Despite ample evidence that the convergence rate for linear models is of order $\Theta(1/\ln(t))$ \citep{soudry2018implicit,ji2018risk,nacson2019stochastic,nacson2019convergence,ji2021characterizing,ji2021fast,wang2024achieving}, determining the convergence rate in non-linear models, which are of primary interest from a practical perspective, remains a fundamental open problem.

In this paper, we seek to bridge these gaps by studying the convergence of GD toward an adversarially robust margin. We focus on a depth-2, width-2 ReLU network of the form
\begin{equation}\label{eq:architecture}
    \Phi\p{\btheta;x} \coloneqq \relu{w_1x+b_1} - \relu{w_2x+b_2},
\end{equation}
where the weights of the output neuron are fixed, and optimization is performed over the hidden layer parameters $\btheta = \{w_1, b_1, w_2, b_2\}$. We analyze the optimization dynamics of GD and GF on a binary classification task with training samples $(x_1,y_1)=(-1,-1)$ and $(x_2,y_2)=(1,1)$. The objective is the empirical risk under the exponential loss, $\ell(z) \coloneqq \exp(-z)$,\footnote{As the logistic loss $z \mapsto \ln(1+\exp(-z))$ exhibits the same exponential tail decay, our results can be naturally extended to that setting; however, we focus on the exponential loss for clarity and analytical simplicity.} a standard choice for studying implicit bias in classification:
\begin{equation}\label{eq:obj}
    \Lcal\p{\btheta} \coloneqq \frac12\exp\p{\Phi\p{\btheta;-1}} + \frac12\exp\p{-\Phi\p{\btheta;1}}.
\end{equation}

In this paper, we demonstrate that while GF converges to robust solutions that maximize the margin as $t \to \infty$ in this simplified setting, these results provide a false sense of security for practical optimization. We show that while GD eventually converges to the same robust solutions, the rate of convergence is almost always prohibitively slow: the distance to the optimal margin decays only at a rate of $\Theta(1/\ln(t))$. To the best of our knowledge, this provides the first provable slow convergence guarantee for the robustness margin in a non-linear neural network, and significantly strengthens previous findings which were restricted to linear models. 

Furthermore, we complement our theoretical analysis with empirical evaluations. Our experiments reveal that our derived theoretical rate is, in fact, conservative; due to necessary analytical relaxations made to tractably bound the dynamics, practical convergence to a robust margin is often even slower. Because our analysis provides a negative result in a highly restricted setting, these simplifying assumptions serve to underscore the severity of the bottleneck. If convergence is prohibitively slow in a minimal architecture, it is highly improbable that it will accelerate in complex, over-parameterized models, where these exact slow dynamics can easily emerge as a constituent sub-network.

The remainder of this paper is structured as follows: After surveying additional related work, we establish our notation and provide the necessary theoretical background in \secref{sec:prelims}. In \secref{sec:GF}, we analyze the convergence of GF toward a robust network within our specified setting. \secref{sec:GD_dynamics} details the optimization dynamics of discrete-time GD, providing the technical foundation for \secref{sec:slow_convergence}, where we derive our main result: the $\Theta(1/\ln(t))$ convergence rate to the optimal robustness margin. Finally, in \secref{sec:experiments}, we present empirical findings that validate our theoretical bounds and demonstrate that convergence under standard initialization schemes is frequently even slower than our theoretical predictions.


\subsection{Related work}

\paragraph{Implicit Bias Towards Robust Solutions}

The seminal works of \citet{lyu2019gradient} and \citet{ji2020directional} provide a strong characterization of the implicit bias of GF in homogeneous neural networks. Specifically, \citet{lyu2019gradient} link the implicit bias in non-linear networks to margin maximization. They prove that for ReLU networks, GF converges in direction to a KKT point of the maximum margin problem (see \subsecref{subsec:KKT_prelims} for a formal definition), a geometric property that inherently corresponds to pushing the network's decision boundary farther away from the training instances.

Building on this, \citet{ji2020directional} utilize Kurdyka-\L{}ojasiewicz inequalities to analyze deep homogeneous networks, demonstrating that the network's parameters and internal features directionally align to maximize this margin, thereby guaranteeing that the margin distribution asymptotically converges. While these topological approaches suggest an implicit regularization toward improved adversarial robustness, they do not yield explicit finite-time convergence guarantees. 

Recently, \citet{min2025gradient} formalized this connection between implicit bias and adversarial robustness by proving that for shallow networks trained on clustered data, the use of polynomial ReLU (pReLU) activations allows gradient flow to implicitly find solutions with certified $\Ocal(1)$ adversarial robustness. However, because this guarantee pertains exclusively to the continuous-time dynamics of GF, it inherently disregards computational constraints.

Despite the optimistic guarantees discussed above, the connection between margin maximization in the KKT sense and adversarial robustness is not absolute. For instance, \citet{frei2023double} and \citet{melamed2023adversarial} demonstrate that there exist settings where the implicit bias of GF leads to KKT points that are provably suboptimal in terms of their robustness margin. While these works highlight failure modes in the asymptotic direction of GF, our work exposes an orthogonal computational vulnerability: even in settings where the implicit bias perfectly aligns with the optimal robust decision boundary, the finite-time dynamics of GD prevent convergence in tractable time.

\paragraph{Slow Margin Convergence}

As noted in the introduction, there is ample work studying the rate of convergence to an optimal margin \citep{soudry2018implicit,ji2018risk,lyu2019gradient,nacson2019stochastic,nacson2019convergence,ji2021characterizing,ji2021fast,wang2024achieving}. While these papers are foundational for the modern understanding of implicit bias, their core quantitative results regarding the convergence rate of the weight vector to the max-margin direction are predominantly derived for linear classifiers. Moreover, these analyses typically bound the growth rate of the weights, which in linear models inherently dictates a slow, logarithmic convergence to the optimal margin. In contrast, our setting is significantly more technically involved. In non-linear models, a logarithmic growth bound on the weights does not trivially preclude fast convergence of the robustness margin, as the non-linearities and bias terms could theoretically allow the decision boundary to rapidly shift to the optimal location. Consequently, existing analyses based solely on parameter growth are inadequate for establishing convergence lower bounds in our setting.

\paragraph{Acceleration Algorithms to Improve Convergence Rate}
\citet{nacson2019convergence}, \citet{ji2021fast}, and \citet{wang2024achieving} propose alternatives to standard GD designed to accelerate convergence to the optimal margin in the linear setting. Specifically, \citet{nacson2019convergence} and \citet{ji2021fast} prove that NGD, a variant of GD with a step size scaled by the training loss, converges at a faster rate of $\Ocal(1/t)$. \citet{wang2024achieving} further improve this rate to $\Ocal(\exp(-t))$ by introducing a novel algorithm, PRGD, that progressively rescales the network parameters to artificially bypass the logarithmic norm growth bottleneck of standard GD. Despite these vastly improved convergence rates, it is not obvious whether such algorithms necessarily enjoy the same implicit bias benefits as GF or unaccelerated GD. Moreover, because these methods scale the step size by a significant quantity, they are highly susceptible to ``overshooting'' in non-linear loss landscapes. In such scenarios, an excessively large step size can prevent convergence entirely or severely impede optimization by permanently deactivating ReLU neurons.

\section{Preliminaries and Notation}\label{sec:prelims}

\subsection{Notation and Terminology}

We use bold-faced letters to denote vectors (e.g., $\bx$) and capital letters to denote random variables (e.g., $X$). We let $\mathcal{N}(\mu, \sigma^2)$ denote a normal distribution with mean $\mu$ and variance $\sigma^2$. For a model $\Phi(\btheta; \cdot): \mathbb{R}^d \to \mathbb{R}$, we let the predicted class of a point $\bx$ be given by $\sign(\Phi(\btheta; \bx))$. The robustness margin $\gamma(\btheta)$ with respect to a dataset $\{(\bx_i, y_i)\}_{i=1}^n$ is the minimum distance required to perturb an instance $\bx_i$ such that the model's prediction changes:
\[
    \gamma(\btheta) \coloneqq \min_{i \in \{1, \dots, n\}} \inf \{ \| \Delta \| : \sign(\Phi(\btheta; \bx_i + \Delta)) \neq y_i \}.
\]
For the target architecture in \eqref{eq:architecture}, we let $x^\star$ denote the decision threshold where the model's prediction flips. When this point is unique, the robustness margin simplifies to the distance from the training instances to $x^\star$, namely $\gamma = \min\{|x^\star - 1|, |x^\star+ 1|\}$. The optimal margin, denoted $\gamma^\star$, is attained when $x^\star = 0$, yielding $\gamma^\star = 1$. A network $\Phi(\btheta; \bx)$ is called \emph{homogeneous} if there exists $\alpha > 0$ such that for all $c > 0$, all $\btheta$ and all $\bx$, it holds that $\Phi(c \cdot \btheta; \bx) = c^\alpha\Phi(\btheta; \bx)$. We utilize standard asymptotic notation (e.g., $\mathcal{O}, \Omega, \Theta$).

\subsection{Preliminaries on KKT Conditions}\label{subsec:KKT_prelims}

    \begin{definition}[Directional convergence]
        We say that GF or GD \emph{converge in direction} to $\hat{\btheta}$ if
        \[
            \lim_{t \rightarrow \infty}\frac{\btheta_t}{\|\btheta_t\|} = \frac{\hat{\btheta}}{\|\hat{\btheta}\|},
        \]
        where $\btheta_t=\btheta(t)$ denotes the flow at time $t$ for GF, and $\btheta_t=\btheta^{(t)}$ denotes the $t^{\textrm{th}}$ iterate of GD, depending on the optimization algorithm.
    \end{definition}

    The following seminal result characterizes the implicit bias of GF on homogeneous neural networks.
    \begin{theorem}[\citet{lyu2019gradient,ji2020directional}] \label{thm:known_KKT}
        Let $\Phi(\btheta;\cdot)$ be a depth-$2$ ReLU neural network parameterized by 
        $\btheta$.  
        Consider minimizing either the exponential or the logistic loss over a binary 
        classification dataset $\{(x_i,y_i)\}_{i=1}^n$ using GF.  
        Assume that there exists a time $t_0$ such that 
        $\Lcal(\btheta(t_0)) < \tfrac{1}{n}$, i.e., 
        $y_i \Phi(\btheta(t_0); x_i) > 0$ for every $x_i$.  
        Then, GF converges in direction to a first-order stationary point 
        (a KKT point) of the following max-margin problem in parameter space:
        \begin{equation} \label{eq:optimization problem}
            \min_{\btheta} \; \frac{1}{2}\|\btheta\|^2
            \quad \text{s.t.} \quad 
            y_i\,\Phi(\btheta;x_i) \ge 1 \;\; \text{for all } i \in [n].
        \end{equation}
        Moreover, $\Lcal(\btheta(t)) \to 0$ and $\|\btheta(t)\| \to \infty$ as 
        $t \to \infty$.
    \end{theorem}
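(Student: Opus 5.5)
This is the known result of \citet{lyu2019gradient} and \citet{ji2020directional}, so the plan is to reconstruct the standard argument for homogeneous networks, using the facts the paper sets up. The first step is to check that a depth-$2$ ReLU network with trainable hidden-layer weights and biases is positively homogeneous in $\btheta$. For the architecture in \eqref{eq:architecture}, $\Phi(c\btheta;x)=c\,\Phi(\btheta;x)$ for every $c>0$, because $\relu{c(w x+b)}=c\relu{wx+b}$; more generally the degree is $L$ for an $L$-layer network. One also records Euler's identity for the Clarke subdifferential, $\langle \partial_{\btheta}\Phi(\btheta;x),\btheta\rangle=\alpha\,\Phi(\btheta;x)$, which holds along arcs by the chain rule for definable functions. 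GF is understood as an arc satisfying $\dot\btheta\in-\partial\Lcal(\btheta)$ almost everywhere.

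Next I would show $\Lcal\to0$ and $\|\btheta\|\to\infty$. Set $q_i=y_i\Phi(\btheta;x_i)$. Since $\Lcal(\btheta(t_0))<1/n$, every $q_i>0$ at $t_0$, and monotonic decrease of $\Lcal$ keeps them positive afterwards. Euler's identity gives
\[
\tfrac12\tfrac{d}{dt}\|\btheta\|^2=\alpha\sum_i \ell'(q_i)\cdot(-q_i)\cdot\tfrac1n,
\]
which is at least of order $\Lcal\ln(1/(n\Lcal))$. Combined with $\tfrac{d}{dt}\Lcal=-\|\dot\btheta\|^2\le-(\langle\dot\btheta,\btheta\rangle/\|\btheta\|)^2$ and $q_{\min}\le\|\btheta\|^\alpha B$, this yields a differential inequality showing $\Lcal(t)\to0$ at a rate like $1/(t\,\mathrm{polylog})$. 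Hence $q_{\min}\to\infty$, which forces $\|\btheta\|\to\infty$.

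The core step is the smoothed normalized margin $\tilde\gamma=\ln(1/(n\Lcal))/\|\btheta\|^\alpha$. I would show it is nondecreasing for $t\ge t_0$ by splitting $\dot\btheta$ into radial and tangential components and using Euler's identity together with Cauchy–Schwarz. Its total increase bounds the integrated tangential speed $\int\|\tfrac{d}{dt}(\btheta/\|\btheta\|)\|\,d\ln\|\btheta\|$. From this, any limit point $\bar\btheta$ of $\btheta/\|\btheta\|$ is a KKT point of \eqref{eq:optimization problem} after rescaling by $q_{\min}^{-1/\alpha}$. The multipliers are $\lambda_i\propto\lim e^{-q_i}/\|\nabla\Lcal\|$. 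Complementary slackness holds because non-minimal margins have exponentially smaller weights. Stationarity holds because the gradient direction asymptotically aligns with $\btheta$, via an approximate KKT argument that sends the approximation error to zero.

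The main obstacle is upgrading from "every limit point is KKT" to genuine directional convergence. This needs the argument of \citet{ji2020directional}. ReLU networks are definable in an o-minimal structure, so a desingularizing function $\psi$ exists for which a Kurdyka–\L{}ojasiewicz inequality holds for $\tilde\gamma$ restricted to the sphere. That inequality shows that the path length of $\btheta/\|\btheta\|$ is bounded by $\psi$ of the remaining margin increase, which is finite, so the direction converges. I would cite this step rather than reprove it, since it relies on nonsmooth o-minimal KL machinery.
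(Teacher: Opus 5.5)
The paper gives no proof of this theorem; it imports it from \citet{lyu2019gradient} and \citet{ji2020directional}. Your outline is exactly the argument of those works: Euler's identity, a monotone smoothed margin, limit points of $\btheta/\norm{\btheta}$ are KKT, then the o-minimal Kurdyka--\L{}ojasiewicz step for directional convergence. So the route is right, but one step fails in the order you wrote it.

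To turn
\[
-\tfrac{d}{dt}\Lcal \;\ge\; \frac{\langle\dot\btheta,\btheta\rangle^2}{\norm{\btheta}^2} \;\ge\; \frac{\alpha^2\Lcal^2\ln^2\p{1/(n\Lcal)}}{\norm{\btheta}^2}
\]
into a differential inequality in $\Lcal$ alone, you need an \emph{upper} bound on $\norm{\btheta}$ in terms of $\Lcal$. The fact $q_{\min}\le B\norm{\btheta}^\alpha$ is a \emph{lower} bound on $\norm{\btheta}$, so it points the wrong way; it is only useful at the end, to pass from $q_{\min}\to\infty$ to $\norm{\btheta}\to\infty$. The bound you need is $\norm{\btheta}^\alpha\le\ln\p{1/(n\Lcal)}/\tilde\gamma(t_0)$, which is the monotonicity of the smoothed margin, so that step must come first. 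This is not circular. With $\rho=\norm{\btheta}$, your own inequalities $\rho\dot\rho\ge\alpha\Lcal\ln\p{1/(n\Lcal)}$ and $-\dot\Lcal\ge\dot\rho^2$ give $\tfrac{d}{dt}\ln\ln\p{1/(n\Lcal)}\ge\alpha\tfrac{d}{dt}\ln\rho$, which is exactly $\tfrac{d}{dt}\tilde\gamma\ge0$. Feeding the resulting bound on $\rho$ back in gives $-\dot\Lcal\ge\alpha^2\tilde\gamma(t_0)^{2/\alpha}\Lcal^2\p{\ln\frac{1}{n\Lcal}}^{2-2/\alpha}$, and from that the $1/(t\,\mathrm{polylog}\,t)$ rate. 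The crude bound $\norm{\btheta(t)}\le\norm{\btheta(t_0)}+\sqrt{(t-t_0)\Lcal(\btheta(t_0))}$ would also give $\Lcal\to0$, but not that rate.

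Three further statements need correcting.

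First, the increase of $\ln\tilde\gamma$ bounds $\alpha\int\norm{d\hat\btheta/d\ln\rho}^2\,d\ln\rho$, where $\hat\btheta=\btheta/\norm{\btheta}$. This is the \emph{squared} tangential speed in log-norm time, not a path length. Since $\ln\rho\to\infty$, finiteness of this energy does not bound $\int\norm{\dot{\hat\btheta}}\,dt$. That is why this stage only yields ``every limit point is KKT'', and why your final KL paragraph is needed at all; read literally, your sentence would make it redundant.

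Second, the inequality of \citet{ji2020directional} is not a KL inequality for $\tilde\gamma$ restricted to the sphere, since $\tilde\gamma$ is not $0$-homogeneous. It is a KL-type inequality on $\set{\norm{\btheta}>R}$ that handles the radial and tangential parts of $\partial\tilde\gamma$ separately. Since you cite it, this only affects how you state it.

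Third, everything you wrote is specific to the exponential loss. For the logistic loss the smoothed margin is $-\ln\p{e^{n\Lcal}-1}/\norm{\btheta}^\alpha$. There the hypothesis $\Lcal<1/n$ does not even give $q_i>0$: a point with $q_i=0$ contributes only $\ln 2/n<1/n$. The cited results require $\Lcal<\ln 2/n$ in that case, an imprecision you inherited from the statement itself.
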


    When its conditions are satisfies, the above theorem implies that the model converges in direction to a parameter vector $\btheta$ that satisfies the following explicit KKT constraints:
    \begin{align}
        &\btheta = \sum_{i=1}^n\lambda_iy_i\nabla_{\btheta}\Phi(\btheta;\bx_i),\label{eq:theta}\\
        &y_i\Phi(\btheta;\bx_i)\ge 1~~\text{for all }i\in[n],\label{eq:correct}\\
        &\lambda_1,\dots,\lambda_n \geq 0, \label{eq:primal_feasibility}\\
        &\text{for all }i\in[n],~~\text{if}~~y_i\Phi(\btheta;\bx_i)\neq 1 ~ \text{ then } \lambda_i=0. \label{eq:zero_lam}
    \end{align}

\section{Convergence of GF to Robust Solutions}\label{sec:GF}

    We begin by characterizing the implicit bias of GF on the objective defined in \eqref{eq:obj}. As discussed in the introduction, recent literature has established that for homogeneous networks,\footnote{It is straightforward to prove that the architecture we study in \eqref{eq:architecture} is homogeneous with $\alpha=1$.} the implicit bias of GF drives the parameters toward a solution that maximizes the classification margin in the KKT sense. By applying \thmref{thm:known_KKT}, we demonstrate that our specific architecture also adheres to this behavior, converging to a solution with an optimal robustness margin.
    \begin{theorem}\label{thm:KKT}
        Consider optimizing the objective in \eqref{eq:obj} using GF. Assume that there exists a time $t_0\ge0$ such that $\Lcal(\btheta(t_0))<0.5$. Then,
        \begin{equation}\label{eq:KKT_implciit_bias}
            \lim_{t\to\infty}\frac{\btheta(t)}{\norm{\btheta(t)}} = \btheta^\star\coloneqq(0.5,0.5,-0.5,0.5).
        \end{equation}
        In particular, $\btheta^\star$ achieves an optimal robustness margin $\gamma(\btheta^\star)=\gamma^\star$.
    \end{theorem}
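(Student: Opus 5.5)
The plan is to invoke \thmref{thm:known_KKT} and then classify all KKT points of the max-margin problem in \eqref{eq:optimization problem} for our architecture, showing that there is exactly one. Here $n=2$, so the hypothesis $\Lcal(\btheta(t_0))<0.5=\tfrac1n$ is exactly the one required by \thmref{thm:known_KKT}. The architecture is depth-$2$ and homogeneous, so GF converges in direction to some $\hat\btheta$ satisfying \eqref{eq:theta}--\eqref{eq:zero_lam}. If the only solution of these conditions is $\hat\btheta=(1/2,1/2,-1/2,1/2)$, which has unit norm, then \eqref{eq:KKT_implciit_bias} follows immediately.

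\textbf{Step 1: write out the stationarity condition.} Since ReLU is non-differentiable at $0$, I will interpret $\nabla_\btheta\Phi$ in \eqref{eq:theta} as a Clarke subgradient. For the derivative of $\relu{w_jx_i+b_j}$ I use a coefficient $a_{ji}\in[0,1]$, with $a_{ji}=1$ if $w_jx_i+b_j>0$ and $a_{ji}=0$ if $w_jx_i+b_j<0$. With $(x_1,y_1)=(-1,-1)$ and $(x_2,y_2)=(1,1)$, \eqref{eq:theta} becomes
\[
(w_1,b_1)=\lambda_2a_{12}(1,1)+\lambda_1a_{11}(1,-1),\qquad (w_2,b_2)=\lambda_1a_{21}(-1,1)-\lambda_2a_{22}(1,1).
\]

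\textbf{Step 2: eliminate the cross terms.} From the first identity, $-w_1+b_1=-2\lambda_1a_{11}\le 0$. This means neuron $1$ is never strictly active at $x_1$, so $a_{11}$ can be nonzero only at the kink $-w_1+b_1=0$. In every case this forces $\lambda_1a_{11}=0$. Symmetrically, $w_2+b_2=-2\lambda_2a_{22}\le0$ forces $\lambda_2a_{22}=0$. Hence
\[
(w_1,b_1)=\lambda_2a_{12}(1,1),\qquad (w_2,b_2)=\lambda_1a_{21}(-1,1).
\]

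\textbf{Step 3: use feasibility and complementary slackness.} Substituting into \eqref{eq:correct} gives $\Phi(\btheta;1)=2\lambda_2a_{12}\ge1$ and $\Phi(\btheta;-1)=-2\lambda_1a_{21}\le-1$. Therefore $\lambda_1,\lambda_2>0$, and both neurons are strictly active at the relevant points, so $a_{12}=a_{21}=1$. Since the multipliers are positive, \eqref{eq:zero_lam} forces both constraints to be tight, giving $\lambda_1=\lambda_2=1/2$. Thus $\hat\btheta=(1/2,1/2,-1/2,1/2)=\btheta^\star$, and $\norm{\btheta^\star}=1$.

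\textbf{Step 4: compute the margin.} Directly, $\Phi(\btheta^\star;x)=\relu{(x+1)/2}-\relu{(1-x)/2}$. This equals $x/2$ on $[-1,1]$, is positive for $x>1$, and is negative for $x<-1$. So the decision threshold is unique, $x^\star=0$, and $\gamma(\btheta^\star)=1=\gamma^\star$.

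The main obstacle I anticipate is Step 2: handling the non-smooth kinks rigorously. One must make sure that the subgradient version of \eqref{eq:theta} is the one actually delivered by \thmref{thm:known_KKT} (the Clarke KKT notion of \citet{lyu2019gradient}). One must also check that no degenerate KKT point survives, for instance one with a neuron sitting exactly on a kink at a training point. The sign computation $-w_1+b_1=-2\lambda_1a_{11}$ handles this uniformly, so I expect the argument to go through cleanly once the subgradient convention is fixed.
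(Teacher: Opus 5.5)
Your argument is correct. Its outer structure matches the paper's: invoke \thmref{thm:known_KKT}, then show the KKT point is unique. The classification step, however, is done differently.

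The paper enumerates all $121$ activation patterns ($11$ per neuron, by the sign of $w_j$ and the breakpoint's position relative to $\pm1$). It discards most of them via \lemref{lem:loss_at_least_half}, since feasibility forces $\Lcal<0.5$, and then rules out the surviving patterns one at a time. Your identities $-w_1+b_1=-2\lambda_1a_{11}$ and $w_2+b_2=-2\lambda_2a_{22}$ do appear in the paper, but only inside individual cases. Your observation that they hold for every activation pattern and every subgradient choice removes the cross terms at once. Feasibility and complementary slackness then fix $\lambda_1=\lambda_2=\tfrac12$ and all remaining subgradient coefficients.

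Your route gains brevity and robustness. Degenerate configurations such as $w_j=0$, or a breakpoint exactly at $\pm1$, need no separate handling. Correctness also does not depend on the enumeration being exhaustive. The paper's route gives an explicit picture of which activation patterns are incompatible with small loss. Its \lemref{lem:loss_at_least_half} is also reused later in the GD analysis (\propref{prop:measure_zero} and the proof of \thmref{thm:no_PAC}).

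There is one small slip in Step 4. On $[-1,1]$ both units are active, so $\Phi(\btheta^\star;x)=\frac{x+1}{2}-\frac{1-x}{2}=x$, not $x/2$. This agrees with your Step 3, where the constraint at $x_2=1$ is tight with value $1$. The slip does not affect $x^\star=0$ or $\gamma(\btheta^\star)=1=\gamma^\star$.
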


    The proof of Theorem~\ref{thm:KKT}, provided in \appref{app:KKT_proof}, relies on an exhaustive case-by-case analysis of the parameter space to identify solutions satisfying the KKT conditions in Equations~(\ref{eq:theta}--\ref{eq:zero_lam}). Our analysis demonstrates that the solution is unique up to a positive linear rescaling, providing a precise characterization of the directional convergence of GF. Furthermore, as illustrated in \figref{fig:limiting_network}, the network converges to a configuration achieving the optimal robustness margin $\gamma^\star = 1$. This convergence behavior suggests that GD might similarly benefit from such an implicit bias, eventually driving the network toward an optimal robust solution. Motivated by this possibility, we analyze the discrete-time dynamics in the following section.

    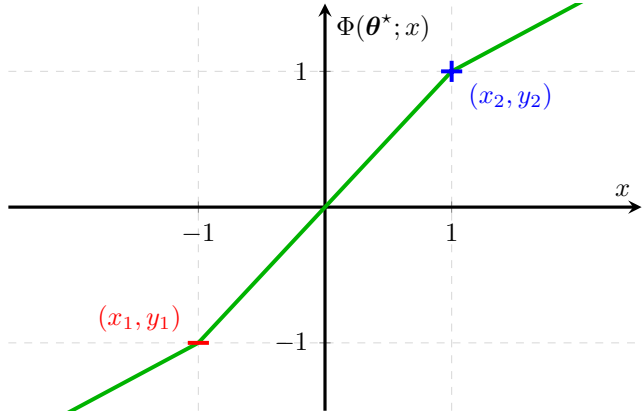
\begin{figure}[ht]
        \centering
        \begin{tikzpicture}
            \begin{axis}[
                width=10cm,
                height=7cm,
                axis lines=middle,
                xlabel={$x$},
                ylabel={$\Phi(\btheta^\star; x)$},
                xmin=-2.5, xmax=2.5,
                ymin=-1.5, ymax=1.5,
                xtick={-1, 1},
                ytick={-1, 1},
                grid=both,
                grid style={dashed, gray!30},
                very thick,
                tick label style={font=\small},
                label style={font=\small}
            ]
                \addplot[
                    domain=-2.5:2.5, 
                    samples=200, 
                    green!70!black, 
                    line width=1.5pt
                ] {max(0, 0.5*x + 0.5) - max(0, -0.5*x + 0.5)};
                
                \addplot[
                    only marks, 
                    mark=+, 
                    blue, 
                    mark size=4pt, 
                    line width=1.5pt
                ] coordinates {(1,1)}
                node[anchor=north west, xshift=2pt, font=\small] {$(x_2, y_2)$};
                
                \addplot[
                    only marks, 
                    mark=-, 
                    red, 
                    mark size=4pt, 
                    line width=1.5pt
                ] coordinates {(-1,-1)}
                node[anchor=south east, xshift=-2pt, font=\small] {$(x_1, y_1)$};
                
            \end{axis}
        \end{tikzpicture}
        \caption{The limiting network $\Phi(\btheta^\star; x)$ for $\btheta^\star = (0.5, 0.5, -0.5, 0.5)$. The network achieves an optimal robustness margin $\gamma^\star = 1$, perfectly separating the training samples $(x_1, y_1)$ and $(x_2, y_2)$ indicated by the red and blue markers, respectively.}
        \label{fig:limiting_network}
    \end{figure}

\section{The Optimization Dynamics of GD}\label{sec:GD_dynamics}

    Having characterized the implicit bias of GF, we now analyze the robustness properties of its discrete-time counterpart, GD. To do so, we must first establish the underlying optimization dynamics. Throughout this section, we assume that $\Lcal(\btheta^{(0)}) < 0.5$. In our setting, this condition aligns our analysis with the assumptions of Theorems~\ref{thm:known_KKT} and \ref{thm:KKT}, and implies that all data instances are correctly classified, which is a necessary intermediate state for reaching vanishing training error. Given that the primary motivation for studying implicit bias is to understand the beneficial properties emerging as the loss approaches zero, it is natural to focus our analysis on the regime following perfect classification. By investigating the dynamics in this stage, we can precisely quantify how the robustness guarantees discussed in the introduction are (or are not) efficiently realized by GD.

\subsection{Neuron specialization}

    We begin by analyzing the activation patterns once the training loss in \eqref{eq:obj} drops below $0.5$. this constraint ensures that each neuron classifies its corresponding data instance correctly: specifically, the instance whose label matches the sign of the neuron's contribution to the network output. This significantly restricts the set of possible activation patterns. The subsequent dynamics are governed by the fact that GD updates drive each neuron to ``specialize" on its corresponding data instance. As optimization progresses, each neuron eventually becomes inactive for the oppositely labeled instance. These dynamics are visualized in \figref{fig:dynamics_flowchart}.

    \begin{figure}[ht]
        \centering
        \begin{tikzpicture}[
            state/.style={rectangle, draw, rounded corners, inner sep=6pt, text width=0.27\columnwidth, align=center, font=\small, minimum height=2cm},
            arrow/.style={-stealth, line width=2.5pt},
            skiparrow/.style={-stealth, line width=2.5pt, dashed}
        ]
    
        \node[state] (initial) at (0,0) {
            \textbf{Linear setting} \\  
            \vspace{2pt}\rule{0.8\linewidth}{0.4pt}\vspace{2pt} \\ 
            Both neurons active\\ on both instances.
        };
    
        \node[state] (transition) at (5.8,0) {
            \textbf{One neuron specializes} \\
            \vspace{2pt}\rule{0.8\linewidth}{0.4pt}\vspace{2pt} \\
            One neuron active on both,\\ the other on only one.
        };
    
        \node[state] (final) at (11.6,0) {
            \textbf{Both neurons specialize} \\
            \vspace{2pt}\rule{0.8\linewidth}{0.4pt}\vspace{2pt} \\
            One neuron active on $x_1$,\\ the other active on $x_2$.
        };
    
        \draw[arrow] (initial) -- (transition);
        \draw[arrow] (transition) -- (final);
        
        \draw[arrow] (initial.north) to[bend left=24] node[midway, above, align=center, font=\scriptsize] {} (final.north);
    
        \end{tikzpicture}
        \caption{Simplified optimization dynamics after the loss decreases below $0.5$. 
        The training trajectory transitions between different states as depicted by the figure. The intermediate state can bifurcate depending on the initial orientations of the neurons. Moreover, in the final state, a degenerate interval may arise in which no neuron is active; this case requires a separate technical analysis to establish that after a finite number of iterations, the degenerate interval vanishes.}
        \label{fig:dynamics_flowchart}
    \end{figure}
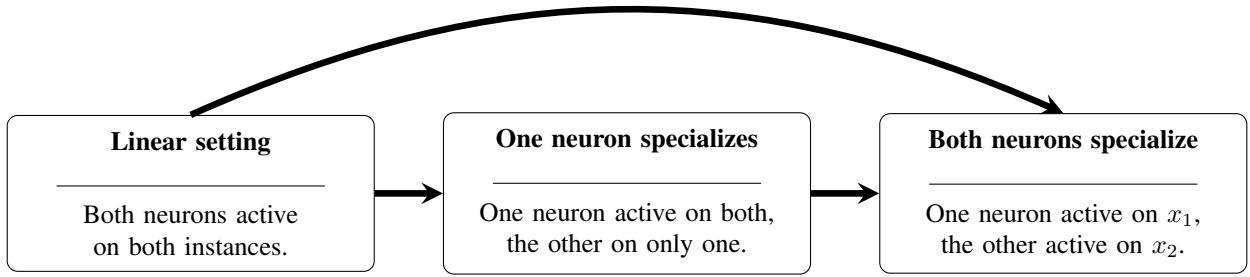

\subsection{Optimization dynamics at specialization}

    Once each neuron specializes on a distinct data instance, the optimization dynamics become more analytically tractable. Specifically, we prove that the activation pattern attained at this stage is invariant under GD updates for any sufficiently small step size, and the dynamics are governed by the following explicit update rules:
    \begin{align}
        w_1^{(t+1)} &= w_1^{(t)} + \frac{1}{2}\eta \exp\p{-w_1^{(t)} - b_1^{(t)}}, \label{eq:w_1_rule_body}\\
        b_1^{(t+1)} &= b_1^{(t)} + \frac{1}{2}\eta \exp\p{-w_1^{(t)} - b_1^{(t)}}, \\
        w_2^{(t+1)} &= w_2^{(t)} - \frac{1}{2}\eta \exp\p{w_2^{(t)} - b_2^{(t)}}, \\
        b_2^{(t+1)} &= b_2^{(t)} + \frac{1}{2}\eta \exp\p{w_2^{(t)} - b_2^{(t)}}. \label{eq:b_2_rule_body}
    \end{align}
    These rules yield several key insights. First, the weights and biases are monotone, with their signs asymptotically aligning with the limiting direction of $\btheta^\star$, defined in \eqref{eq:KKT_implciit_bias}. Second, the activation threshold (or breakpoint) of each neuron is pushed toward the instance on which it has become inactive. Third, each neuron's weight and bias move in tandem, whereas the two neurons can update at completely different magnitudes. As we explore in the next section, the imbalanced initialization of these parameters is the fundamental driver of the slow convergence to an optimal robustness margin, as it determines the specific limiting behavior and the resulting discrepancy in the update magnitudes.

\subsection{Reaching a GD update equilibrium}

    Through a careful analysis of the GD dynamics, we prove that the system eventually converges to an equilibrium state. More concretely, we establish the following limiting behavior:
    \begin{align}
        &\lim _{t \to \infty} b_2^{(t)} - b_1^{(t)} = \lim _{t \to \infty} w_1^{(t)} + w_2^{(t)} = \frac{w_1^{(0)} - b_1^{(0)} + w_2^{(0)} + b_2^{(0)}}{2}.\label{eq:bias_limit_body}
    \end{align}
    This implies the system reaches an equilibrium where $b_2^{(t)} - b_1^{(t)} \approx w_1^{(t)} + w_2^{(t)}$. Specifically, this entails $-w_1^{(t)} - b_1^{(t)} \approx w_2^{(t)} - b_2^{(t)}$, ensuring that the gradient updates in Equations~(\ref{eq:w_1_rule_body}--\ref{eq:b_2_rule_body}) become balanced across both neurons.
    
    Beyond synchronizing the neurons' updates, this result allows us to show that GD shares the same implicit bias as GF in this setting, converging exclusively to robust solutions.\footnote{This follows from showing that when the update rule is approximately identical for all parameters, the normalized parameter vector $\btheta^{(t)} / \norm{\btheta^{(t)}}$ converges to $\btheta^\star$, as defined in \eqref{eq:KKT_implciit_bias}.} Crucially, this equilibrium provides the tight control over the parameters necessary to characterize the model's robustness margin and its rate of convergence.
    
\section{Exponentially Slow Convergence to Robust Solutions}\label{sec:slow_convergence}

    In this section, we leverage the analytical machinery developed in \secref{sec:GD_dynamics} to derive the convergence rate of GD toward a robust solution.
    
    \subsection{Computational Intractability of Margin Minimization with GD}\label{subsec:almost_surely_slow}
    
    We first demonstrate that once both neurons specialize and their activation thresholds are sufficiently close to the instances for which they are inactive, there exists a unique intersection point between the network output and the $x$-axis. This point, denoted by $x^\star(t)$ at iteration $t$, determines the robustness margin of our model and is defined by:
    \begin{equation}\label{eq:x_star_body}
        x^\star(t) = \frac{b_2^{(t)} - b_1^{(t)}}{w_1^{(t)} - w_2^{(t)}}.
    \end{equation}

    According to \eqref{eq:bias_limit_body}, the numerator in our expression for $x^\star(t)$ converges to an initialization-dependent constant. Given that neuron specialization implies $w_2^{(t)} < 0 < w_1^{(t)}$, the denominator remains strictly positive. By further characterizing the growth rate of the weights, we demonstrate that their magnitudes are $\Theta(\ln(t))$. Consequently, unless $w_1^{(0)} - b_1^{(0)} + w_2^{(0)} + b_2^{(0)} = 0$ (a condition that occurs only on an initialization set of Lebesgue measure zero), the convergence rate of $x^\star(t)$ to the origin is $\Omega(1/\ln(t))$. More formally, we establish the following main result.

    \begin{theorem}\label{thm:no_PAC}
        Consider the optimization of the objective in \eqref{eq:obj} using GD with a fixed step size $\eta \in (0,0.5]$. Then, for almost all initializations $\btheta^{(0)}$ satisfying $\Lcal(\btheta^{(0)}) < 0.5$, we have.
        \[
            \gamma^*-\gamma\p{\btheta^{(t)}} = \Theta\p{\frac{1}{\ln(t)}}.
        \]
    \end{theorem}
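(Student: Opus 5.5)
The plan is to reduce the statement to the specialized regime of \secref{sec:GD_dynamics} and then read off the rate from the explicit formula \eqref{eq:x_star_body}. First, I take as given the neuron-specialization analysis: for $\eta\in(0,0.5]$ and $\Lcal(\btheta^{(0)})<0.5$, after finitely many iterations $T_0$ the activation pattern is the fully specialized one. Neuron~1 is active only on $x_2=1$ and neuron~2 only on $x_1=-1$. Any degenerate inactive interval vanishes after finitely many steps, and the pattern is invariant thereafter. From $T_0$ on the updates are exactly Equations~(\ref{eq:w_1_rule_body}--\ref{eq:b_2_rule_body}). Hence there is a unique zero crossing $x^\star(t)=(b_2^{(t)}-b_1^{(t)})/(w_1^{(t)}-w_2^{(t)})$, and $\gamma(\btheta^{(t)})=1-|x^\star(t)|$. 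So $\gamma^\star-\gamma(\btheta^{(t)})=|x^\star(t)|$, and it suffices to show $|x^\star(t)|=\Theta(1/\ln t)$.

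\emph{Numerator.} Put $u_t\coloneqq -w_1^{(t)}-b_1^{(t)}$ and $v_t\coloneqq w_2^{(t)}-b_2^{(t)}$. Then $u_{t+1}=u_t-\eta e^{u_t}$ and $v_{t+1}=v_t-\eta e^{v_t}$, while $w_1^{(t)}-b_1^{(t)}$ and $w_2^{(t)}+b_2^{(t)}$ are conserved. By \eqref{eq:bias_limit_body}, $b_2^{(t)}-b_1^{(t)}\to C\coloneqq\tfrac12\bigl(w_1^{(T_0)}-b_1^{(T_0)}+w_2^{(T_0)}+b_2^{(T_0)}\bigr)$. Here I would re-express this limit via the conserved quantities at time $T_0$, rather than time $0$ as written in \eqref{eq:bias_limit_body}.

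\emph{Denominator.} Each of $u_t$ and $v_t$ follows the scalar recursion $z\mapsto z-\eta e^{z}$. A standard comparison with the ODE $\dot z=-\eta e^{z}$, using $e^{-z_{t+1}}=e^{-z_t}e^{\eta e^{z_t}}$ and $1+s\le e^s\le 1+s+s^2$ for small $s$, gives $e^{-z_t}=\eta t+O(\ln t)$. Therefore $z_t=-\ln t+O(1)$. Writing $w_1-w_2=\tfrac12[(w_1-b_1)-u_t]+\tfrac12[(w_2+b_2)\cdot(-1)+\dots]$, i.e.\ expressing $w_1$ and $w_2$ through the conserved quantities and $u_t,v_t$, yields $w_1^{(t)}-w_2^{(t)}=\ln t+O(1)$, which is positive for large $t$. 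Combining with the numerator, $x^\star(t)=(C+o(1))/(\ln t+O(1))$, so $|x^\star(t)|=\Theta(1/\ln t)$ whenever $C\neq0$.

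\emph{Genericity, the main obstacle.} It remains to show that $C=0$ only for a Lebesgue-null set of initializations. The difficulty is that $C$ is defined at the data-dependent time $T_0$. I would partition the initializations by the value of $T_0$ and by the pattern sequence before it. On each piece, $\btheta^{(T_0)}=G(\btheta^{(0)})$, where $G$ is a finite composition of GD maps on fixed activation regions. Each such map is real-analytic, so $C\circ G$ is analytic on the piece. Then $\{C=0\}$ is null unless $C\circ G$ vanishes identically on an open set. I would rule that out by showing that the GD map has invertible Jacobian for $\eta\le 0.5$. For the specialized map, $\det=1-\eta e^{u}>0$ because $\eta e^{u}<1$ when loss $<0.5$; the other patterns need an analogous check. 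Invertibility implies that $G$ maps null sets' complements appropriately, i.e.\ preimages of null sets are null. Since $\{C=0\}$ is a hyperplane at time $T_0$, countably many such preimages are null. The hardest part is justifying this local invertibility across the earlier activation patterns and the degenerate-interval phase.
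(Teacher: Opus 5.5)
Your proposal is correct and follows essentially the same route as the paper. You reduce via the specialization analysis to the regime of \asmref{asm:base_case}, take the numerator limit from the conserved quantity $w_1-b_1+w_2+b_2$ at entry into that regime and the denominator from the $\Theta(\ln t)$ growth of $w_1-w_2$, and get genericity from a non-singular Jacobian of the GD map plus a countable union over iterations, which is exactly \propref{prop:measure_zero}.

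The step you flag as hardest is settled uniformly in the paper. In every activation pattern, $\Phi(\btheta;x_i)$ is linear in $\btheta$ with coefficients in $\{-1,0,1\}$, so every entry of $\nabla^2\Lcal$ is bounded in absolute value by $\Lcal(\btheta)<1/2$. Hence $I-\eta\nabla^2\Lcal$ is strictly diagonally dominant for $\eta\le 1/2$. The same pullback argument also discards initializations whose iterates land exactly on activation boundaries. In your notation, the determinant of the specialized map is $(1-\eta e^{u})(1-\eta e^{v})$, not $1-\eta e^{u}$, though it is still positive.
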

    The proof of the above theorem, deferred to \appref{app:no_PAC_proof}, leverages the machinery developed in the previous section to provide a comprehensive characterization of the optimization dynamics within the small-loss regime. Specifically, we establish the eventual convergence of GD while demonstrating that the asymptotic limit governing the decision boundary remains permanently biased away from the optimal robustness margin.

    The above result establishes that even under extremely mild assumptions, GD on homogeneous neural networks cannot efficiently optimize the robustness margin. Crucially, this logarithmic rate bottleneck is inherent to the optimization dynamics: the probability of encountering such slow convergence is not merely positive, but equals $1$ under any continuous distribution over the initial parameters. This implies that reaching a sufficiently small desired robustness threshold $\varepsilon>0$ requires a number of iterations that scales exponentially with $1/\varepsilon$.

    \subsection{Non-asymptotic Analysis}

    While the preceding theorem establishes a strong negative result, its asymptotic nature leaves open a practical loophole: it could be that achieving a margin arbitrarily close to the optimum requires prohibitive computational resources, but reaching a practical near-optimal margin is relatively easy. To rule out this possibility, we provide a non-asymptotic analysis of the model's robustness margin under conditions commonly satisfied by standard initialization schemes \citep{he2015delving}. Concretely, our analysis reveals that under these assumptions, the convergence toward the equilibrium limit in \eqref{eq:bias_limit_body} is strictly monotone. Because He initialization typically initializes biases to zero, the model trivially begins with an optimal margin. However, any initial imbalance in the weights triggers the dynamics discussed in the previous section, actively driving the margin away from the optimum at a linear rate.
    
    Finally, once the system reaches a step-size equilibrium, the margin begins to gradually improve toward the optimum, but it does so at a prohibitively slow rate.
    
    Formally, we establish the following non-asymptotic lower bound on the robustness margin, demonstrating its prohibitively slow recovery after the initial degradation phase.
    \begin{theorem}\label{thm:non_asymptotic}
        Consider optimizing the objective in \eqref{eq:obj} using GD with step size $\eta\in(0,0.3]$. Suppose that $w_1^{(0)}\in(2.7,3)$, $w_2^{(0)}\in(-0.5,0)$, and that $b_1^{(0)}=b_2^{(0)}=0$. Then, for all $t\ge7/\eta$, we have
        \[
            x^\star(t) \ge \frac{1}{25+5\ln(1+4\eta t)}.
        \]
    \end{theorem}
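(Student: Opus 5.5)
The proof reduces GD to two decoupled scalar recursions and then bounds the numerator and denominator of $x^\star(t)$ separately. First I check that the hypotheses put us in the fully specialized regime from $t=0$. With $b_1^{(0)}=b_2^{(0)}=0$, $w_1^{(0)}>0$ and $w_2^{(0)}<0$:
\begin{itemize}
\item neuron $1$ is active only at $x=1$;
\item neuron $2$ is active only at $x=-1$;
\item $\Lcal(\btheta^{(0)})=\tfrac12 e^{w_2^{(0)}}+\tfrac12 e^{-w_1^{(0)}}<0.5$.
\end{itemize}
By the invariance of this activation pattern established in \secref{sec:GD_dynamics} (valid for small $\eta$, and $\eta\le 0.3$ is covered), the explicit rules \eqref{eq:w_1_rule_body}--\eqref{eq:b_2_rule_body} hold for all $t$.

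Next I change variables to $u_t\coloneqq w_1^{(t)}+b_1^{(t)}$ and $v_t\coloneqq b_2^{(t)}-w_2^{(t)}$. The update rules give:
\begin{itemize}
\item $u_{t+1}=f(u_t)$ and $v_{t+1}=f(v_t)$ with the same map $f(z)=z+\eta e^{-z}$;
\item $w_1^{(t)}-b_1^{(t)}\equiv w_1^{(0)}$ and $w_2^{(t)}+b_2^{(t)}\equiv w_2^{(0)}$ are conserved.
\end{itemize}
Solving for the parameters and substituting into \eqref{eq:x_star_body} yields
\[
x^\star(t)=\frac{(w_1^{(0)}+w_2^{(0)})-D_t}{u_t+v_t+w_1^{(0)}-w_2^{(0)}},\qquad D_t\coloneqq u_t-v_t,
\]
with $u_0=w_1^{(0)}\in(2.7,3)$, $v_0=-w_2^{(0)}\in(0,0.5)$, and hence $D_0=w_1^{(0)}+w_2^{(0)}$. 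So $x^\star(0)=0$, and the whole question becomes how far $D_t$ drops below $D_0$.

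Since $f'(z)=1-\eta e^{-z}>0$ for $z\ge 0$, order is preserved: $0<v_t<u_t$ for all $t$. Moreover
\[
D_{t+1}=D_t-\eta\bigl(e^{-v_t}-e^{-u_t}\bigr)\le D_t,
\]
so $D_t$ is nonincreasing. The numerator $N_t\coloneqq D_0-D_t$ is therefore nondecreasing, and it suffices to show $N_{t_0}\ge\delta$ at $t_0=\lceil 7/\eta\rceil$ for an explicit constant $\delta$.

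To get $\delta$, I use $e^{-v_t}-e^{-u_t}=e^{-v_t}(1-e^{-D_t})$ and split into two cases.
\begin{itemize}
\item If $D_t$ stays at least some $D_{\min}$ (say $\approx 1$) throughout the first $7/\eta$ steps, then each step decreases $D$ by at least $\eta e^{-v_t}(1-e^{-D_{\min}})$.
\item Otherwise $D_t$ has already fallen from $D_0\ge 2.2$ to below $D_{\min}$, which gives $N_t\ge 2.2-D_{\min}$ outright.
\end{itemize}
In the first case I need an upper bound on $v_t$. The upper bound $e^{v_{t+1}}\le e^{v_t}e^{\eta e^{-v_t}}\le e^{v_t}+\eta e^{\eta}$ gives $v_t\le \ln(e^{0.5}+\eta e^{\eta}t)\le\ln(e^{0.5}+1.35\cdot 7)$ for $t\le 7/\eta$. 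Summing the per-step decreases over $7/\eta$ steps then gives a constant total decrease, and I would tune $D_{\min}$ to balance the two cases. I expect $\delta$ around $0.3$--$0.5$, though the exact value may need tightening.

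For the denominator I use the same telescoping bound for $u_t$, which gives $u_t+v_t\le 2\ln(e^{3}+1.35\,\eta t)$, and $w_1^{(0)}-w_2^{(0)}<3.5$. Then $\ln(e^3+1.35\eta t)\le 3+\ln(1+4\eta t)$, so the denominator is at most $9.5+2\ln(1+4\eta t)$. Dividing, $x^\star(t)\ge \delta/(9.5+2\ln(1+4\eta t))$. This implies the stated bound $1/(25+5\ln(1+4\eta t))$ provided $\delta\ge 0.4$ (because $0.4\cdot(25+5\ln(1+4\eta t)) = 10+2\ln(1+4\eta t)$).

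The main obstacle is the explicit lower bound $N_{7/\eta}\ge 0.4$. This needs a careful, fully discrete accounting of how fast the lagging neuron $v$ catches up while $e^{-v_t}$ is still large. It uses $\eta\le 0.3$ to control the discretization, and the specific ranges $w_1^{(0)}\in(2.7,3)$ and $w_2^{(0)}\in(-0.5,0)$ to guarantee the initial gap is large. If the direct per-step bound is too lossy, my fallback is to compare $v_t$ with the continuous solution $e^{v(s)}=e^{v_0}+s$. That comparison would give $v_{7/\eta}$ and $u_{7/\eta}$ within explicit error, and hence $D_{7/\eta}\approx\ln\bigl((e^{u_0}+7)/(e^{v_0}+7)\bigr)$, which would then be compared to $D_0$.
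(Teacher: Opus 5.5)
Your route is sound and differs from the paper's. The step you flagged as the main obstacle closes with room to spare, and your continuous-comparison fallback is the right idea; it is already contained in bounds you wrote down.

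\textbf{Closing the open step.} Since $v_{s+1}-v_s=\eta e^{-v_s}$ and $u_{s+1}-u_s=\eta e^{-u_s}$, telescoping gives the exact identity $N_t=(v_t-v_0)-(u_t-u_0)$, so no case split is needed. The inequalities $e^{z_s}+\eta\le e^{z_{s+1}}\le e^{z_s}+\eta e^{\eta e^{-z_0}}$ are \lemref{lem:exp_recursion} with $C_1=\eta$, $C_2=1$. They give
\begin{itemize}
\item $v_t-v_0\ge\ln(1+\eta t e^{-v_0})$,
\item $u_t-u_0\le\ln(1+\eta t\,e^{\eta e^{-u_0}}e^{-u_0})$.
\end{itemize}
Take $t_0=\lceil 7/\eta\rceil$, so $7\le\eta t_0\le 7.3$, and use $v_0<1/2$, $u_0>2.7$ and $e^{\eta e^{-u_0}}\le 1.03$. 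Then
\[
N_{t_0}\;\ge\;\ln\bigl(1+7e^{-1/2}\bigr)-\ln\bigl(1+7.3\cdot 1.03\cdot e^{-2.7}\bigr)\;\ge\;1.65-0.41\;>\;1.2,
\]
which is three times the $0.4$ you need.

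Your case split also works, but only with care. With the per-step bound $e^{-v_t}\ge 1/11.1$, the choice $D_{\min}=1$ gives $0.63(1-e^{-1})\approx 0.399$, a hair short. The choice $D_{\min}=1.5$ gives $\min\{0.49,\,0.7\}\ge 0.4$.

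Invariance of the activation pattern is also immediate in your coordinates. The conserved quantities $-w_1^{(t)}+b_1^{(t)}=-w_1^{(0)}<0$ and $w_2^{(t)}+b_2^{(t)}=w_2^{(0)}<0$ are exactly the pre-activations at the inactive points, while $u_t,v_t>0$ increase. So you need not appeal to \secref{sec:GD_dynamics}.

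\textbf{Comparison with the paper.} The paper works in the original coordinates, in three pieces:
\begin{itemize}
\item \propref{prop:non_asymptotic} lower-bounds $b_2^{(t)}-b_1^{(t)}$ by an early ``escape'' comparison. There $b_2$ grows at rate at least $\tfrac12\eta e^{(w_2^{(0)}-w_1^{(0)})/2}$ and $b_1$ at most $\tfrac12\eta e^{-w_1^{(0)}}$, up to an initialization-dependent time $t_0$.
\item It extends this bound to later times via the Lipschitz-contraction argument of \lemref{lem:monotone_limit}, which needs $\eta\le 0.5$.
\item It bounds the denominator separately by \lemref{lem:weight_dif_ubound}.
\end{itemize}

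Your reduction to two copies of the single map $z\mapsto z+\eta e^{-z}$ replaces that contraction lemma by plain order preservation. It gets numerator and denominator from one two-sided bound on $e^{z_t}$.

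It also yields the stated constants cleanly. The paper's ingredients at their stated constants are a numerator at least $(w_1^{(0)}+w_2^{(0)})/11-1/10>0.1$ and a denominator at most $5+\ln(1+4\eta t)$. These combine only to $1/(50+10\ln(1+4\eta t))$. In addition, its $t_0$ can slightly exceed $7/\eta$, up to about $7.19/\eta$.

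The paper's route gives an explicit picture of the early-phase ``escape'' of $b_2$ from $b_1$. Yours is shorter and sharper, and extends just as easily to other initializations via the same bound $N_t\ge\ln\frac{1+\eta t e^{-v_0}}{1+\eta t e^{\eta e^{-u_0}}e^{-u_0}}$.
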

    The proof of the above theorem, deferred to \appref{app:proof_non_asymptotic}, relies on a careful analysis of the decision boundary dynamics under the specified initialization conditions. Notably, the primary technical challenge arises because merely bounding the individual growth rates of the weights and biases is insufficient for establishing tight control over the decision boundary. Instead, we explicitly track the trajectory of the parameters to exploit the zero-initialization of the biases alongside the initial discrepancy between the weights. This allows us to prove that during the early iterations of the algorithm, the second bias term ``escapes'' the first, permanently establishing a lower bound on their difference that drives the $\Omega(1/\ln(t))$ bottleneck.
    
    As a direct consequence of Theorem \ref{thm:non_asymptotic}, we establish the following corollary for standard initialization schemes.
    \begin{corollary}
        Consider optimizing the objective in \eqref{eq:obj} using GD with step size $\eta\in(0,0.3]$, initializing the network parameters via standard He initialization ($w_1^{(0)},w_2^{(0)}\sim\Ncal(0,2)$ and $b_1^{(0)}=b_2^{(0)}=0$). Then, with constant probability over the random initialization, for all iterations $t\ge7/\eta$, the decision boundary satisfies
        \[
            x^\star(t) \ge \frac{1}{25+5\ln(1+4\eta t)}.
        \]
    \end{corollary}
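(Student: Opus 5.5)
The corollary follows from Theorem~\ref{thm:non_asymptotic} once we show that its hypotheses hold with a probability bounded below by an absolute constant under He initialization. The bias hypothesis $b_1^{(0)}=b_2^{(0)}=0$ holds deterministically. Everything therefore reduces to lower bounding the probability of the event
\[
E\coloneqq\set{w_1^{(0)}\in(2.7,3)}\cap\set{w_2^{(0)}\in(-0.5,0)} .
\]
Since $w_1^{(0)}$ and $w_2^{(0)}$ are independent $\Ncal(0,2)$ variables, $\Pr[E]$ factors as a product of two Gaussian interval probabilities. Both intervals have positive length, so each factor is a fixed positive number and $\Pr[E]>0$ is an absolute constant.

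For concreteness, write $Z\sim\Ncal(0,1)$ and standardize by $\sqrt2$. The second factor is $\Pr[-0.5/\sqrt2<Z<0]\approx 0.138$. The first is $\Pr[2.7/\sqrt2<Z<3/\sqrt2]\approx\Pr[1.909<Z<2.121]\approx 0.011$. Their product is about $1.5\times10^{-3}$. This is small but independent of $\eta$ and $t$, and that independence is exactly what ``constant probability'' requires.

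On $E$, the initial parameters satisfy every assumption of Theorem~\ref{thm:non_asymptotic}, and the corollary imposes the same step-size restriction $\eta\in(0,0.3]$. The theorem therefore gives $x^\star(t)\ge 1/(25+5\ln(1+4\eta t))$ for all $t\ge 7/\eta$, which is the claimed bound.

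Two side issues need care.

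\begin{itemize}
\item \textbf{Symmetric configuration.} One could enlarge the event by adding the mirror configuration in which the roles of the neurons are swapped. I would not do this. The bound is stated for $x^\star(t)$ with a positive sign, and under the mirror configuration the sign of $x^\star(t)$ would flip, so the mirror event does not yield the stated inequality. Restricting to $E$ alone keeps the argument exact.
\item \textbf{Implicit hypotheses of the theorem.} I should check that Theorem~\ref{thm:non_asymptotic} does not require anything beyond what its statement lists. In particular, one could worry about the loss condition $\Lcal(\btheta^{(0)})<0.5$. On $E$ this condition holds automatically. The first sample contributes $\tfrac12\exp\p{\Phi(\btheta^{(0)};-1)}$ with $\Phi(\btheta^{(0)};-1)=0-\relu{-w_2^{(0)}}=w_2^{(0)}<0$, and the second contributes $\tfrac12\exp\p{-w_1^{(0)}}$, which is tiny. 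Their sum is therefore below $0.5$.
\end{itemize}

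With these points checked, there is no genuine obstacle. The only real work is the Gaussian probability computation, which is routine.
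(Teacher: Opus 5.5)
Your proof is essentially the paper's. The paper states the corollary as a direct consequence of \thmref{thm:non_asymptotic}, which amounts exactly to observing that the theorem's hypotheses hold on your event $E$, and $E$ has an initialization-only constant probability. Your estimate $\Pr[E]\approx1.5\times10^{-3}$ is right. You are also right not to add the mirrored event, since under it $x^\star(t)$ is negative.

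One claim in your second side remark is false, however. On $E$ you have $\Lcal(\btheta^{(0)})=\tfrac12\exp\p{w_2^{(0)}}+\tfrac12\exp\p{-w_1^{(0)}}$. A term just below $\tfrac12$ plus a positive term need not stay below $\tfrac12$. In fact the loss is below $0.5$ only when $w_2^{(0)}<\ln\p{1-\exp\p{-w_1^{(0)}}}$, a threshold between roughly $-0.07$ and $-0.05$ on $E$. For instance, $w_1^{(0)}=2.8$ and $w_2^{(0)}=-0.01$ give $\Lcal(\btheta^{(0)})\approx0.525$.

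This does not damage your argument, because \thmref{thm:non_asymptotic} neither assumes nor needs $\Lcal(\btheta^{(0)})<0.5$. With zero biases and $w_2^{(0)}<0<w_1^{(0)}$, each neuron is active on exactly one training point. Hence the specialized updates in Equations~(\ref{eq:w_1_simple_dynamics}--\ref{eq:b_2_simple_dynamics}) and the invariance of \propref{prop:base_case_invariance} hold for every $\eta>0$ irrespective of the loss. You should either drop that check or replace it with this observation. If you wanted the loss condition anyway, shrinking $E$ to, e.g., $w_2^{(0)}\in(-0.5,-0.1)$ would still leave a constant-probability event.
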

    Consequently, our model cannot efficiently return a robust predictor with arbitrarily high probability under standard gradient descent. The initialization exhibits a constant probability of trapping the dynamics in a logarithmically slow regime; thus, unlike our previous asymptotic analysis in \subsecref{subsec:almost_surely_slow}, we establish that the intersection point is strictly bounded away from the optimum for all moderately large $t$.

\section{Experiments: The Convergence Rate to a Stable System}\label{sec:experiments}

In this section, we complement our theoretical findings with empirical studies. We investigate the number of iterations required to reach a near-equilibrium state, and we demonstrate that the convergence rates obtained in practice are even more pessimistic than what our theory predicts.

Since our analysis is asymptotic in nature, it does not explicitly capture the transient rate at which the system converges to the equilibrium governed by the updates in Equations~(\ref{eq:w_1_rule_body}--\ref{eq:b_2_rule_body}). Moreover, while \eqref{eq:bias_limit_body} dictates the limiting behavior of the numerator in the decision boundary expression \eqref{eq:x_star_body}, the speed of this convergence is not obvious. In particular, it is possible that while the denominator grows slowly, it could still be much larger than the numerator during early training, potentially keeping the distance from the optimal margin small for a significant duration.

We now turn to study this rate empirically to demonstrate that the convergence to equilibrium is sufficiently rapid, ensuring that the overall margin convergence is primarily governed by the slow growth rate of the denominator. To this end, we conducted $10,000$ independent trials of GD with a fixed step size $\eta=0.05$ to optimize the objective in \eqref{eq:obj}. We employed standard He initialization \citep{he2015delving}, setting $b_1^{(0)}=b_2^{(0)}=0$ and drawing weights from a normal distribution, $w_1^{(0)}, w_2^{(0)} \sim \mathcal{N}(0,2)$.

Optimization continued unless a trial failed to achieve a training error below $0.5$ within $10,000$ iterations. For successful trials, we terminated the process once the equilibrium condition was met: $|w_1^{(t)} + w_2^{(t)} + b_1^{(t)} - b_2^{(t)}| \le 0.001$. This simulation reveals that a necessary condition for correctly classifying both data instances under this initialization is that the weights are initialized with appropriate signs: $w_2^{(0)} < 0 < w_1^{(0)}$. Consequently, only about a quarter of the trials (specifically precisely those where the initialization fell into the correct quadrant) were able to train beyond the initial $10,000$ iterations.

In all successful trials, GD eventually reached the equilibrium point where $|w_1^{(t)} + w_2^{(t)} + b_1^{(t)} - b_2^{(t)}| \le 0.001$, confirming that the system always stabilizes. While most runs converged to this threshold within a few hundreds of thousands of iterations, exceptional cases existed where a significant initial discrepancy in weight magnitudes led to a much slower convergence rate. However, because these are precisely the cases where the bias difference is largest, the convergence to an optimal margin remains significantly slower than in more balanced, stable cases where the difference is more modest.

Under our initialization assumptions, \eqref{eq:bias_limit_body} implies
\[
    \lim_{t\to\infty} b_2^{(t)} - b_1^{(t)} = \frac{1}{2}w_1^{(0)} + \frac{1}{2}w_2^{(0)}.
\]
Conditioning on each weight having the correct sign for specialization at initialization ($w_1^{(0)} > 0$ and $w_2^{(0)} < 0$), we let $W_1 = w_1^{(0)}$ and $W_2 = -w_2^{(0)}$. In this regime, $W_1, W_2$ are independent half-normally distributed random variables with parameter $\sigma^2=2$. The asymptotic numerator in \eqref{eq:x_star_body} is thus distributed as $B \sim \frac{1}{2}|W_1 - W_2|$. It is not too difficult to compute the PDF and CDF of the random variable $B$, that are given by
\begin{equation}\label{eq:f_B}
    f_B(x) \coloneqq 2\sqrt{\frac{2}{\pi}}\exp\p{-\frac{x^2}{2}}\erfc\p{\frac{x}{\sqrt{2}}}
\end{equation}
and
\[
    F_B(x) \coloneqq 1-\erfc\p{\frac{x}{\sqrt{2}}}^2,
\]
respectively, where $\text{erfc}(x) \coloneqq \frac{2}{\sqrt{\pi}}\int_x^\infty e^{-t^2}\,dt$ denotes the complementary error function.

A visualization of this PDF against the empirical values from our simulation is provided in \figref{fig:histogram}. According to the derived CDF, we have $\Pr[B > 1] \approx 0.10069$, indicating that more than 10\% of all successful GD runs converge to an equilibrium where the numerator is at least $1$, which is also consistent with our empirical findings. In such instances, achieving a margin that is suboptimal by only $\varepsilon=0.01$ would require a denominator magnitude of approximately $100$, which, given the $\Theta(\ln(t))$ growth rate, necessitates a prohibitively large number of iterations.

\begin{figure}[ht]
    \centering
    \begin{tikzpicture}
        
        \begin{axis}[
            width=10cm,
            height=7cm,
            ymin=0, ymax=400,
            xmin=0, xmax=3.0,
            xlabel={},
            axis lines=left,
            grid=both,
            grid style={dashed, gray!30},
            legend pos=north east,
            bar width=0.1,
            ybar interval,
            xtick={0,0.5,1,1.5,2,2.5,3}
        ]
            \addplot[fill=blue!30, draw=blue!70] coordinates {
                (0.0, 352) (0.1, 344) (0.2, 312) (0.3, 235)
                (0.4, 205) (0.5, 201) (0.6, 145) (0.7, 132)
                (0.8, 114) (0.9, 89) (1.0, 61) (1.1, 41)
                (1.2, 40) (1.3, 25) (1.4, 27) (1.5, 15)
                (1.6, 8) (1.7, 11) (1.8, 3) (1.9, 1)
                (2.0, 3) (2.1, 2) (2.2, 0) (2.3, 1)
                (2.4, 0) (2.5, 0) (2.6, 0) (2.7, 0)
                (2.8, 1) (2.9, 0) (3.0, 0)
            };
            \addlegendentry{$\min\set{|b_2^{(t)} - b_1^{(t)}|,\tfrac12|w_1^{(0)}+w_2^{(0)}|}$}

            \addplot[
                red,
                sharp plot,
                line width=1.5pt,
                domain=0:3,
                samples=200
            ] {236.8*2*sqrt(2/pi) * exp(-x^2/2) * erfcapprox(x/sqrt(2))};
            \addlegendentry{$f_B(x)$}
            
        \end{axis}
    \end{tikzpicture}
    \caption{Empirical distribution of the lower-bound guarantees obtained in the experiment. Among 10{,}000 GD initializations, 2{,}368 converged to a small training loss. Since our simulation conditions all satisfy the monotonicity criterion we establish in \lemref{lem:monotone_limit}, this ensures monotone convergence of the absolute value of the numerator in \eqref{eq:x_star_body} to $\tfrac12|w_1^{(0)}+w_2^{(0)}|$. This implies the lower bound $\displaystyle \min\!\left\{|b_2^{(t)}-b_1^{(t)}|,\;\tfrac12 |w_1^{(0)}+w_2^{(0)}|\right\}$. The red curve is the theoretical density (scaled to match histogram counts) of $\tfrac12|w_1^{(0)}+w_2^{(0)}|$, given in \eqref{eq:f_B}. The close match suggests that this is indeed the distribution of the numerator, not just asymptotically, but one that is also attained in practice.}
    \label{fig:histogram}
\end{figure}
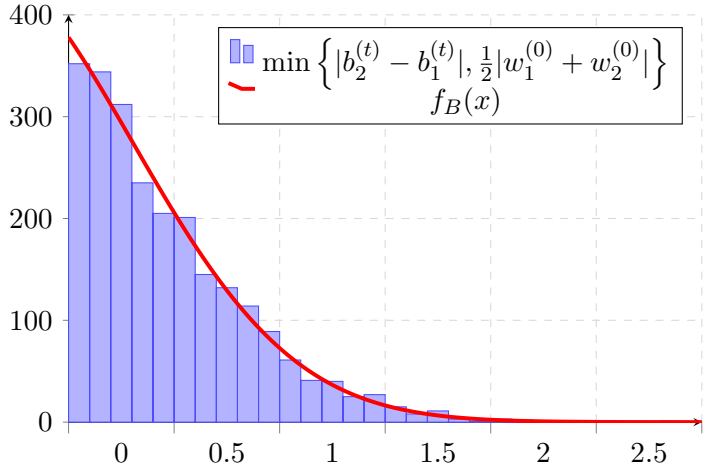

\begin{acknowledgments} 
    Itay Safran is supported by Israel Science Foundation Grant No.\ 1753/25.
\end{acknowledgments}

\newpage
\bibliography{citations}

@article{melamed2023adversarial,
  title={Adversarial examples exist in two-layer relu networks for low dimensional linear subspaces},
  author={Melamed, Odelia and Yehudai, Gilad and Vardi, Gal},
  journal={Advances in Neural Information Processing Systems},
  volume={36},
  pages={5028--5049},
  year={2023}
}

@article{frei2023double,
  title={The double-edged sword of implicit bias: Generalization vs. robustness in relu networks},
  author={Frei, Spencer and Vardi, Gal and Bartlett, Peter and Srebro, Nati},
  journal={Advances in neural information processing systems},
  volume={36},
  pages={8885--8897},
  year={2023}
}

@inproceedings{min2025gradient,
  title={Gradient Flow Provably Learns Robust Classifiers for Orthonormal GMMs},
  author={Min, Hancheng and Vidal, Ren{\'e}},
  booktitle={Forty-second International Conference on Machine Learning},
  year={2025}
}

@inproceedings{ji2021fast,
  title={Fast margin maximization via dual acceleration},
  author={Ji, Ziwei and Srebro, Nathan and Telgarsky, Matus},
  booktitle={International Conference on Machine Learning},
  pages={4860--4869},
  year={2021},
  organization={PMLR}
}

@inproceedings{nacson2019stochastic,
  title={Stochastic gradient descent on separable data: Exact convergence with a fixed learning rate},
  author={Nacson, Mor Shpigel and Srebro, Nathan and Soudry, Daniel},
  booktitle={The 22nd International Conference on Artificial Intelligence and Statistics},
  pages={3051--3059},
  year={2019},
  organization={PMLR}
}

@inproceedings{ji2021characterizing,
  title={Characterizing the implicit bias via a primal-dual analysis},
  author={Ji, Ziwei and Telgarsky, Matus},
  booktitle={Algorithmic Learning Theory},
  pages={772--804},
  year={2021},
  organization={PMLR}
}

@article{ji2018risk,
  title={Risk and parameter convergence of logistic regression},
  author={Ji, Ziwei and Telgarsky, Matus},
  journal={arXiv preprint arXiv:1803.07300},
  year={2018}
}

@inproceedings{nacson2019convergence,
  title={Convergence of gradient descent on separable data},
  author={Nacson, Mor Shpigel and Lee, Jason and Gunasekar, Suriya and Savarese, Pedro Henrique Pamplona and Srebro, Nathan and Soudry, Daniel},
  booktitle={The 22nd International Conference on Artificial Intelligence and Statistics},
  pages={3420--3428},
  year={2019},
  organization={PMLR}
}

@article{soudry2018implicit,
  title={The implicit bias of gradient descent on separable data},
  author={Soudry, Daniel and Hoffer, Elad and Nacson, Mor Shpigel and Gunasekar, Suriya and Srebro, Nathan},
  journal={Journal of Machine Learning Research},
  volume={19},
  number={70},
  pages={1--57},
  year={2018}
}

@inproceedings{wang2024achieving,
  title={Achieving Margin Maximization Exponentially Fast via Progressive Norm Rescaling},
  author={Wang, Mingze and Min, Zeping and Wu, Lei},
  booktitle={International Conference on Machine Learning},
  pages={51124--51160},
  year={2024},
  organization={PMLR}
}

@article{elkabetz2021continuous,
  title={Continuous vs. discrete optimization of deep neural networks},
  author={Elkabetz, Omer and Cohen, Nadav},
  journal={Advances in Neural Information Processing Systems},
  volume={34},
  pages={4947--4960},
  year={2021}
}

@article{mityagin2015zero,
  title={The zero set of a real analytic function},
  author={Mityagin, Boris},
  journal={arXiv preprint arXiv:1512.07276},
  year={2015}
}

@inproceedings{he2015delving,
  title={Delving Deep into Rectifiers: Surpassing Human-Level Performance on ImageNet Classification},
  author={He, Kaiming and Zhang, Xiangyu and Ren, Shaoqing and Sun, Jian},
  booktitle={Proceedings of the IEEE international conference on computer vision (ICCV)},
  pages={1026--1034},
  year={2015}
}

@article{frei2022implicit,
  title={Implicit bias in leaky relu networks trained on high-dimensional data},
  author={Frei, Spencer and Vardi, Gal and Bartlett, Peter L and Srebro, Nathan and Hu, Wei},
  journal={arXiv preprint arXiv:2210.07082},
  year={2022}
}

@article{lyu2021gradient,
  title={Gradient descent on two-layer nets: Margin maximization and simplicity bias},
  author={Lyu, Kaifeng and Li, Zhiyuan and Wang, Runzhe and Arora, Sanjeev},
  journal={Advances in Neural Information Processing Systems},
  volume={34},
  pages={12978--12991},
  year={2021}
}

@inproceedings{sarussi2021towards,
  title={Towards understanding learning in neural networks with linear teachers},
  author={Sarussi, Roei and Brutzkus, Alon and Globerson, Amir},
  booktitle={International Conference on Machine Learning},
  pages={9313--9322},
  year={2021},
  organization={PMLR}
}

@article{ji2020directional,
  title={Directional convergence and alignment in deep learning},
  author={Ji, Ziwei and Telgarsky, Matus},
  journal={Advances in Neural Information Processing Systems},
  volume={33},
  pages={17176--17186},
  year={2020}
}

@article{lyu2019gradient,
  title={Gradient descent maximizes the margin of homogeneous neural networks},
  author={Lyu, Kaifeng and Li, Jian},
  journal={arXiv preprint arXiv:1906.05890},
  year={2019}
}

@inproceedings{biggio2013evasion,
  title={Evasion attacks against machine learning at test time},
  author={Biggio, Battista and Corona, Igino and Maiorca, Davide and Nelson, Blaine and {\v{S}}rndi{\'c}, Nedim and Laskov, Pavel and Giacinto, Giorgio and Roli, Fabio},
  booktitle={Machine Learning and Knowledge Discovery in Databases: European Conference, ECML PKDD 2013, Prague, Czech Republic, September 23-27, 2013, Proceedings, Part III 13},
  pages={387--402},
  year={2013},
  organization={Springer}
}

@article{szegedy2013intriguing,
  title={Intriguing properties of neural networks},
  author={Szegedy, Christian and Zaremba, Wojciech and Sutskever, Ilya and Bruna, Joan and Erhan, Dumitru and Goodfellow, Ian and Fergus, Rob},
  journal={arXiv preprint arXiv:1312.6199},
  year={2013}
}

\newpage

\appendix

\section{Appendix-specific Notation}

We use the shorthand $\btheta^{(t)} \coloneqq (w_1^{(t)}, b_1^{(t)}, w_2^{(t)}, b_2^{(t)})$ to denote the network's weights and biases at step $t$, where $\btheta^{(0)}$ are its parameters at initialization. For brevity, we denote $v_1^{(t)} \coloneqq w_1^{(t)} - b_1^{(t)}$, $v_2^{(t)} \coloneqq w_2^{(t)} - b_2^{(t)}$, $u_1^{(t)} \coloneqq w_1^{(t)} + b_1^{(t)}$ and $u_2^{(t)} \coloneqq w_2^{(t)} + b_2^{(t)}$. We use the shorthands $\beta_j^{(t)}\coloneqq-\frac{b_j^{(t)}}{w_j^{(t)}}$ to denote the breakpoints of the $j^\textrm{th}$ neuron for $j\in\{1,2\}$. We use $x^\star(t)$ to denote the intersection point of the architecture defined in \eqref{eq:architecture} with the $x$-axis (when it is well-defined).

\section{Auxiliary Lemmas}

    The following lemma bounds the updates of GD once a neuron has specialized over a specific instance.
    \begin{lemma}\label{lem:specialization_invariance}
        Consider the objective in \eqref{eq:obj}, and suppose that $w_1^{(t)}>0$ and $\beta_1^{(t)}\in(-1,1)$, and that the GD update at iteration $t$ is $w_1^{(t+1)}=w_1^{(t)} + x$ and $b_1^{(t+1)}=b_1^{(t)} + x$, for some $x>0$. Then, $w_1^{(t+1)}>0$ and $\beta_1^{(t+1)}\in(-1,\beta_1^{(t)})$. 
        
        Likewise, suppose that $w_2^{(t)}<0$ and $\beta_2^{(t)}\in(-1,1)$, and that the GD update at iteration $t$ is $w_2^{(t+1)}=w_2^{(t)} - x$ and $b_1^{(t+1)}=b_1^{(t)} + x$, for some $x>0$. Then, $w_2^{(t+1)}<0$ and $\beta_2^{(t+1)}\in(\beta_2^{(t)},1)$. 
    \end{lemma}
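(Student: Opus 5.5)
This is a direct algebraic computation with the breakpoint $\beta_j=-b_j/w_j$, and the two parts are mirror images. The second part as written has $b_1^{(t+1)}=b_1^{(t)}+x$, which is evidently a typo for $b_2^{(t+1)}=b_2^{(t)}+x$. I will treat it as such, since the update rules for the specialized second neuron move $b_2$.

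For the first neuron, positivity is immediate: $w_1^{(t+1)}=w_1^{(t)}+x>w_1^{(t)}>0$. For the breakpoint, write
\[
\beta_1^{(t+1)}=-\frac{b_1^{(t)}+x}{w_1^{(t)}+x}.
\]
The map $s\mapsto -(b+s)/(w+s)$ for $s\ge 0$ has derivative $-(w-b)/(w+s)^2$. Here
\[
w_1^{(t)}-b_1^{(t)}=w_1^{(t)}\bigl(1+\beta_1^{(t)}\bigr)>0,
\]
because $w_1^{(t)}>0$ and $\beta_1^{(t)}>-1$. So the map is strictly decreasing, which gives $\beta_1^{(t+1)}<\beta_1^{(t)}$.

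For the lower bound, $\beta_1^{(t+1)}>-1$ is equivalent to $b_1^{(t)}+x<w_1^{(t)}+x$, since the denominator is positive. That in turn is $b_1^{(t)}<w_1^{(t)}$, which we just established. Equivalently, $\beta_1^{(t+1)}$ is a convex combination of $\beta_1^{(t)}$ and $-1$, with weights $w_1^{(t)}/(w_1^{(t)}+x)$ and $x/(w_1^{(t)}+x)$, both in $(0,1)$. This gives both inequalities at once, and I would present the argument this way.

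For the second neuron, $w_2^{(t+1)}=w_2^{(t)}-x<0$ is immediate. Then
\[
\beta_2^{(t+1)}=-\frac{b_2^{(t)}+x}{w_2^{(t)}-x}=\frac{b_2^{(t)}+x}{|w_2^{(t)}|+x}.
\]
Also $\beta_2^{(t)}=b_2^{(t)}/|w_2^{(t)}|$. So $\beta_2^{(t+1)}$ is the convex combination of $\beta_2^{(t)}$ and $1$ with weights $|w_2^{(t)}|/(|w_2^{(t)}|+x)$ and $x/(|w_2^{(t)}|+x)$, both strictly in $(0,1)$. Since $\beta_2^{(t)}<1$, this yields $\beta_2^{(t)}<\beta_2^{(t+1)}<1$.

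There is no real obstacle here. The only care needed is to use the hypothesis $\beta_j^{(t)}\in(-1,1)$ for the strictness: $\beta_1^{(t)}>-1$ in the first case and $\beta_2^{(t)}<1$ in the second. The other endpoint is not needed beyond guaranteeing the breakpoint lies in the stated open interval.
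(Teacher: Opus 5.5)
Your proof is correct and follows the paper's route: the paper likewise differentiates $f(x)=-(b+x)/(w+x)$, uses $f'(x)=(b-w)/(w+x)^2<0$ for the upper bound $\beta_1^{(t+1)}<\beta_1^{(t)}$, uses the limit $-1$ as $x\to\infty$ for the lower bound, and covers the second neuron only as ``a similar analysis.'' Your convex-combination identity $\beta_1^{(t+1)}=\frac{w_1^{(t)}}{w_1^{(t)}+x}\,\beta_1^{(t)}+\frac{x}{w_1^{(t)}+x}\,(-1)$, together with its mirror for $\beta_2$ with endpoint $1$, is a tidier calculus-free way to get both inequalities at once and actually writes out the second case. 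You are also right that $b_1^{(t+1)}$ in the second part of the statement is a typo for $b_2^{(t+1)}$.
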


    \begin{proof}
        The invariance of the signs of $w_1^{(t)}$ and $w_2^{(t)}$ follows immediately from the monotonicity of their update rules. We now turn to show the monotonicity of the breakpoints towards $\pm1$. Starting with $\beta_1^{(t)}$, let $w>0$ and $b\in(-w,w)$, which implies $-\frac{b}{w}\in(-1,1)$. Define
        \[
            f(x) \coloneqq -\frac{b+x}{w+x}.
        \]
        To compute the derivative, we rewrite
        \[
            f(x) = -\frac{b - w +x + w}{w+x} = \frac{w-b}{w+x} - 1.
        \]
        Differentiating with respect to $x$ yields
        \[
            f'(x)= \frac{b-w}{(w+x)^2}.
        \]
        Since $b\in(-w,w)$ and $w>0$, the numerator is negative, and therefore $f'(x)<0$ for all $x$, namely $f$ is decreasing. This also implies that $f$ is lower bounded by its limit when $x\to\infty$. We compute $\lim_{x\to\infty}f(x) = -1$, and we have that
        \[
            \beta_1^{(t+1)} = - \frac{b_1^{(t+1)}}{w_1^{(t+1)}} = - \frac{b_1^{(t)} + x}{w_1^{(t)} + x} \in \p{-1,\beta_1^{(t)}},
        \]
        where in the second equality uses the update rules for the weights. Likewise, a similar analysis shows that $\beta_2^{(t+1)}\in\p{\beta_2^{(t)},1}$, concluding the proof of the lemma.
    \end{proof}

    The following lemma allows us to exclude many configurations of the architecture in \eqref{eq:architecture}, by showing that they imply a large training error in \eqref{eq:obj}.
    \begin{lemma}\label{lem:loss_at_least_half}
        Consider the objective in \eqref{eq:obj}, and suppose that either one of the following conditions hold for $\btheta=(w_1,b_1,w_2,b_2)$:
        \begin{itemize}
            \item
            Neuron 1 is right-active ($w_1>0$) and $\beta_1\in[1,\infty)$, or neuron 2 is left-active ($w_2<0$) and $\beta_2\in(-\infty,-1]$,
            \item
            Neuron 1 is left-active and $\beta_1\in(-\infty,1]$, or neuron 2 is right-active and $\beta_2\in[-1,\infty)$,
            \item 
            $w_1\le0$ and $w_2\ge0$,
            \item 
            $w_1=0$ and $b_1\le0$, or $w_2=0$ and $b_2\le0$.
        \end{itemize}
        Then, we have that $\Lcal(\btheta)\ge0.5$.
    \end{lemma}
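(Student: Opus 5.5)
The plan is to reduce every condition to one elementary observation about the loss. Since both summands of $\Lcal$ in \eqref{eq:obj} are positive, it suffices to show that
\[
    \Phi(\btheta;-1)\ge 0 \quad\text{or}\quad \Phi(\btheta;1)\le 0 .
\]
In the first case $\tfrac12\exp(\Phi(\btheta;-1))\ge\tfrac12$. In the second case $\tfrac12\exp(-\Phi(\btheta;1))\ge\tfrac12$. Either way $\Lcal(\btheta)\ge 0.5$. The only tool needed for each condition is the sign of the relevant pre-activation at $x=\pm1$, read off from the breakpoint $\beta_j=-b_j/w_j$.

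\emph{First condition.} Suppose $w_1>0$ and $\beta_1\ge 1$. Multiplying by $w_1>0$ gives $-b_1\ge w_1$, so $w_1+b_1\le 0$. Hence neuron 1 is inactive at $x=1$ and
\[
    \Phi(\btheta;1)=-\relu{w_2+b_2}\le 0 .
\]
Suppose instead $w_2<0$ and $\beta_2\le -1$. Multiplying by $w_2<0$ flips the inequality and gives $b_2\le w_2$, i.e.\ $-w_2+b_2\le 0$. Hence neuron 2 is inactive at $x=-1$ and $\Phi(\btheta;-1)=\relu{-w_1+b_1}\ge 0$.

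\emph{Second condition.} This is the mirror image. If $w_1<0$ and $\beta_1\le 1$, multiplying by $w_1<0$ gives $-b_1\ge w_1$. Thus $w_1+b_1\le0$ and again $\Phi(\btheta;1)\le 0$. If $w_2>0$ and $\beta_2\ge -1$, then $-b_2\ge -w_2$. Thus $-w_2+b_2\le 0$ and $\Phi(\btheta;-1)\ge 0$.

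\emph{Third condition.} Suppose $w_1\le 0\le w_2$. Then $w_1+b_1\le -w_1+b_1$ and $w_2+b_2\ge -w_2+b_2$. By monotonicity of $z\mapsto\relu{z}$,
\[
    \Phi(\btheta;1)-\Phi(\btheta;-1)=\bigl(\relu{w_1+b_1}-\relu{-w_1+b_1}\bigr)-\bigl(\relu{w_2+b_2}-\relu{-w_2+b_2}\bigr)\le 0 .
\]
So $\Phi(\btheta;1)\le\Phi(\btheta;-1)$. Now either $\Phi(\btheta;-1)\ge0$, or $\Phi(\btheta;1)\le\Phi(\btheta;-1)<0$; both are covered by the reduction.

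\emph{Fourth condition.} If $w_1=0$ and $b_1\le0$, neuron 1 is identically zero. Then $\Phi(\btheta;x)=-\relu{w_2x+b_2}\le0$ for all $x$, in particular $\Phi(\btheta;1)\le0$. Symmetrically, if $w_2=0$ and $b_2\le0$, then $\Phi(\btheta;\cdot)\ge0$, so $\Phi(\btheta;-1)\ge0$.

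There is no genuine obstacle here. The only place needing care is the direction of the inequality when multiplying the breakpoint condition by a negative weight in the first two conditions, and the sign bookkeeping there should be checked explicitly.
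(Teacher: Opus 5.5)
Your proof is correct, and it organizes the argument differently from the paper. The paper splits on the joint sign pattern of $(w_1,w_2)$: the case $w_1\le0\le w_2$, a zero weight, both neurons left-active, both right-active, and $w_1>0>w_2$. It then locates the breakpoints within each class. That structure reflects the proof's later use: the same split records which activation patterns survive the loss constraint, which feeds the $121$-case enumeration in the KKT proof.

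You instead go condition by condition and rely on one observation. If neuron 1 is silent at $x=1$, then $\Phi(\btheta;1)=-\relu{w_2+b_2}\le0$. If neuron 2 is silent at $x=-1$, then $\Phi(\btheta;-1)=\relu{-w_1+b_1}\ge0$. Both hold whatever the other neuron does, so its orientation becomes irrelevant.

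This makes your argument shorter and more robust. Take $w_1=0<b_1$ together with $w_2<0$ and $\beta_2\le-1$. This configuration falls under the paper's zero-weight bullet, whose justification (``at least one neuron computes the zero function'') fails there, since neither neuron vanishes. Your argument settles it at once, because neuron 2 is inactive at $x=-1$.

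For the third and fourth conditions, your arguments match the paper's: monotonicity of each neuron in $x$ when $w_1\le0\le w_2$, and a vanishing neuron when $w_j=0\ge b_j$.
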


    \begin{proof}~
        \begin{itemize}
        \item
        First consider all cases where $w_1\le0$ and $w_2\ge0$. By these assumptions, we have that $\relu{w_1x+b_1}$ and $- \relu{w_2x+b_2}$ are both non-increasing, which implies that $\Phi(\btheta;-1)\ge \Phi(\btheta;1)$. Namely, for $y_1\Phi(\btheta;-1)>0$ to hold, we must have $\Phi(\btheta;-1)<0$, which implies $\Phi(\btheta;1)<0$ and $\Lcal(\btheta)\ge0.5$ in either case.
    
        \item 
        Consider all remaining cases in which either $w_1=0$ or $w_2=0$. 
            If $b_1\le0$ or $b_2\le0$, then at least one neuron computes the zero function, and our network reduces to a single neuron. Since such a neuron is always non-positive or non-negative (depending on the sign of $v_j$), at least one instance is misclassified and $\Lcal(\btheta)\ge0.5$.
         \item 
        Consider the remaining cases in which $w_1<0$ and $w_2<0$. 
        \begin{itemize}
            \item
            Suppose that $\beta_1\le1$. Since both neurons are left-active, we have that $\Phi(\btheta;x)=0$ for all sufficiently large $x$. Since $\beta_1\le1$ and $v_2=-1$, we have that $\Phi(\btheta;x)\le0$ for all $x\le1$, which misclassifies $x_2$, and therefore $\Lcal(\btheta)\ge0.5$.
            \item
            If $\beta_1>1$ and $\beta_2\le-1$, then $\Phi(\btheta;x_i)>0$ for all $i\in\{1,2\}$, which misclassifies $x_1=-1$, and therefore $\Lcal(\btheta)\ge0.5$.
        \end{itemize}
        \item 
        Likewise, consider the remaining cases in which $w_1>0$ and $w_2>0$.
        \begin{itemize}
            \item
            Suppose that $\beta_2\ge-1$. Since both neurons are right-active, we have that $\Phi(\btheta;x)=0$ for all sufficiently small $x$. Since $\beta_2\ge-1$ and $v_1=1$, we have that $\Phi(\btheta;x)\ge0$ for all $x\ge-1$, which misclassifies $x_1$, and therefore $\Lcal(\btheta)\ge0.5$.
            \item 
            If $\beta_2<-1$ and $\beta_1\ge1$, then $\Phi(\btheta;x_i)<0$ for all $i\in\{1,2\}$, which misclassifies $x_2=1$, and therefore $\Lcal(\btheta)\ge0.5$.
        \end{itemize}
    \end{itemize}
    To conclude the derivation so far, we have excluded all cases in which $w_1\le0$ or $w_2\ge0$, except for 6 ($w_1,w_2<0$, $\beta_1>1$, $\beta_2>-1$ and $w_1,w_2>0$, $\beta_2<-1$, $\beta_1<1$), in which the loss can be smaller than $0.5$. In all remaining cases, since the breakpoints $\beta_1,\beta_2$ are well-defined, we will split our analysis according to their values.
    
    Moving on to the cases where $w_1>0$ and $w_2<0$:
    \begin{itemize}
        \item 
        If $\beta_1\ge1$, then $\Phi(\btheta;x)\le0$ for all $x\le1$, hence $x_2=1$ is misclassified, and therefore $\Lcal(\btheta)\ge0.5$.
        \item 
        Likewise, if $\beta_2\le-1$, then $\Phi(\btheta;x)\ge0$ for all $x\ge-1$, hence $x_1=-1$ is misclassified, and therefore $\Lcal(\btheta)\ge0.5$.
    \end{itemize}
    We have exhausted all possibilities stated in the lemma, and therefore the proof is concluded.
    \end{proof}

    The following technical lemma allows us to bound the growth rate of recursions that are repeatedly encountered in our analysis.
    \begin{lemma}\label{lem:exp_recursion}
        Suppose that $a_t$ is a sequence such that $a_{t+1}=a_t+C_1\exp(-C_2a_t)$ for some $C_1\neq0$ and $C_2>0$. Then, if $C_1>0$, we have
        \[
            C_2^{-1}\ln\p{\exp(C_2a_0) + tC_2C_1} \le a_{t} \le C_2^{-1}\ln\p{\exp(C_2a_0) + tC_2C_1\exp\p{C_2C_1\exp(-C_2a_0)}},
        \]
        and if $C_1<0$, we have
        \[
            -C_2^{-1}\ln\p{\exp(C_2a_0) - tC_2C_1\exp\p{-C_2C_1\exp(-C_2a_0)}} \le a_{t} \le -C_2^{-1}\ln\p{\exp(C_2a_0) - tC_2C_1}.
        \]
    \end{lemma}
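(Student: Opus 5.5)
The plan is to linearize the recursion through the potential $z_t \coloneqq \exp(C_2 a_t)$. This substitution turns the additive exponential update into a multiplicative one:
\[
    z_{t+1} = \exp\p{C_2 a_t + C_2C_1\exp(-C_2a_t)} = z_t\exp\p{\frac{C_2C_1}{z_t}}.
\]
Writing $x_t \coloneqq C_2C_1/z_t$, I would sandwich $z_t e^{x_t}$ between affine functions of $z_t$ and then telescope over $t$.

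\textbf{Case $C_1>0$.} Here $x_t>0$, so $a_t$ is nondecreasing. Consequently $x_t \le C_2C_1\exp(-C_2a_0)$ for all $t$.
\begin{itemize}
    \item For the lower bound, use $e^{x}\ge 1+x$. This gives $z_{t+1}\ge z_t + C_2C_1$, hence $z_t \ge \exp(C_2a_0)+tC_2C_1$. Taking $C_2^{-1}\ln$ yields the stated lower bound.
    \item For the upper bound, use $e^{x}\le 1+xe^{x}$ for $x\ge0$ (this follows from the mean value theorem). Combined with the monotone bound on $x_t$, this gives $z_{t+1}\le z_t + C_2C_1\exp\p{C_2C_1\exp(-C_2a_0)}$. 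Telescoping and taking logarithms gives exactly the stated upper bound.
\end{itemize}

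\textbf{Case $C_1<0$.} Now $x_t<0$ and $a_t$ is nonincreasing, so $|x_t|$ grows along the trajectory. The inequality $e^{x}\ge 1+x$ still applies and gives $z_{t+1}\ge z_t+C_2C_1$. For the reverse direction, the mean value theorem gives $e^{x}\le 1+xe^{x}$ for $x<0$ as well. One then needs a uniform control of $e^{x_t}$ and of $x_t$ in terms of the initial quantity $-C_2C_1\exp(-C_2a_0)$. I would obtain this control from monotonicity, and telescoping then produces two affine-in-$t$ bounds on $z_t$. Finally, I would convert these into bounds on $a_t$ by applying the monotone map $z\mapsto C_2^{-1}\ln z$. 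I would then match the resulting expressions term by term against the two displayed inequalities in the $C_1<0$ branch of the statement, including the factor $\exp\p{-C_2C_1\exp(-C_2a_0)}$ that multiplies $-tC_2C_1$ on the left-hand side.

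\textbf{Main obstacle.} I expect the main difficulty to be the $C_1<0$ branch.

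First, at $t=0$ both stated expressions reduce to $-C_2^{-1}\ln\p{\exp(C_2a_0)} = -a_0$. The inequalities therefore force $a_0 \le -a_0 \le a_0$, i.e.\ they are consistent at $t=0$ only when $a_0=0$. So before grinding through the estimates, I would check carefully how the sign in front of $C_2^{-1}$ interacts with the direction of the potential. I would try first the substitution $a_t\mapsto -a_t$ together with the potential $\exp(-C_2a_t)$, and see which normalization reproduces the displayed formulas exactly.

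Second, the monotonicity of $x_t$ now works against us, because $|x_t|$ increases as $a_t$ decreases. The uniform bound on the exponential factor must therefore be justified, at least on the range of $t$ where the arguments of the logarithms stay positive.
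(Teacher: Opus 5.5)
Your argument for $C_1>0$ is correct and is exactly the paper's proof: the potential $\exp(C_2a_t)$, the sandwich $1+x\le e^x\le 1+xe^x$, freezing $e^{x_t}$ at its $t=0$ value by monotonicity, and telescoping.

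\textbf{The gap is the $C_1<0$ branch, and it cannot be closed, because that branch is false as stated.} Your $t=0$ check is correct, but the failure is not a matter of normalization.

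With $C_1<0$ the sequence is nonincreasing. Hence every increment satisfies $C_1\exp(-C_2a_t)\le C_1\exp(-C_2a_0)<0$, which gives $a_t\le a_0+tC_1\exp(-C_2a_0)$. So $a_t$ decreases at least linearly. In fact, with $y_t\coloneqq\exp(-C_2a_t)$ one has $y_{t+1}=y_t\exp\p{C_2\abs{C_1}y_t}$, a tower-type blow-up. This is incompatible with any lower bound of the form $-C_2^{-1}\ln(\alpha+\beta t)$.

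The lower bound fails even when $a_0=0$. Take $C_1=-1$, $C_2=1$, $a_0=0$. Then $a_2=-1-e\approx-3.72$, whereas the claimed lower bound is $-\ln(1+2e)\approx-1.86$.

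So the step where you planned to get uniform control of $e^{x_t}$ from monotonicity cannot be carried out by any argument. Positivity of the logarithms' arguments is not the issue; in this branch they are always positive.

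\textbf{The paper's own proof of this branch has the same defect.} It sets $a'_t=-a_t$ and treats $a'_{t+1}=a'_t+\abs{C_1}\exp(-C_2a_t)$ as an instance of the $C_1>0$ case. But $\exp(-C_2a_t)=\exp(C_2a'_t)$, so it is not such an instance.

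What is true is the variant with hypothesis $a_{t+1}=a_t+C_1\exp(C_2a_t)$, $C_1<0$. This is also what the paper actually needs: the recursions for $u_2^{(t)}$ and $v_2^{(t)}$ in the one-neuron-specialized cases have the form $a_{t+1}=a_t-c\exp(2a_t)$. Under this hypothesis, $-a_t$ satisfies the $C_1>0$ recursion with constant $-C_1$ and initial value $-a_0$. Your $C_1>0$ argument then yields
\[
    -C_2^{-1}\ln\p{\exp(-C_2a_0)-tC_2C_1\exp\p{-C_2C_1\exp(C_2a_0)}}\le a_t\le -C_2^{-1}\ln\p{\exp(-C_2a_0)-tC_2C_1}.
\]
These are the displayed bounds with $\exp(C_2a_0)$ and $\exp(-C_2a_0)$ interchanged, and both sides now equal $a_0$ at $t=0$.

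Rather than searching for a normalization that reproduces the stated $C_1<0$ formulas, prove the $C_1>0$ case and record this corrected reflection as the $C_1<0$ case.
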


    \begin{proof}
        First, assume that $C_1>0$, and denote $\alpha_t\coloneqq\exp(C_2a_t)$. By the recurrence relation, we have
        \begin{align*}
            \alpha_{t+1} &= \exp(C_2a_{t+1})= \exp\p{C_2a_t+C_2C_1\exp(-C_2a_t)} \\ &= \alpha_t \exp\p{C_2C_1\exp(-C_2a_t)} = \alpha_t\exp\p{\frac{C_2C_1}{\alpha_t}}.
        \end{align*}
        By a standard Taylor expansion argument, we have for all $x\in\reals$ that $1+x\le\exp(x)\le 1+x\exp(x)$. Using these inequalities and the above displayed equation, we arrive at
        \[
            \alpha_t + C_2C_1\le\alpha_{t+1} \le \alpha_t + C_2C_1\exp\p{\frac{C_2C_1}{\alpha_t}}.
        \]
        Since $a_t$ is increasing due to $C_1\exp(-C_2a_t)>0$, we have $\alpha_0\le\alpha_t$ for all $t\ge0$, and therefore
        \[
            C_2C_1\le\alpha_{t+1} - \alpha_{t} \le C_2C_1\exp\p{\frac{C_2C_1}{\alpha_0}}.
        \]
        Summing the above for all $t$, the middle expression telescopes and we obtain
        \[
            tC_2C_1\le\alpha_{t} - \alpha_{0} \le tC_2C_1\exp\p{\frac{C_2C_1}{\alpha_0}}.
        \]
        Adding $\alpha_0$ and taking the logarithm yields
        \[
            C_2^{-1}\ln\p{\alpha_0+tC_2C_1} \le a_{t} \le C_2^{-1}\ln\p{\alpha_0+tC_2C_1\exp\p{\frac{C_2C_1}{\alpha_0}}},
        \]
        and the claim follows by substituting $\alpha_0$ with its definition.

        For the case $C_1<0$, consider the sequence $a'_{t}=-a_t$. Then we have
        \[
            a'_{t+1} = a'_t-C_1\exp(-C_2a_t) = a'_t + |C_1|\exp(-C_2a_t),
        \]
        and the lemma follows by applying the same bounds, substituting $a'_t$ with $a_t$ and $|C_1|$ with $-C_1$, and rearranging.
    \end{proof}

\section{Proof of \thmref{thm:KKT}}\label{app:KKT_proof}

\begin{proof}
    The proof of the theorem relies on a careful case-by-case analysis, where we will exhaustively go over all possible activations patterns of the neurons and examine them for possible solutions. To do this more cleanly, we will derive new equations that are implied by computing the subgradients and substituting them in \eqref{eq:theta}. To this end, we begin with computing the partial derivatives.
    
    We denote the subgradient of the $j^\textrm{th}$ neuron on the $i^\textrm{th}$ instance by 
    \[
        g_{i,j}\coloneqq \begin{cases}
            0, & \text{if } w_jx_i+b_j < 0,\\
            [0,1], & \text{if } w_jx_i+b_j = 0,\\
            1, & \text{if } w_jx_i+b_j > 0.
        \end{cases}
    \]
    For $j\in\{1,2\}$, the partial derivatives are given by
    \begin{align*}
        &\frac{\partial}{\partial w_j}\Phi(\btheta;x) = v_j x g_{i,j},\qquad
        \frac{\partial}{\partial b_j}\Phi(\btheta;x) = v_jg_{i,j}.
    \end{align*}
    Combining these with \eqref{eq:theta}, we obtain for all $j$
    \begin{align}
        w_j \;&=\; v_j \sum_{i=1}^n \lambda_i y_i  x_i g_{i,j}, \label{eq:w_j}\\
        b_j \;&=\; v_j \sum_{i=1}^n \lambda_i y_i  g_{i,j}. \label{eq:b_j}
    \end{align}
    Since the above equations are implied whenever a certain point $\btheta$ satisfies the KKT conditions, satisfying them is a necessary condition for satisying KKT. The converse, however, fails to hold as readily seen by examining the trivial all-zero solution. This solution clearly satisfies Equations~(\ref{eq:w_j},\ref{eq:b_j}), but fails to satisfy \eqref{eq:correct} since the network computes the zero function under this assignment of weights. In light of this, our strategy will be to exhaustively find all feasible solutions to Equations~(\ref{eq:w_j},\ref{eq:b_j}), and subsequently verify their correctness in Equations~(\ref{eq:theta}--\ref{eq:zero_lam}).
    
    Since we have two data instances, each neuron's breakpoint can reside in five different intervals with distinct activation pattern and subgradient behavior: $(-\infty,-1),-1,(-1,1),1,(1,\infty)$. Moreover, each neuron can be right-active ($w_j>0$), left-active ($w_j<0$), or constant with no breakpoint ($w=0$). This amounts to 11 distinct patterns for each neuron, and 121 cases to analyze in total. However, many cases can be excluded immediately by considering the training loss attained on the sample. To this end, we invoke \lemref{lem:loss_at_least_half}, which guarantees that when its conditions hold then the KKT conditions cannot be satisfied. This follows immediately from \eqref{eq:correct}, which implies that the training loss is strictly smaller than $0.5$.
    
    In the remainder of the proof, we turn to examine a much smaller number of cases, that are more complex and require a more careful analysis. This part will also include the only feasible solution for our system.
    \begin{itemize}
        \item 
        We begin with ruling out all $6$ cases in which either $w_1=0$ or $w_2=0$. The only such remaining cases are when $b_1>0$ and $b_2>0$. Let $j\in\{1,2\}$ be the index satisfying $w_j=0$. Then by \eqref{eq:w_j}, we have $0=v_j(\lambda_1+\lambda_2)$. Multiplying this by $v_j$ yields $\lambda_1+\lambda_2=0$. \eqref{eq:primal_feasibility} then implies that $\lambda_1=\lambda_2=0$, which in turn implies that $\btheta=0$ by \eqref{eq:theta}, but then $\Phi(\btheta,\cdot)$ is the zero function, violating \eqref{eq:correct}.
        
        \item
        We are now left with $6+9=15$ cases. Suppose that $w_1<0$, $w_2<0$ and $\beta_1>1$. We have 3 remaining cases under this assumption, determined by the value of $\beta_2$. However, it suffices to consider only the first neuron to show that these cases are infeasible, as we show next.
        
        Since $w_1<0$ and active on both neurons, we get
        \begin{align*}
            w_1 \;&=\;  \sum_{i=1}^n \lambda_i y_i  x_i g_{i,j}=\lambda_1+\lambda_2, \\
            b_1 \;&=\;  \sum_{i=1}^n \lambda_i y_i  g_{i,j}= -\lambda_1 + \lambda_2. 
        \end{align*}
        The assumption in this case requires strict activity on both points, and in particular
            \[
                w_1x_1+b_1>0.
            \]
            Substituting the expressions for $w_1,b_1$ in the above displayed equation yields
            \begin{equation}\label{eq:positive_both_x1}
                w_1x_1+b_1
                = -(\lambda_1+\lambda_2)+(-\lambda_1 + \lambda_2)
                =-2\lambda_1>0,
            \end{equation}
            which implies $\lambda_1<0$, contradicting \eqref{eq:primal_feasibility}.
            Therefore these cases are infeasible.
        
        \item
        Suppose that $w_1>0$ and $w_2>0$. We have 3 remaining cases under this assumption, determined by the value of $\beta_1$. However, it suffices to consider only the second neuron to show that these cases are infeasible, as we show next.
            
        Since $w_2>0$ and active on both neurons we get
        \begin{align*}
            &w_2 \;=\;  -\sum_{i=1}^n \lambda_i y_i  x_i g_{i,j}=-\lambda_1-\lambda_2, \\
            &b_2 \;=\;  -\sum_{i=1}^n \lambda_i y_i  g_{i,j}= \lambda_1 - \lambda_2 . 
        \end{align*}
         The assumption in this case requires strict activity on both points, and in particular
            \[
                w_2x_2+b_2>0.
            \]
            Substituting the expressions for $w_2,b_2$ in the above displayed equation yields
            \begin{equation}\label{eq:positive_both_x2}
                w_2x_2+b_2
                = -\lambda_1-\lambda_2+\lambda_1 - \lambda_2
                =-2\lambda_2>0,
            \end{equation}
            which implies $\lambda_2<0$, contradicting \eqref{eq:primal_feasibility}.
            Therefore these cases are infeasible.
        \item 
        Suppose that $w_1>0$ and $w_2<0$. We have 9 remaining cases under this assumption.
        \begin{itemize}
            \item
            Suppose that $\beta_1<-1$; namely, the first neuron is strictly active on both instances. Despite the opposite orientation of the first neuron, the analysis in this case is identical to the one done in a previous case (see the contradiction derived in \eqref{eq:positive_both_x1}), and it is thus skipped.
    
            \item
            Likewise, the analysis in the case $\beta_2>1$ is identical to the one done in a previous case (see the contradiction derived in \eqref{eq:positive_both_x2}), and therefore it is also skipped.
            
            \item
            There are only 4 cases remaining, which include the case with the only feasible solution. We will now rule out two cases.
            
            Suppose that $\beta_1\in[-1,1)$ and $\beta_2\in(-1,1)$. Focusing on the second neuron, we have by our assumption that $|-\frac{b_2}{w_2}|<1$. Since $w_2$ is left-active, it must hold that $w_2x_1+b_2>0$ and $w_2x_2+b_2<0$, implying that $g_{1,2}=1$ and $g_{2,2}=0$. Substituting these subgradients in Equations~(\ref{eq:w_j},\ref{eq:b_j}) yields
            \[
                w_2=-\lambda_1 g_{1,2}-\lambda_2 g_{2,2}
                =-\lambda_1,
                \qquad
                b_2=\lambda_1 g_{1,2}-\lambda_2 g_{2,2}
                =\lambda_1.
            \]
            
            In particular, it must hold that $b_2=-w_2$ and therefore $\beta_2=1\notin(-1,1)$, contradicting our assumption in this case, hence this activation configuration is infeasible.
            
            \item 

            Suppose that $\beta_1\in(-1,1)$ and $\beta_2=1$. Focusing on the first neuron, we have by our assumption that $|-\frac{b_1}{w_1}|<1$. Since $w_1$ is right-active, it must hold that $w_1x_1+b_1<0$ and $w_1x_2+b_1>0$, implying that $g_{1,1}=0$ and $g_{2,1}=1$. Substituting these subgradients in Equations~(\ref{eq:w_j},\ref{eq:b_j}) yields
            \[
                w_1=\lambda_1 g_{1,1}+\lambda_2 g_{2,1}
                =\lambda_2,
                \qquad
                b_1=-\lambda_1 g_{1,1} + \lambda_2 g_{2,1}
                =\lambda_2.
            \]
            
            In particular, it must hold that $b_1=w_1$ and therefore $\beta_1=-1\notin(-1,1)$, contradicting our assumption in this case, hence this activation configuration is infeasible.
            
            \item
            We are now left with the final case, which contains the only feasible solution. Suppose that $\beta_1=-1$ and $\beta_2=1$. In this case, our assumptions imply strong constraints on the weights. In particular, $\beta_1=-1$ implies $b_1=w_1$; and $\beta_2=1$ implies $b_2=-w_2$. Moreover, since the first neuron is active on $x_2$ and the second is active on $x_1$, we have $g_{1,2}=g_{2,1}=1$; and since both neurons have a subgradient on the remaining instance we have that $g_{1,1}$ and $g_{2,2}$ can take any value in $[0,1]$.
            
            Applying the above to Equations~(\ref{eq:w_j},\ref{eq:b_j}) yields
            \begin{align}
                w_1 &= \lambda_1 g_{1,1}+\lambda_2 g_{2,1}= \lambda_1 g_{1,1}+\lambda_2, \label{eq:KKT_w_1}\\
                w_2 &= -\lambda_1 g_{1,2}-\lambda_2 g_{2,2}=-\lambda_1 -\lambda_2 g_{2,2},\\
                b_1 &= -\lambda_1 g_{1,1}+\lambda_2 g_{2,1}=-\lambda_1 g_{1,1}+\lambda_2,\\
                b_2 &= \lambda_1 g_{1,2}-\lambda_2 g_{2,2}=\lambda_1-\lambda_2 g_{2,2}.\label{eq:KKT_b_2}
            \end{align}
            Plugging the above into the four activation constraints yields
            \begin{align*}
                0 &= w_1x_1+b_1 = -w_1+b_1
                 = -(\lambda_1 g_{1,1}+\lambda_2)
                   +(-\lambda_1 g_{1,1}+\lambda_2)
                 = -2\lambda_1 g_{1,1},\\
                0 &< w_1x_2+b_1 = w_1+b_1
                 = (\lambda_1 g_{1,1}+\lambda_2)
                   +(-\lambda_1 g_{1,1}+\lambda_2)
                 = 2\lambda_2,\\
                0 &< w_2x_1+b_2 = -w_2+b_2
                 = -(-\lambda_1-\lambda_2 g_{2,2})
                   +(\lambda_1-\lambda_2 g_{2,2})
                 = 2\lambda_1,\\
                0 &= w_2x_2+b_2 = w_2+b_2
                 = (-\lambda_1-\lambda_2 g_{2,2})
                   +(\lambda_1-\lambda_2 g_{2,2})
                 = -2\lambda_2 g_{2,2}.
            \end{align*}
            In particular, the middle equations imply $\lambda_1,\lambda_2>0$, which together with the first and last equations forces $g_{1,1}=g_{2,2}=0$. Next, by the middle equations and \eqref{eq:zero_lam}, it must hold that $1=w_1x_2+b_1=2\lambda_2$ and $1=w_2x_1+b_2=2\lambda_1$, implying $\lambda_1=\lambda_2=0.5$. Finally, plugging the lambdas and the subgradients in Equations~(\ref{eq:KKT_w_1}--\ref{eq:KKT_b_2}), we obtain the only feasible solution $\btheta=(w_1,b_1,w_2,b_2)=(0.5,0.5,-0.5,0.5)$. Following our derivation, it is straightforward to plug this solution in Equations~(\ref{eq:theta}--\ref{eq:zero_lam}) and verify that all constraints are indeed satisfied.
        \end{itemize}
    \end{itemize}
    Since Equations~(\ref{eq:theta}--\ref{eq:zero_lam}) assume a normalized margin, we have that the set of all feasible KKT points is given precisely by
    \[
        \set{\p{\alpha,\alpha,-\alpha,\alpha}:\alpha>0}.
    \]
    
    We can finally prove the theorem. Suppose that GF reaches a time $t_0$ such that $\Lcal(\btheta(t_0))<0$. By \thmref{thm:known_KKT}, converges in direction to a KKT point. Fix any arbitrary KKT point with parameter $\alpha>0$ that GF converges to, then we have
    \[
        \lim_{t\to\infty}\frac{\btheta(t)}{\norm{\btheta(t)}} = \frac{\p{\alpha,\alpha,-\alpha,\alpha}}{\norm{\p{\alpha,\alpha,-\alpha,\alpha}}} = (0.5,0.5,-0.5,0.5).
    \]
\end{proof}

\section{Proofs for \secref{sec:GD_dynamics}: The Convergence Dynamics of GD}

In this section, we analyze the optimization dynamics of GD on the objective in \eqref{eq:obj}. The section is split into different subsections, depending on the activation patterns studied.

\subsection{Both Neurons Have Specialized}

In this subsection of the appendix, we analyze the gradient descent dynamics after the neurons have specialized. Namely, we consider the regime in which the first neuron is active only on $x_1$, while the second neuron is active only on $x_2$.

We further divide this regime into two sub-cases:
(i) there exists an interval on which both neurons are simultaneously active, and
(ii) there exists an interval on which both neurons are simultaneously inactive.
The following two assumptions formalize these sub-cases.

\begin{assumption}[Both neurons are active on an interval]\label{asm:base_case}
    Given the training set $(x_1,y_1)=(-1,-1)$, $(x_2,y_2)=(1,1)$, consider the network $\Phi(\btheta^{(0)};x)=\relu{w_1^{(0)}x+b_1^{(0)}}-\relu{w_2^{(0)}x+b_2^{(0)}}$. We assume that at initialization we have $w_1^{(0)} > 0$, $w_2^{(0)} < 0$ and $-1 < \beta_1^{(0)} < \beta_2^{(0)} < 1$.
\end{assumption}

\begin{assumption}[Both neurons are inactive on an interval]\label{asm:neurons_active_only_on_one_point}
    Given the training set $(x_1,y_1)=(-1,-1)$, $(x_2,y_2)=(1,1)$, consider the network $\Phi(\btheta^{(0)};x)=\relu{w_1^{(0)}x+b_1^{(0)}}-\relu{w_2^{(0)}x+b_2^{(0)}}$. We assume that at initialization we have $w_1^{(0)} > 0$, $w_2^{(0)} < 0$ and $-1 < \beta_2^{(0)} \le \beta_1^{(0)} < 1$.
\end{assumption}

In section \ref{sec:base_case} we analyze the dynamics of the network under Assumption~\ref{asm:base_case} and in section \ref{sec:neurons_active_only_on_one_point} we analyze the dynamics of the network under Assumption~\ref{asm:neurons_active_only_on_one_point}, but first,
we begin with a few technical lemmas that control the parameters' dynamics.
The following proposition establishes that once each neuron `specializes' on its target training instance, the training dynamics become invariant: the activation pattern and the signs of the weights of the neurons remain fixed.

\begin{proposition}\label{prop:base_case_invariance}
    Under Assumption~\ref{asm:base_case} or \asmref{asm:neurons_active_only_on_one_point}, consider the optimization dynamics of GD on the exponential loss. Then, for any step size $\eta>0$, we have that
    \begin{enumerate}
        \item $\big\{w_1^{(t)}\big\}_{t=0}^{\infty}$ is strictly monotonically increasing,
        \item $\big\{b_1^{(t)}\big\}_{t=0}^{\infty}$ is strictly monotonically increasing,
        \item $\big\{w_2^{(t)}\big\}_{t=0}^{\infty}$ is strictly monotonically decreasing,
        \item $\big\{b_2^{(t)}\big\}_{t=0}^{\infty}$ is strictly monotonically increasing.
    \end{enumerate}
    In particular, the signs of the weights of both neurons remain constant throughout optimization. Moreover, for all $t$, we have
    \begin{enumerate}
        \item
        $\beta_1^{(t+1)}\in\p{-1,\beta_1^{(t)}}$,
        \item
        $\beta_2^{(t+1)}\in\p{\beta_2^{(t)},1}$.
    \end{enumerate}
\end{proposition}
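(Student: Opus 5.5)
We argue by induction on $t$, maintaining the invariant that at iteration $t$ we have $w_1^{(t)}>0$, $w_2^{(t)}<0$, $\beta_1^{(t)}\in(-1,1)$ and $\beta_2^{(t)}\in(-1,1)$. The base case is given by \asmref{asm:base_case} or \asmref{asm:neurons_active_only_on_one_point}.

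Under the invariant, the first neuron is right-active with breakpoint in $(-1,1)$. Hence $w_1^{(t)}x_1+b_1^{(t)}<0<w_1^{(t)}x_2+b_1^{(t)}$, so it is strictly inactive on $x_1$ and strictly active on $x_2$. Symmetrically, the second neuron is strictly active on $x_1$ and strictly inactive on $x_2$. Consequently $\Phi(\btheta^{(t)};1)=w_1^{(t)}+b_1^{(t)}$ and $\Phi(\btheta^{(t)};-1)=-(-w_2^{(t)}+b_2^{(t)})$. The loss is differentiable at $\btheta^{(t)}$, since no instance lies exactly on a breakpoint. Differentiating \eqref{eq:obj} gives
\[
\partial_{w_1}\Lcal=\partial_{b_1}\Lcal=-\tfrac12\exp\p{-w_1^{(t)}-b_1^{(t)}},\qquad
-\partial_{w_2}\Lcal=\partial_{b_2}\Lcal=-\tfrac12\exp\p{w_2^{(t)}-b_2^{(t)}}.
\]
These yield exactly the update rules \eqref{eq:w_1_rule_body}--\eqref{eq:b_2_rule_body}. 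The only care needed is the chain rule with $x_1=-1$ for the second neuron and the sign of the output weight.

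We then set $x=\frac12\eta\exp(-w_1^{(t)}-b_1^{(t)})>0$ and $x'=\frac12\eta\exp(w_2^{(t)}-b_2^{(t)})>0$; both are strictly positive for every $\eta>0$. This gives strict monotonicity at step $t$: $w_1,b_1,b_2$ increase and $w_2$ decreases. We then apply \lemref{lem:specialization_invariance} to each neuron separately. The first neuron receives $w_1\mapsto w_1+x$, $b_1\mapsto b_1+x$, which gives $w_1^{(t+1)}>0$ and $\beta_1^{(t+1)}\in(-1,\beta_1^{(t)})$. The second neuron receives $w_2\mapsto w_2-x'$, $b_2\mapsto b_2+x'$; the lemma's statement contains a typo writing $b_1$, and we read it as $b_2$. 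This gives $w_2^{(t+1)}<0$ and $\beta_2^{(t+1)}\in(\beta_2^{(t)},1)$.

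It remains to check $\beta_1^{(t+1)}<1$ and $\beta_2^{(t+1)}>-1$. These follow from $\beta_1^{(t+1)}<\beta_1^{(t)}<1$ and $\beta_2^{(t+1)}>\beta_2^{(t)}>-1$, which closes the induction. The relative order of $\beta_1,\beta_2$, the one feature distinguishing the two assumptions, plays no role in this argument. The single induction therefore covers both cases simultaneously.

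The main subtlety is making sure no step size, however large, can break the activation pattern. A large $\eta$ could in principle overshoot past a breakpoint. Here, however, each neuron's update is a translation along $(1,1)$ or $(-1,1)$, and \lemref{lem:specialization_invariance} shows this moves the breakpoint monotonically toward $\mp1$ without ever reaching it, for any $x>0$. So the key step is confirming that the gradient has exactly this direction at every iterate, which is guaranteed by the strict inequalities in the invariant.
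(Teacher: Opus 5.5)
Your proposal is correct and follows essentially the same route as the paper: derive the explicit update rules from the gradient under the specialized activation pattern, then apply \lemref{lem:specialization_invariance} to each neuron separately. Your explicit induction on the invariant $w_1^{(t)}>0>w_2^{(t)}$, $\beta_1^{(t)},\beta_2^{(t)}\in(-1,1)$ makes rigorous what the paper leaves implicit, namely that the same update rules apply at every iteration; your reading of the lemma's typo ($b_1$ should be $b_2$) is also right.
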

\begin{proof}
    We begin with evaluating $\Phi(\btheta^{(t)}; x)$ at $x_1$ and at $x_2$.
    \begin{align*}
        \Phi(\btheta^{(t)};x_1) &= \relu{-w_1^{(t)} + b_1^{(t)}} - \relu{-w_2^{(t)} + b_2^{(t)}} = w_2^{(t)} - b_2^{(t)}, \\
        \Phi(\btheta^{(t)}, x_2) &= \relu{w_1^{(t)}+b_1^{(t)}} - \relu{w_2^{(t)} + b_2^{(t)}} = w_1^{(t)}+b_1^{(t)}.
    \end{align*}
    The loss at iteration $t$ is thus given by
    \begin{equation*}
        \Lcal\p{\btheta^{(t)}} = \frac{1}{2}\exp\p{w_2^{(t)}-b_2^{(t)}} + \frac{1}{2}\exp\p{-w_1^{(t)}-b_1^{(t)}}.
    \end{equation*}
    With the above, we compute the partial derivatives as follows
    \begin{align*}
        \frac{\partial\Lcal(\btheta^{(t)})}{\partial w_1^{(t)}} &= -\frac{1}{2}\exp\p{-w_1^{(t)}-b_1^{(t)}}, \\
        \frac{\partial \Lcal(\btheta^{(t)})}{\partial b_1^{(t)}} &= -\frac{1}{2}\exp\p{-w_1^{(t)}-b_1^{(t)}}, \\
        \frac{\partial \Lcal(\btheta^{(t)})}{\partial w_2^{(t)}} &= \frac{1}{2}\exp\p{w_2^{(t)} - b_2^{(t)}}, \\
        \frac{\partial \Lcal(\btheta{(t)})}{\partial b_2^{(t)}} &= -\frac{1}{2}\exp\p{w_2^{(t)} - b_2^{(t)}},
    \end{align*}
    which imply the following gradient descent updates
    \begin{align}
        w_1^{(t+1)} &= w_1^{(t)} + \frac{1}{2}\eta \exp\p{-w_1^{(t)} - b_1^{(t)}}, \label{eq:w_1_simple_dynamics}\\
        b_1^{(t+1)} &= b_1^{(t)} + \frac{1}{2}\eta \exp\p{-w_1^{(t)} - b_1^{(t)}}, \label{eq:b_1_simple_dynamics}\\
        w_2^{(t+1)} &= w_2^{(t)} - \frac{1}{2}\eta \exp\p{w_2^{(t)} - b_2^{(t)}}, \label{eq:w_2_simple_dynamics}\\
        b_2^{(t+1)} &= b_2^{(t)} + \frac{1}{2}\eta \exp\p{w_2^{(t)} - b_2^{(t)}}. \label{eq:b_2_simple_dynamics}
    \end{align}
    Using the above update rules together with \lemref{lem:specialization_invariance}, we obtain the monotonicity of each parameter in the desired direction, as well as  the monotonicity of the breakpoints, concluding the proof of the proposition.
\end{proof}

The following two lemmas bound the growth rate of $w_1^{(t)}$ and $w_2^{(t)}$ when both neurons have specialized.
\begin{lemma}\label{lem:converge_rate_w1}
    Under Assumption~\ref{asm:base_case} or \asmref{asm:neurons_active_only_on_one_point}, for any step size $\eta>0$, define the quantity $C_1 = \exp \left( \eta \exp\p{v_1^{(0)}} \exp\p{-2w_1^{(0)}} \right)$ that depends on the initialization parameters and $\eta$. Suppose that the network is optimized using GD with a fixed step size $\eta$ and the exponential loss. Then, we have that $\lim_{t \rightarrow \infty} w_1^{(t)} = \infty$. Moreover,
    \[
        \frac{1}{2}\ln \left(\exp\p{2w_1^{(0)}} + \eta \exp\p{v_1^{(0)}} \cdot t \right) \leq w_1^{(t)} \leq \frac{1}{2} \ln \left( \exp\p{2w_1^{(0)}} + 2 \eta C_1 \exp\p{v_1^{(0)}} \cdot t \right)
    \]
    for all $t$.
\end{lemma}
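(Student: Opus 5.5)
The plan is to reduce the dynamics of $w_1^{(t)}$ to a scalar recursion of the form handled by \lemref{lem:exp_recursion}. Under \asmref{asm:base_case} or \asmref{asm:neurons_active_only_on_one_point}, \propref{prop:base_case_invariance} guarantees that the specialized activation pattern persists for all $t$. Hence the explicit update rules in Equations~(\ref{eq:w_1_simple_dynamics}) and (\ref{eq:b_1_simple_dynamics}) hold at every iteration.

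The key observation is that $w_1^{(t)}$ and $b_1^{(t)}$ receive identical increments at each step. Therefore $v_1^{(t)}=w_1^{(t)}-b_1^{(t)}$ is invariant, i.e., $v_1^{(t)}=v_1^{(0)}$ for all $t$. This lets me eliminate $b_1^{(t)}$ via $b_1^{(t)}=w_1^{(t)}-v_1^{(0)}$, which gives $-w_1^{(t)}-b_1^{(t)}=-2w_1^{(t)}+v_1^{(0)}$. The update for $w_1$ then becomes the closed recursion
\[
    w_1^{(t+1)} = w_1^{(t)} + \tfrac{1}{2}\eta\exp\p{v_1^{(0)}}\exp\p{-2w_1^{(t)}}.
\]

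Next I invoke \lemref{lem:exp_recursion} with $a_t=w_1^{(t)}$, $C_2=2$ and positive coefficient $\tfrac12\eta\exp(v_1^{(0)})$. The product $C_2$ times this coefficient equals $\eta\exp(v_1^{(0)})$.
\begin{itemize}
    \item The lower bound of the lemma reads $\tfrac12\ln\p{\exp(2w_1^{(0)})+\eta\exp(v_1^{(0)})\,t}$, exactly as claimed.
    \item The upper bound carries the factor $\exp\p{\eta\exp(v_1^{(0)})\exp(-2w_1^{(0)})}$, which is precisely the quantity $C_1$ in the statement. This yields $\tfrac12\ln\p{\exp(2w_1^{(0)})+\eta C_1\exp(v_1^{(0)})\,t}$.
    \item Since the logarithm is monotone and $C_1,\eta,t\ge0$, this is at most the stated bound with the extra factor $2$.
\end{itemize}

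Finally, $\lim_{t\to\infty}w_1^{(t)}=\infty$ follows directly from the lower bound, because its argument grows linearly in $t$.

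The only genuinely nontrivial step is the invariance of the activation pattern, and \propref{prop:base_case_invariance} already supplies it. What remains is to check carefully that the constants of \lemref{lem:exp_recursion} match. I do not anticipate a real obstacle.
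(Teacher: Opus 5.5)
Your proposal is correct and follows the paper's proof essentially step for step. Both use the invariance of $v_1^{(t)}=w_1^{(t)}-b_1^{(t)}$ to reduce the $w_1$ update to a closed scalar recursion, and then apply \lemref{lem:exp_recursion}. Your choice $C_2=2$, rather than the $C_2=\frac12$ written in the paper, is the instantiation that actually yields the stated bounds, and you are right that the factor $2$ in the upper bound is slack.
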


\begin{proof}
    Denote $E_1^{(t)} \coloneqq w_1^{(t+1)} - w_1^{(t)} \overset{\eqref{eq:w_1_simple_dynamics}}{=} \eta \frac{1}{2} \exp(-w_1^{(t)} -b_1^{(t)})$.
    We observe that by Equations~(\ref{eq:w_1_simple_dynamics},\ref{eq:b_1_simple_dynamics}),
    \[
        w_1^{(t)} - b_1^{(t)} = w_1^{(t-1)}+E_1^{(t-1)}-b_1^{(t-1)}-E_1^{(t-1)} = w_1^{(t-1)} - b_1^{(t-1)},
    \] 
    hence $w_1^{(t)} - b_1^{(t)}$ is constant and it equals $w_1^{(0)} - b_1^{(0)} = v_1^{(0)}$.    
    We rewrite $E_1^{(t)}$ as 
    \[
        E_1^{(t)} = \frac{1}{2}\eta\exp\p{-w_1^{(t)}-b_1^{(t)}} = \frac{1}{2}\eta \exp\p{-2w_1^{(t)} + v_1^{(0)}} = \frac{1}{2}\eta\exp\p{v_1^{(0)}} \exp\p{-2w_1^{(t)}},
    \]
    yielding 
    \[
        w_1^{(t+1)} = w_1^{(t)} + \frac{1}{2} \eta \exp\p{v_1^{(0)}} \exp\p{-2w_1^{(t)}}.
    \] 
    Using \lemref{lem:exp_recursion} with $C_1 = \frac{1}{2} \eta \exp(v_1^{(0)})$ and $C_2 = \frac{1}{2}$ concludes the proof.
\end{proof}

\begin{lemma}\label{lem:converge_rate_w2}
    Under Assumption~\ref{asm:base_case} or \asmref{asm:neurons_active_only_on_one_point}, for any step size $\eta>0$, define the quantity $C_2 = \exp \left( \eta \exp\p{-u_2^{(0)}} \exp\p{2w_2^{(0)}} \right)$ that depends on the initialization parameters and $\eta$. Suppose that the network is optimized using GD with a fixed step size $\eta$ and the exponential loss. Then, we have that $\lim_{t \rightarrow \infty} w_2^{(t)} = -\infty$. Moreover,
    \[
        -\frac{1}{2} \ln \left( \exp\p{-2w_2^{(0)}} +  2\eta \exp\p{-u_2^{(0)}} C_2 \cdot t\right) \leq w_2^{(t)} \leq  -\frac{1}{2} \ln \left( \exp\p{-2w_2^{(0)}} + \eta \exp\p{-u_2^{(0)}} \cdot t\right),
    \]
    for all $t$.
\end{lemma}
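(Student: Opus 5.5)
The plan mirrors the proof of \lemref{lem:converge_rate_w1}, with the roles of $w_1,b_1$ replaced by $w_2,b_2$. The idea is to find an invariant that reduces the two-dimensional update to a scalar recursion in $w_2^{(t)}$ alone, and then apply \lemref{lem:exp_recursion}.

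First, by \propref{prop:base_case_invariance}, under either assumption the activation pattern is preserved for all $t$. Hence the updates in Equations~(\ref{eq:w_2_simple_dynamics},\ref{eq:b_2_simple_dynamics}) hold at every iteration. Denote $E_2^{(t)}\coloneqq\frac12\eta\exp(w_2^{(t)}-b_2^{(t)})>0$, so that $w_2^{(t+1)}=w_2^{(t)}-E_2^{(t)}$ and $b_2^{(t+1)}=b_2^{(t)}+E_2^{(t)}$. The increments cancel in the sum, so $u_2^{(t)}=w_2^{(t)}+b_2^{(t)}$ is constant and equal to $u_2^{(0)}$. Substituting $b_2^{(t)}=u_2^{(0)}-w_2^{(t)}$ gives
\[
w_2^{(t)}-b_2^{(t)}=2w_2^{(t)}-u_2^{(0)},
\]
and therefore
\[
w_2^{(t+1)} = w_2^{(t)} - \tfrac12\eta\exp\p{-u_2^{(0)}}\exp\p{2w_2^{(t)}}.
\]

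Next, to match the form $a_{t+1}=a_t+C\exp(-C'a_t)$ of \lemref{lem:exp_recursion}, set $a_t\coloneqq -w_2^{(t)}$. This gives $a_{t+1}=a_t+\frac12\eta\exp(-u_2^{(0)})\exp(-2a_t)$, a recursion with positive constant $C=\frac12\eta\exp(-u_2^{(0)})$ and $C'=2$. The positive case of \lemref{lem:exp_recursion} then yields
\[
\tfrac12\ln\p{\exp\p{2a_0}+\eta\exp\p{-u_2^{(0)}}t}\le a_t\le \tfrac12\ln\p{\exp\p{2a_0}+\eta\exp\p{-u_2^{(0)}}\exp\p{\eta\exp\p{-u_2^{(0)}}\exp\p{-2a_0}}t}.
\]
Here $\exp(-2a_0)=\exp(2w_2^{(0)})$, so the inner exponential factor is exactly the quantity $C_2$ defined in the statement. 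Negating, and recalling $a_0=-w_2^{(0)}$, produces the two-sided bound on $w_2^{(t)}$, with the upper and lower inequalities swapping. The lower bound on $a_t$ grows like $\frac12\ln t$, so $w_2^{(t)}\to-\infty$.

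The only delicate point is constant bookkeeping. The direct recursion produces a factor $1$ in front of $\eta\exp(-u_2^{(0)})C_2 t$, whereas the statement has a factor $2$. Since the $\ln$ is increasing and this term sits inside the lower bound on $w_2^{(t)}$ (it becomes $-\frac12\ln(\cdot)$ after negation), enlarging it only weakens the bound. So the stated inequality follows a fortiori, exactly as in the stated form of \lemref{lem:converge_rate_w1}. I would also double-check that applying the $C_1>0$ branch of \lemref{lem:exp_recursion} to $a_t=-w_2^{(t)}$ is legitimate, which it is because $\eta\exp(-u_2^{(0)})>0$ for every $\eta>0$.
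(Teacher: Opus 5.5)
Your proposal is correct and follows the paper's proof essentially verbatim. Both use the invariance of $u_2^{(t)}=w_2^{(t)}+b_2^{(t)}$ to reduce the dynamics to the scalar recursion $-w_2^{(t+1)}=-w_2^{(t)}+\frac12\eta\exp(-u_2^{(0)})\exp(2w_2^{(t)})$, and then apply \lemref{lem:exp_recursion} with constants $\frac12\eta\exp(-u_2^{(0)})$ and $2$. Your remark that this direct application yields a factor $1$ rather than $2$ in front of $\eta\exp(-u_2^{(0)})C_2 t$, so that the stated lower bound holds a fortiori, is a step the paper leaves implicit, and you handled it correctly.
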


\begin{proof}
    We denote $E_2^{(t)} \coloneqq -w_2^{(t+1)} + w_2^{(t)} = \eta \frac{1}{2} \exp(w_2^{(t)} - b_2^{(t)})$. We observe that by Equations~(\ref{eq:w_2_simple_dynamics},\ref{eq:b_2_simple_dynamics}),
    \[
        -w_2^{(t)} - b_2^{(t)} = -w_2^{(t-1)} + E_2^{(t-1)} - b_2^{(t-1)} - E_2^{(t-1)} = -w_2^{(t-1)} - b_2^{(t-1)},
    \]
    hence $-w_2^{(t)}-b_2^{(t)}$ is a constant and it equals $-w_2^{(0)} - b_2^{(0)} = -u_2^{(0)}$.
    We rewrite $E_2^{(t)}$ as 
    \[
        E_2^{(t)} = \frac{1}{2}\eta \exp\p{w_2^{(t)} - b_2^{(t)}} = \frac{1}{2}\eta \exp\p{2w_2^{(t)} - u_2^{(0)}} = \frac{1}{2}\eta\exp\p{-u_2^{(0)}}\exp\p{2w_2^{(t)}},
    \]
    yielding
    \[
        -w_2^{(t+1)} = -w_2^{(t)} + \frac{1}{2}\eta  \exp\p{-u_2^{(0)}} \exp\p{2w_2^{(t)}}.
    \]
    Using \lemref{lem:exp_recursion} with $C_1 = \frac{1}{2}\eta \exp(-u_2^{(0)})$ and $C_2 = 2$ concludes the proof.
\end{proof}

\subsubsection{Both neurons are active on an interval}\label{sec:base_case}
We analyze the dynamics of the network under Assumption~\ref{asm:base_case}.

The margin is defined as the minimal distance between a training instance and the decision boundary. In this particular subsection of the appendix, the decision boundary is the intersection point of the network with the $x$-axis, so by analyzing the dynamics of the intersection point, we are able to analyze the dynamics of the margin. Recall our shorthand for the intersection point after $t$ GD steps, $x^\star(t)$. Under the assumption that both neurons are active on a non-empty set ($\beta_1^{(0)}<\beta_2^{(0)}$), the intersection point is distinct, and it satisfies the equality
\[
    w_1^{(t)}x^\star(t) + b_1^{(t)} = w_2^{(t)}x^\star(t) + b_2^{(t)}.
\]
Solving for $x^{\star}(t)$, we obtain
\begin{equation}
    x^\star(t) = -\,\frac{b_1^{(t)} - b_2^{(t)}}{w_1^{(t)} - w_2^{(t)}}. \label{eq:x_star}
\end{equation}

We note that under Assumption~\ref{asm:base_case} $x^{\star}(t)$ is well-defined for all $t\ge 0$.

Our key proposition in this subsection of the appendix is the following, which establishes tight asymptotic bounds on the rate of convergence to an optimal margin.
\begin{proposition}\label{prop:slow_margin_convergence}
    Under Assumption~\ref{asm:base_case}, consider the optimization dynamics of GD on the exponential loss. For any step size $\eta>0$, suppose that $v_1^{(t)} + u_2^{(t)} \neq 0$. Then,
    \[
        x^\star(t) = \Theta\p{\frac{1}{\ln(t)}}.
    \]
\end{proposition}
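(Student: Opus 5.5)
Under Assumption~\ref{asm:base_case}, Proposition~\ref{prop:base_case_invariance} shows that the activation pattern is invariant for all $t$. Hence the explicit updates in Equations~(\ref{eq:w_1_simple_dynamics}--\ref{eq:b_2_simple_dynamics}) hold at every iteration, and $x^\star(t)$ is given by \eqref{eq:x_star}. I would bound the numerator $b_2^{(t)}-b_1^{(t)}$ and the denominator $w_1^{(t)}-w_2^{(t)}$ separately.

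\textbf{Denominator.} By Lemmas~\ref{lem:converge_rate_w1} and \ref{lem:converge_rate_w2}, both $w_1^{(t)}$ and $-w_2^{(t)}$ lie between $\frac12\ln(c+c't)$ and $\frac12\ln(c+c''t)$, with constants depending only on the initialization and $\eta$. Hence $w_1^{(t)}-w_2^{(t)}=\ln t+O(1)=\Theta(\ln t)$.

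\textbf{Numerator.} The proofs of those lemmas give two invariants: $v_1^{(t)}=v_1^{(0)}$ and $u_2^{(t)}=u_2^{(0)}$. This gives $b_1^{(t)}=w_1^{(t)}-v_1^{(0)}$ and $b_2^{(t)}=u_2^{(0)}-w_2^{(t)}$, so
\[
b_2^{(t)}-b_1^{(t)} = v_1^{(0)}+u_2^{(0)}-\big(w_1^{(t)}+w_2^{(t)}\big).
\]
The task reduces to showing that $s_t\coloneqq w_1^{(t)}+w_2^{(t)}$ converges to $(v_1^{(0)}+u_2^{(0)})/2$, which is \eqref{eq:bias_limit_body}. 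Set $p_t\coloneqq 2w_1^{(t)}-v_1^{(0)}$ and $q_t\coloneqq -2w_2^{(t)}+u_2^{(0)}$. The updates become $p_{t+1}=p_t+\eta e^{-p_t}$ and $q_{t+1}=q_t+\eta e^{-q_t}$. Both sequences follow the same map $f(z)=z+\eta e^{-z}$, and the target is $d_t\coloneqq p_t-q_t\to0$, since $d_t=2s_t-(v_1^{(0)}+u_2^{(0)})$. By the mean value theorem, $d_{t+1}=f'(\xi_t)\,d_t$ with $f'(\xi)=1-\eta e^{-\xi}$ and $\xi_t$ between $p_t$ and $q_t$.

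This is the step I expect to be the main obstacle, and I would handle it as follows.
\begin{itemize}
\item Since $p_t,q_t\ge\ln(\eta t)-O(1)$ by \lemref{lem:exp_recursion}, we have $\eta e^{-\xi_t}\in(0,1)$ for large $t$. So $|d_t|$ is eventually nonincreasing and never changes sign.
\item The ratio satisfies $\prod_t f'(\xi_t)\le\exp\big(-\sum_t\eta e^{-\xi_t}\big)$.
\item Using the upper bound $\xi_t\le\ln(C+C't)$, also from \lemref{lem:exp_recursion}, gives $\eta e^{-\xi_t}\gtrsim 1/t$.
\item The sum diverges, so $d_t\to0$. Monotonicity of $f$ keeps the ordering of $p_t,q_t$ for small steps, and finitely many early steps only multiply $d_t$ by a bounded factor.
\end{itemize}

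Consequently $b_2^{(t)}-b_1^{(t)}\to(v_1^{(0)}+u_2^{(0)})/2$, which is a nonzero constant by the hypothesis $v_1^{(t)}+u_2^{(t)}\neq0$; this quantity is invariant in $t$, as shown above. Dividing the numerator limit by the $\Theta(\ln t)$ denominator gives $|x^\star(t)|=\Theta(1/\ln t)$. The sign of $x^\star(t)$ is eventually that of $v_1^{(0)}+u_2^{(0)}$, so the $\Theta$ statement should be read in absolute value.
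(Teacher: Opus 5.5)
Your proof is correct. Its outer structure matches the paper: invariance of the activation pattern, $w_1^{(t)}-w_2^{(t)}=\Theta(\ln t)$ from the growth lemmas, and the equilibrium limit of the numerator. The difference is in how you prove that equilibrium limit, \lemref{lem:limit_of_sum}.

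\textbf{The paper's route.} It writes $S_{t+1}-S_t=\alpha_t g(S_t)$ with $\alpha_t\propto e^{-D_t}=\Theta(1/t)$. It then runs a Robbins--Monro-style Lyapunov argument on $V_t=(S_t-S^\star)^2$. This needs three separate facts: $S_t$ is bounded, $\sum_t\alpha_t=\infty$, and $\sum_t\alpha_t^2<\infty$. A final contradiction argument rules out $\lim V_t>0$.

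\textbf{Your route.} The substitution $p_t=2w_1^{(t)}-v_1^{(0)}$, $q_t=-2w_2^{(t)}+u_2^{(0)}$ turns the two neurons into two orbits of the same scalar map $f(z)=z+\eta e^{-z}$. Indeed, the paper's increment $\alpha_t g(S_t)$ is exactly a multiple of $e^{-p_t}-e^{-q_t}$. The gap therefore obeys the exact multiplicative recursion $d_{t+1}=(1-\eta e^{-\xi_t})d_t$, with no need for boundedness or square-summability.

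\textbf{What your route gives beyond the lemma.} Under \asmref{asm:base_case} you have $p_0=u_1^{(0)}>0$ and $q_0=-v_2^{(0)}>0$, and both sequences increase. Hence $1-\eta e^{-\xi_t}\in(0,1)$ at every $t$ once $\eta\le 1$, not just eventually. So the convergence is monotone from $t=0$. This recovers \lemref{lem:monotone_limit} without its zero-bias hypothesis and under a weaker step-size condition. Your bound $\prod_t(1-\eta e^{-\xi_t})\le\exp(-\sum_t\eta e^{-\xi_t})$, with $\eta e^{-\xi_t}\gtrsim 1/t$, also gives an explicit polynomial rate $|d_t|=O(t^{-c})$ for some $c>0$. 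The paper's argument provides no such rate.

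\textbf{What the paper's route gives.} Its template is more generic: it only uses that $g$ is decreasing with a unique root. Here the shared-map structure makes that generality unnecessary.

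Your remark that the $\Theta$ statement should be read in absolute value is consistent with the paper. The paper treats only $v_1^{(0)}+u_2^{(0)}>0$ and calls the other sign symmetric.
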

The proof of the proposition relies on a few auxiliary lemmas. In what follows, we first establish these required claims, and then turn to prove the proposition.

The following lemma establishes that as $t$ tends to infinity, the system reaches an equilibrium in which the difference of the biases equals the sum of the weights of the neurons.
\begin{lemma}\label{lem:limit_of_sum}
    Under Assumption~\ref{asm:base_case}, consider the optimization dynamics of GD on the exponential loss with any step size $\eta>0$. Then, 
    \[
        \lim _{t \to \infty} b_2^{(t)} - b_1^{(t)} = \lim _{t \to \infty} w_1^{(t)} + w_2^{(t)} = \frac{v_1^{(0)} + u_2^{(0)}}{2}.
    \]
\end{lemma}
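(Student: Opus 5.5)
We first reduce the problem to two scalar recursions using the invariants already found in the proofs of \lemref{lem:converge_rate_w1} and \lemref{lem:converge_rate_w2}. Under Assumption~\ref{asm:base_case}, Proposition~\ref{prop:base_case_invariance} guarantees that the specialized update rules hold for every $t$. Along these updates $w_1^{(t)}-b_1^{(t)}=v_1^{(0)}$ and $w_2^{(t)}+b_2^{(t)}=u_2^{(0)}$ for all $t$, so $b_1^{(t)}=w_1^{(t)}-v_1^{(0)}$ and $b_2^{(t)}=u_2^{(0)}-w_2^{(t)}$. Hence
\[
b_2^{(t)}-b_1^{(t)} = u_2^{(0)}+v_1^{(0)}-\p{w_1^{(t)}+w_2^{(t)}}.
\]
It therefore suffices to prove that $s_t\coloneqq w_1^{(t)}+w_2^{(t)}$ converges to $\frac{v_1^{(0)}+u_2^{(0)}}{2}$. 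Given that, the bias difference converges to the same value by the identity above.

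Next we derive a recursion for $s_t$. Set $p_t\coloneqq 2w_1^{(t)}-v_1^{(0)}$, which equals $w_1^{(t)}+b_1^{(t)}$, and $q_t\coloneqq -2w_2^{(t)}+u_2^{(0)}$, which equals $-(w_2^{(t)}-b_2^{(t)})$. The updates become $p_{t+1}=p_t+\eta e^{-p_t}$ and $q_{t+1}=q_t+\eta e^{-q_t}$, which is the same map $g(z)=z+\eta e^{-z}$ applied to both sequences. Moreover $2s_t-(v_1^{(0)}+u_2^{(0)})=p_t-q_t$. The claim is therefore equivalent to $p_t-q_t\to0$.

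To prove this, we note that $g'(z)=1-\eta e^{-z}$, so for large $z$ the map $g$ is a contraction whose factor is close to $1$. By the mean value theorem,
\[
p_{t+1}-q_{t+1}=(p_t-q_t)\p{1-\eta e^{-\xi_t}}
\]
for some $\xi_t$ between $p_t$ and $q_t$. By \lemref{lem:converge_rate_w1} and \lemref{lem:converge_rate_w2} (equivalently \lemref{lem:exp_recursion}), both $p_t$ and $q_t$ are positive for all $t$, since the loss is below $0.5$ and the specialized neurons classify correctly. They also satisfy $p_t,q_t=\ln(\eta t)+O(1)$. Consequently $e^{-\xi_t}\ge c/(t+T)$ for constants $c,T>0$ depending only on the initialization, and $\eta e^{-\xi_t}\le \eta e^{-\min(p_0,q_0)}$.

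The main obstacle is that the factor $1-\eta e^{-\xi_t}$ could be negative when $\eta$ is large. We handle this first. Once $p_t,q_t>\ln\eta$, which holds after finitely many steps since both tend to infinity, the factor lies in $(0,1)$. Before that there are only finitely many steps, so the difference remains finite. From then on,
\[
|p_t-q_t|\le |p_{t_0}-q_{t_0}|\prod_{k\ge t_0}\p{1-\frac{c\eta}{k+T}}.
\]
This product diverges to $0$ because the harmonic series diverges. In fact it decays like $t^{-c\eta}$, which is at least polynomially fast. We conclude that $p_t-q_t\to0$, and hence both limits in the statement equal $\frac{v_1^{(0)}+u_2^{(0)}}{2}$.
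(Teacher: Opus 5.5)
Your proof is correct, and it takes a genuinely different route from the paper.

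\textbf{The paper's route.} The paper stays with $S_t=w_1^{(t)}+w_2^{(t)}$ and writes $S_{t+1}-S_t=\alpha_t g(S_t)$. Here the step $\alpha_t=\eta\exp(-D_t)$ varies in time, with $D_t=w_1^{(t)}-w_2^{(t)}$, and $g$ is strictly decreasing with root $S^\star$. It then runs a Robbins--Monro type argument:
\begin{itemize}
\item Lemmas~\ref{lem:converge_rate_w1} and~\ref{lem:converge_rate_w2} give $\alpha_t=\Theta(1/t)$, hence $\sum_t\alpha_t=\infty$ and $\sum_t\alpha_t^2<\infty$.
\item Boundedness of $S_t$ is proved separately.
\item The Lyapunov quantity $V_t=(S_t-S^\star)^2$ is shown to converge, and a positive limit is excluded by contradiction.
\end{itemize}

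\textbf{Your route.} You pass to the two per-instance margins $p_t=w_1^{(t)}+b_1^{(t)}$ and $q_t=b_2^{(t)}-w_2^{(t)}$. You observe that they are two orbits of the same autonomous map $z\mapsto z+\eta e^{-z}$, with $2(S_t-S^\star)=p_t-q_t$. You then close with a mean-value contraction whose defect satisfies $\eta e^{-\xi_t}\ge c\eta/(t+T)$ and is therefore non-summable.

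\textbf{What each buys.} Exploiting this symmetry gives you more than the paper's argument:
\begin{itemize}
\item No boundedness or square-summability step is needed.
\item You get an explicit polynomial rate $|S_t-S^\star|=O(t^{-c\eta})$ for the approach to equilibrium, which the paper's argument does not provide.
\item Since $p_t,q_t>0$ under Assumption~\ref{asm:base_case}, for every $\eta\le 1$ the factor $1-\eta e^{-\xi_t}$ lies in $(0,1)$ already from $t=0$. The convergence is therefore monotone. This recovers Lemma~\ref{lem:monotone_limit} without its hypothesis $b_1^{(0)}=b_2^{(0)}$, and with $\eta\le 0.5$ relaxed to $\eta\le 1$.
\end{itemize}
In exchange, the paper's Lyapunov argument is more generic. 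It uses only $(S-S^\star)g(S)<0$ and Robbins--Monro step sizes, not the special fact that both neurons follow the same scalar recursion.

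\textbf{A minor correction.} Positivity of $p_t,q_t$ follows directly from $\beta_1^{(0)}<1$ and $\beta_2^{(0)}>-1$ together with monotonicity. It does not come from a loss-below-$0.5$ condition, which Assumption~\ref{asm:base_case} does not contain. In any case, your limit argument does not need positivity; only the monotonicity bonus uses it.
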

\begin{proof}
    First, observe that by the GD updates in Equations~(\ref{eq:w_1_simple_dynamics}--\ref{eq:b_2_simple_dynamics}), we have that for all $t\ge 0$,
    \begin{equation}\label{eq:stability_invariant}
        w_1^{(t)} - b_1^{(t)} + w_2^{(t)} + b_2^{(t)} = w_1^{(0)} - b_1^{(0)} + w_2^{(0)} + b_2^{(0)} = v_1^{(0)} + u_2^{(0)},
    \end{equation}
    Thus proving any of the limits in the statement of the lemma immediately implies the other. 
    It will therefore suffice to prove the lemma for $w_1^{(t)}+w_2^{(t)}$.
    
    For the sake of brevity, we denote $S^\star \coloneqq \frac{v_1^{(0)} + u_2^{(0)}}{2}$ and $S_t \coloneqq w_1^{(t)} + w_2^{(t)}$. Compute
    \begin{align*}
        S_{t+1} &= w_1^{(t+1)} + w_2^{(t+1)} \\
        &= w_1^{(t)} + \eta \exp(-2w_1^{(t)} + v_1^{(0)}) + w_2^{(t)} - \eta \exp(2w_2^{(t)} - u_2^{(0)}) \\
        &= S_t + \eta \left[ \exp(-2w_1^{(t)} + v_1^{(0)}) - \exp(2w_2^{(t)} - u_2^{(0)}) \right].
    \end{align*}
    Denote $D_{t} \coloneqq w_1^{(t)} - w_2^{(t)} > 0$. We can rewrite $w_1^{(t)} = \frac{1}{2}(S_t + D_t)$ and $w_2^{(t)} = \frac{1}{2}(S_t - D_t)$, and therefore we can also rewrite $S_{t+1} - S_t$ using $v_1^{(0)}$, $u_2^{(0)}$ , $S_t$ and $D_t$ as follows
    \begin{align}
        S_{t+1} - S_t &= \eta \left[ \exp(-2w_1^{(t)} + v_1^{(0)}) - \exp(2w_2^{(t)} - u_2^{(0)}) \right] \nonumber \\
        &= \eta \left[ \exp(-S_t -D_t + v_1^{(0)}) - \exp(S_t - D_t - u_2^{(0)}) \right] \nonumber \\
        &= \eta \exp(-D_t)\left[ \exp(v_1^{(0)} - S_t) - \exp(S_t - u_2^{(0)})\right] \nonumber \\
        &= \eta \exp(-D_t) g(S_t), \label{eq:s_t}
    \end{align}
    where $g(s) \coloneqq \exp(v_1^{(0)}-s) - \exp(s - u_2^{(0)})$.
    We observe that by a simple substitution, we have $g(S^\star) = 0$, and that the derivative of $g$ satisfies $g^\prime(s) = -\exp(v_1^{(0)} - s) - \exp(s - u_2^{(0)}) < 0$. Since $g(s)$ is strictly decreasing and continuous, its root $S^{\star}$ is unique. From $g(S^\star) = 0$ and the monotonicity of $g$, it follows that $g(S) > 0$ for all $S < S^\star$, and $g(S) < 0$ for all $S > S^\star$, and thus the following inequality holds for all $S\neq S^{\star}$:
    \begin{equation}\label{eq:gs_sign}
        (S - S^\star) g(S) < 0.
    \end{equation}
    Define the shorthand $\alpha_t \coloneqq \eta \exp(-D_t)$. Using Lemmas~\ref{lem:converge_rate_w1} and \ref{lem:converge_rate_w2}, and their defined quantities $C_1$ and $C_2$, we can bound the growth rate of $D_t$, to obtain
    \[
        \frac{1}{\sqrt{(K_2 + \tilde{G}_2t)(K_1 + \tilde{G}_1t)}} \leq \exp(-D_t) \leq \frac{1}{\sqrt{(K_2 + G_2t)(K_1 + G_1t)}},
    \]
    where $K_1 = \exp(2w_1^{(0)})$, $K_2=\exp(-2w_2^{(0)})$, $G_1 = \eta \exp(v_1^{(0)})$, $G_2 = \eta \exp(-u_2^{(0)})$, $\tilde{G}_1 = 2 \eta C_1\exp(v_1^{(0)})$ and $\tilde{G}_2 = \eta C_2 \exp(-u_2^{(0)})$. For all $t\ge1$, the above implies
    \[
        \frac{1}{t\sqrt{(K_2 + \tilde{G}_2)(K_1 + \tilde{G}_1)}} \leq \exp(-D_t) \leq \frac{1}{t\sqrt{G_2G_1}},
    \]
    and therefore $\exp(-D_t) = \Theta \left( \frac{1}{t}\right)$, which subsequently implies that
    \begin{align}
        & \sum_{t=0}^\infty \alpha_t = \infty, \label{eq:sum_alpha} \\
        & \sum_{t=0}^\infty \alpha_t^2 < \infty. \label{eq:sum_alpha_square}
    \end{align}

    We proceed to show that $S_t$ is bounded. By Lemmas~\ref{lem:converge_rate_w1} and \ref{lem:converge_rate_w2} again, there exist $c_1,c_2,c_3,c_4>0$ and $c'_1,c'_2,c'_3,c'_4>0$ such that
    \[
        0.5\ln(c_1+c_2t) - 0.5\ln(c_3+c_4t) \le S_t = w_1^{(t)}+w_2^{(t)} \le 0.5\ln(c'_1+c'_2t) - 0.5\ln(c'_3+c'_4t)
    \]
    for all $t\ge0$. Define
    \[
        f(t) \coloneqq \ln(c_1+c_2t) - \ln(c_3+c_4t) = \ln\p{\frac{c_1+c_2t}{c_3+c_4t}}.
    \]
    It suffices to show that the natural logarithm of any such rational function is bounded, since this would imply both a lower and an upper bound on $S_t$ for all $t$. To this end, compute
    \[
        f'(t) = \frac{c_2(c_3 + c_4t) - c_4(c_1 + c_2t)}{(c_3 + c_4t)^2} = \frac{c_2c_3 - c_4c_1}{(c_3 + c_4t)^2}.
    \]
    Since the numerator is constant and the denominator is positive, $f$ is monotone, and its value is therefore bounded between $f(0)=\ln(c_1/c_3)$ and $\lim_{t\to\infty}f(t)$. Compute
    \[
        \lim_{t\to\infty}f(t) = \ln\p{\frac{c_1+c_2t}{c_3+c_4t}} = \ln\p{\frac{c_2}{c_4}}.
    \]
    Thus, $S_t$ is bounded.
    
    
    We move on to show that $S_t$ is convergent. Define $V_t \coloneqq (S_t - S^\star)^2$. Expanding $V_{t+1}$ using \eqref{eq:s_t}, we obtain
    \[
        V_{t+1} = (S_t - S^\star + \alpha_t g(S_t))^2 = V_t + 2 \alpha_t (S_t - S^\star) g(S_t) + \alpha_t^2 g(S_t)^2.
    \]
    Summing from $0$ to $T-1$ yields
    \[
        V_T = V_0 + \sum_{i=0}^{T-1}2\alpha_i(S_i - S^\star)g(S_i) + \sum_{i=0}^{T-1}\alpha_i^2g(S_i)^2 = V_0 + \sum_{i=0}^{T-1}A_i + \sum_{i=0}^{T-1}B_i,
    \]
    where $A_i \coloneqq 2\alpha_i(S_i - S^\star)g(S_i) \leq 0$ by \eqref{eq:gs_sign}, and $B_i \coloneqq \alpha_i^2 g(S_i)^2 \geq 0$.
    Rearranging, we obtain
    \[
        V_T + \sum_{i=0}^{T-1}(-A_i) = V_0 + \sum_{i=0}^{T-1}B_i.
    \]
    Since $S_t$ is bounded and $g(s)$ is continuous, $(g(S_t))^2\le M$ is bounded by some $M>0$ for all $t\ge0$; and because $\sum_{t=0}^{\infty}\alpha_t^2<\infty$ by \eqref{eq:sum_alpha_square}, we deduce that $\sum_{i=0}^\infty B_i < \infty$. Rearranging the above displayed equation again, we have
    \[
        V_T + \sum_{i=0}^{T-1}-A_i \leq V_0 + \sum_{i=0}^\infty B_i < \infty.
    \]
    Since both terms on the left-hand side are non-negative, they must be bounded.
    In particular, this implies that
    \[
        \sum_{i=0}^\infty -A_i < \infty,
    \]
    and by multiplying by $-1$ that
    \[
        \sum_{i=0}^\infty A_i > -\infty.
    \]
    Since
    \[
        V_T = V_0 + \sum_{i=0}^{T-1}A_i + \sum_{i=0}^{T-1}B_i
    \]
    and both $\sum A_i$ and $\sum B_i$ are convergent, then $V_t$ must also converge.
    
    Suppose by contradiction that $\lim V_t > 0$. Then, there exist $\delta > 0$ and $t_0\ge0$, such that for all $t \ge t_0$, $|S_t - S^\star| > \delta$. Combining this with \eqref{eq:gs_sign}, we have that for all $t \ge t_0$,
    \[
        (S_t - S^\star) g(S_t) \leq -c < 0
    \]
    for some constant $c > 0$. Therefore, for all $t \ge t_0$ we obtain the inequality
    \[
        V_{t+1} -V_t = A_t + B_t \le -2c \alpha_t + M \alpha_t^2.
    \]
    Since $\alpha_t \to 0$, there exists $t_1 \ge t_0$ such that $M \alpha_t^2 \le c \alpha_t$ for all $t \ge t_1$. We now have for all $t \ge t_1$ that
    \[
        V_{t+1} - V_t < -c \alpha_t.
    \]
    Summing the above from $t_1$ to $T$, we get
    \[
        V_T < V_{t_1} - c \sum_{i=t_1}^T \alpha_i.
    \]
    By \eqref{eq:sum_alpha}, $\sum \alpha_t = \infty$ is divergent, thus the right-hand side tends to $-\infty$ as $T \to \infty$, implying that for sufficiently large $T_0$ we have $V_{T_0} < 0$, contradicting the definition of $V_t = (S_t - S^\star)^2$.
    
    Overall, we obtained that $\lim_{t\to\infty} V_t = 0$, which implies that 
    \[
        \lim_{t\to\infty} S_t = S^\star,
    \]
    and the lemma follows.
\end{proof}

With the above lemmas, we are ready to prove the proposition.

\begin{proof}[Proof of Proposition~\ref{prop:slow_margin_convergence}]
    By \propref{prop:base_case_invariance}, we obtain that the training dynamics are invariant under \asmref{asm:base_case}. In particular, the GD updates remain the same throughout the entire optimization process.
    Using Lemmas~\ref{lem:converge_rate_w1} and \ref{lem:converge_rate_w2}, we deduce that
    \begin{equation}\label{eq:ws_log_growth}
        w_1^{(t)} - w_2^{(t)} = \Theta\p{\ln(t)} \underset{t\to\infty}{\longrightarrow} \infty.
    \end{equation}
    Next, by \lemref{lem:limit_of_sum} we get that
    \[
        \lim_{t \to \infty}b_2^{(t)} - b_1^{(t)} = \frac{v_1^{(0)} + u_2^{(0)}}{2} \neq 0,
    \]
    which is a constant independent of $t$ by our assumption. 
    Overall, we obtain that
    \begin{equation}\label{eq:intersects_zero}
        \lim_{t \to \infty} x^\star(t) \overset{\text{\eqref{eq:x_star}}}{=} \lim_{t \to \infty} -\frac{b_1^{(t)} - b_2^{(t)}}{w_1^{(t)} - w_2^{(t)}} = \lim_{t \to \infty} \frac12\frac{v_1^{(0)} + u_2^{(0)}}{w_1^{(t)} - w_2^{(t)}} = 0.
    \end{equation}
    We now turn to analyze the convergence rate. Assume without loss of generality that $v_1^{(0)} + u_2^{(0)} > 0$ (the other direction is symmetric). By \lemref{lem:limit_of_sum}, there exists large enough $t_0\ge0$ such that for all $t \ge t_0$, we have
    \[
        \frac{v_1^{(0)} + u_2^{(0)}}{4} \le b_2^{(t)} - b_1^{(t)} \le \frac{3(v_1^{(0)} + u_2^{(0)})}{4}.
    \]
    Plugging the above in \eqref{eq:x_star}, while recalling that $w_1^{(t)} - w_2^{(t)} > 0$ for all $t\ge0$, and using \eqref{eq:ws_log_growth}, we obtain
    \[
        x^\star(t) \ge \frac{v_1^{(0)} + u_2^{(0)}}{4\p{w_1^{(t)} - w_2^{(t)}}} \ge \Omega\p{\frac{1}{\ln(t)}},
    \]
    for all $t \ge T$.

    Combining the above lower bound with \eqref{eq:intersects_zero}, it follows that $x^\star(t)=\Theta(1/\ln(t))$, concluding the proof of the proposition.
\end{proof}

\subsubsection{Both neurons are inactive on an interval}\label{sec:neurons_active_only_on_one_point}
In this subsection, we analyze the dynamics of the network under Assumption~\ref{asm:neurons_active_only_on_one_point}.

\begin{proposition}\label{prop:specialized_dead_interval}
    Under Assumption~\ref{asm:neurons_active_only_on_one_point}, consider the optimization dynamics of GD on the exponential loss with a fixed step size $\eta>0$. Then, there exists a constant $T\ge0$ that depends only on $\btheta^{(0)}$ and $\eta$, such that after at most $T$ iterations, the network satisfies \asmref{asm:base_case}.
\end{proposition}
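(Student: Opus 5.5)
Under \asmref{asm:neurons_active_only_on_one_point}, \propref{prop:base_case_invariance} applies for every $t$. The weight signs are preserved, each neuron remains active only on its own instance, and the parameters evolve by the explicit rules in Equations~(\ref{eq:w_1_simple_dynamics}--\ref{eq:b_2_simple_dynamics}). Moreover $\beta_1^{(t)}$ is strictly decreasing toward $-1$ and $\beta_2^{(t)}$ is strictly increasing toward $1$. My plan is to show that $\beta_1^{(t)}-\beta_2^{(t)}$ becomes negative after a finite number of steps that we can bound explicitly. Once $\beta_1^{(T)}<\beta_2^{(T)}$, all conditions of \asmref{asm:base_case} hold at iteration $T$ (signs, and both breakpoints in $(-1,1)$), and the claim follows.

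First I write the breakpoints in terms of the conserved quantities found in the proofs of Lemmas~\ref{lem:converge_rate_w1} and \ref{lem:converge_rate_w2}. We have $w_1^{(t)}-b_1^{(t)}=v_1^{(0)}$ and $w_2^{(t)}+b_2^{(t)}=u_2^{(0)}$, so
\[
\beta_1^{(t)}=-\frac{b_1^{(t)}}{w_1^{(t)}}=\frac{v_1^{(0)}}{w_1^{(t)}}-1,\qquad
\beta_2^{(t)}=-\frac{b_2^{(t)}}{w_2^{(t)}}=1-\frac{u_2^{(0)}}{w_2^{(t)}}=1-\frac{u_2^{(0)}}{-\abs{w_2^{(t)}}}.
\]
Here $v_1^{(0)}>0$ because $\beta_1^{(0)}<1$ and $w_1^{(0)}>0$, and $u_2^{(0)}>0$ because $\beta_2^{(0)}>-1$ and $w_2^{(0)}<0$. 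Hence $\beta_1^{(t)}+1=v_1^{(0)}/w_1^{(t)}$ and $1-\beta_2^{(t)}=u_2^{(0)}/\abs{w_2^{(t)}}$. The condition $\beta_1^{(t)}<\beta_2^{(t)}$ is therefore equivalent to
\[
\frac{v_1^{(0)}}{w_1^{(t)}}+\frac{u_2^{(0)}}{\abs{w_2^{(t)}}}<2 .
\]

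Next I use the lower bounds in Lemmas~\ref{lem:converge_rate_w1} and \ref{lem:converge_rate_w2}:
\[
w_1^{(t)}\ge\tfrac12\ln\bigl(\exp(2w_1^{(0)})+\eta\exp(v_1^{(0)})t\bigr),\qquad
\abs{w_2^{(t)}}\ge\tfrac12\ln\bigl(\exp(-2w_2^{(0)})+\eta\exp(-u_2^{(0)})t\bigr).
\]
Both lower bounds tend to infinity. So it suffices to pick $T$ such that each is at least $v_1^{(0)}$ and $u_2^{(0)}$ respectively, with some slack. Concretely, I require $w_1^{(T)}>v_1^{(0)}$ and $\abs{w_2^{(T)}}>u_2^{(0)}$; then each fraction is below $1$ and the sum is below $2$. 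Solving the two logarithmic inequalities gives an explicit threshold:
\[
T=\max\Bigl\{\frac{\exp(2v_1^{(0)})}{\eta\exp(v_1^{(0)})},\ \frac{\exp(2u_2^{(0)})}{\eta\exp(-u_2^{(0)})}\Bigr\}+1 .
\]
This depends only on $\btheta^{(0)}$ and $\eta$. Monotonicity of the breakpoints then guarantees that the ordering $\beta_1^{(t)}<\beta_2^{(t)}$ persists for all $t\ge T$, which is consistent with the statement.

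The step requiring the most care is checking that the activation pattern assumed in \propref{prop:base_case_invariance} really holds throughout the degenerate phase. When $\beta_2\le\beta_1$, there is an interval where neither neuron is active, and the two data points must still avoid the breakpoints. Since both breakpoints stay strictly inside $(-1,1)$ by \lemref{lem:specialization_invariance}, neuron 1 is active only at $x_2$ and neuron 2 only at $x_1$ at every step. The gradients, and hence the update rules, are therefore exactly those used above, and the bound on $T$ is valid.
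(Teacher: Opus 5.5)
Your plan is the same as the paper's. The sufficient conditions $w_1^{(T)}>v_1^{(0)}$ and $\abs{w_2^{(T)}}>\abs{u_2^{(0)}}$ are exactly $b_1^{(T)}>0$ and $b_2^{(T)}>0$, i.e.\ $\beta_1^{(T)}<0<\beta_2^{(T)}$. The paper derives this from the same conserved quantities and the logarithmic lower bounds of Lemmas~\ref{lem:converge_rate_w1} and~\ref{lem:converge_rate_w2}.

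As written, however, the neuron-2 half of your argument fails because of a sign error. Under \asmref{asm:neurons_active_only_on_one_point}, neuron~2 is inactive at $x_2=1$, so $u_2^{(0)}=w_2^{(0)}+b_2^{(0)}<0$. This follows from $\beta_2^{(0)}<1$; the inequality $\beta_2^{(0)}>-1$ that you cite gives $v_2^{(0)}<0$ instead. Similarly, $v_1^{(0)}>0$ comes from $\beta_1^{(0)}>-1$, not from $\beta_1^{(0)}<1$. Hence $1-\beta_2^{(t)}=u_2^{(0)}/w_2^{(t)}=-u_2^{(0)}/\abs{w_2^{(t)}}$, not $u_2^{(0)}/\abs{w_2^{(t)}}$.

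With your signs, the requirement $\abs{w_2^{(T)}}>u_2^{(0)}$ is vacuous, and the second entry of your $T$, namely $\exp(3u_2^{(0)})/\eta$, is negligible. So nothing in your argument forces $\beta_2$ upward. For example, take $\btheta^{(0)}=(10,8,-100,-90)$ and $\eta=1/2$:
\begin{itemize}
\item the breakpoints are $\beta_1^{(0)}=-0.8\ge\beta_2^{(0)}=-0.9$, with $v_1^{(0)}=2$ and $u_2^{(0)}=-190$;
\item your criterion gives $v_1^{(0)}/w_1^{(0)}+u_2^{(0)}/\abs{w_2^{(0)}}=0.2-1.9<2$, which already asserts $\beta_1^{(0)}<\beta_2^{(0)}$, and that is false;
\item your $T$ is $2e^2+1<16$, while the per-step updates are at most $\tfrac14e^{-18}$ for neuron~1 and $\tfrac14e^{-10}$ for neuron~2;
\item so after $16$ iterations the breakpoints are still essentially $-0.8$ and $-0.9$, and \asmref{asm:base_case} does not hold.
\end{itemize}

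The fix is to require $\abs{w_2^{(T)}}>-u_2^{(0)}$. By the lower bound in \lemref{lem:converge_rate_w2}, this holds once $\eta\exp(-u_2^{(0)})T\ge\exp(-2u_2^{(0)})$, i.e.\ $T\ge\exp(-u_2^{(0)})/\eta$; in the example that is about $2e^{190}$ iterations. With $T=\max\{\exp(v_1^{(0)}),\exp(-u_2^{(0)})\}/\eta+1$, your argument goes through and coincides with the paper's proof. The explicit threshold is a small improvement over the paper's non-explicit ``after a constant number of iterations''.
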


\begin{proof}
    It will suffice to prove that after a constant number of iterations $T$ we have $\beta_1^{(T)}<0<\beta_2^{(T)}$, since this will immediately imply that we have switched to the conditions in \asmref{asm:base_case}.

    First, by \propref{prop:base_case_invariance}, we have that the next iteration either maintains the conditions in \asmref{asm:neurons_active_only_on_one_point}, or switches us to \asmref{asm:base_case}, so we cannot change to any other setting. We will show that both biases eventually become positive, and therefore due to the signs of their corresponding weights, we have $\beta_1^{(T)}<0<\beta_2^{(T)}$.

    Denote $E_t\coloneqq \frac{1}{2}\eta\exp\p{-w_1^{(t)}-b_1^{(t)}}$. Then by Equations~(\ref{eq:w_1_simple_dynamics},\ref{eq:b_1_simple_dynamics}), we have
    \[
        b_1^{(t+1)} - b_1^{(t)} = E_t = w_1^{(t+1)} - w_1^{(t)}.
    \]
    Summing over the above from $0$ to $t-1$ and rearranging yields
    \[
        b_1^{(t)} = w_1^{(t)} - w_1^{(0)} + b_1^{(0)}.
    \]
    By \lemref{lem:converge_rate_w1}, we have $w_1^{(t)}=\Omega(\ln(t))$, where the Omega notation hides constants that depend only on the initialization and the fixed step size $\eta$. This, along with the above displayed equation, implies that $b_1^{(t)}=\Omega(\ln(t))$. Namely, after a constant number of iterations $t_1$, we have $b_1^{(t_1)}>0$, and therefore $\beta_1^{(t_1)}<0$.

    Likewise, by Equations~(\ref{eq:w_2_simple_dynamics},\ref{eq:b_2_simple_dynamics}), we have
    \[
        b_2^{(t+1)} - b_2^{(t)} = \frac{1}{2}\eta\exp\p{w_2^{(t)}-b_2^{(t)}} = -w_2^{(t+1)} + w_2^{(t)}.
    \]
    Summing over the above from $0$ to $t-1$ and rearranging yields
    \[
        b_2^{(t)} = -w_2^{(t)} + w_2^{(0)} + b_2^{(0)}.
    \]
    By \lemref{lem:converge_rate_w2}, we have $-w_2^{(t)}=\Omega(\ln(t))$, where the Omega notation hides constants that depend only on the initialization and the fixed step size $\eta$. This, along with the above displayed equation, implies that $b_2^{(t)}=\Omega(\ln(t))$. Namely, after a constant number of iterations $t_2$, we have $b_2^{(t_2)}>0$, and therefore $\beta_2^{(t_2)}>0$.

    Taking $T=\max\{t_1,t_2\}$, the proposition follows after at most $T$ iterations.
\end{proof}

\subsection{Only One Neuron Has Specialized}
In this section, we analyze all the cases where one neuron has specialized on its instance, while the other neuron is active on both. We limit ourselves to cases where $\Lcal(\btheta^{(0)}) < 0.5$, since these are the cases of interest in our analysis, in which a perfect classification error is obtained.

\subsubsection{Right pointing neuron active on both, left pointing neuron specialized}\label{app:case_1}
\begin{assumption}\label{asm:one_active_both_one_active_one_case_1}
    Given the training set $(x_1,y_1)=(-1,-1)$, $(x_2,y_2)=(1,1)$, consider the network $\Phi(\btheta^{(0)};x)=\relu{w_1^{(0)}x+b_1^{(0)}}-\relu{w_2^{(0)}x+b_2^{(0)}}$. We assume that at initialization we have $w_1^{(0)} > 0$, $w_2^{(0)} < 0$, $\beta_1^{(0)}<-1$ and $\beta_2^{(0)} \in(-1,1)$, and that $\Lcal(\btheta^{(t)})<0.5$ for all $t\ge0$.
\end{assumption}

\begin{proposition}\label{prop:one_active_both_one_active_one_case_1}
    Under \asmref{asm:one_active_both_one_active_one_case_1}, consider the optimization dynamics of GD on the exponential loss with a fixed step size $\eta>0$. Then, there exists a constant $T\ge0$ that depends only on $\btheta^{(0)}$ and $\eta$, such that after at most $T$ iterations, the network satisfies \asmref{asm:base_case} or \asmref{asm:neurons_active_only_on_one_point}.
\end{proposition}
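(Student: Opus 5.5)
The plan is to show that, while the activation pattern of \asmref{asm:one_active_both_one_active_one_case_1} persists, the GD updates take an explicit form. Under this form neuron~2 stays specialized and $v_1^{(t)}=w_1^{(t)}-b_1^{(t)}$ increases by an amount whose sum diverges. Hence $\beta_1^{(t)}$ leaves $(-\infty,-1)$ after a finite number of steps and lands in $(-1,1)$.

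\textbf{Step 1: the update rules.} Since $\beta_1^{(t)}<-1$ and $w_1^{(t)}>0$, neuron~1 is active on both instances. Since $w_2^{(t)}<0$ and $\beta_2^{(t)}\in(-1,1)$, neuron~2 is active only on $x_1$. Therefore
\[
\Phi(\btheta^{(t)};-1)=-v_1^{(t)}+v_2^{(t)},\qquad \Phi(\btheta^{(t)};1)=u_1^{(t)}.
\]
Write $A_t\coloneqq\frac12\exp(-v_1^{(t)}+v_2^{(t)})$ and $B_t\coloneqq\frac12\exp(-u_1^{(t)})$. Differentiating \eqref{eq:obj} gives
\[
w_1^{(t+1)}=w_1^{(t)}+\eta(A_t+B_t),\quad b_1^{(t+1)}=b_1^{(t)}+\eta(B_t-A_t),\quad w_2^{(t+1)}=w_2^{(t)}-\eta A_t,\quad b_2^{(t+1)}=b_2^{(t)}+\eta A_t.
\]
The pair $(w_2,b_2)$ is updated exactly as in \lemref{lem:specialization_invariance}, so $w_2^{(t)}<0$ and $\beta_2^{(t)}$ increases monotonically inside $(-1,1)$. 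Moreover, $u_1^{(t)}$ increases by $2\eta B_t>0$ and $v_1^{(t)}$ increases by $2\eta A_t>0$. Since $w_1=(u_1+v_1)/2$ grows at every step, $w_1^{(t)}$ stays positive.

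\textbf{Step 2: divergence of $v_1$.} Let $z_t\coloneqq\Phi(\btheta^{(t)};-1)=-v_1^{(t)}+v_2^{(t)}$. From the rules above, $v_2$ decreases by $2\eta A_t$ and $v_1$ increases by $2\eta A_t$. Hence
\[
z_{t+1}=z_t-2\eta e^{z_t}.
\]
I will apply \lemref{lem:exp_recursion} to $a_t=-z_t$, for which $a_{t+1}=a_t+2\eta e^{-a_t}$ (here $C_1=2\eta$ and $C_2=1$). The upper bound in that lemma gives
\[
e^{z_t}\ge \frac{1}{e^{-z_0}+2\eta t\,e^{2\eta e^{z_0}}}.
\]
Consequently $\sum_t A_t=\infty$, with harmonic-type growth, and so $v_1^{(t)}\ge v_1^{(0)}+c\ln(1+c't)$ for explicit constants $c,c'>0$ depending on $\btheta^{(0)}$ and $\eta$. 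Since $v_1^{(0)}<0$ (because $\beta_1^{(0)}<-1$ means $b_1^{(0)}>w_1^{(0)}$), there is an explicit $T$ depending only on $\btheta^{(0)}$ and $\eta$ by which $v_1^{(t)}$ must become positive. Up to that time the pattern, and hence the update rules, remain valid.

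\textbf{Step 3: landing in a specialized configuration.} Let $T$ be the first index with $v_1^{(T)}>0$. At that index we have $w_1^{(T)}>0$ and $u_1^{(T)}>0$, where positivity of $u_1$ holds at $t=0$ because $\Lcal<0.5$ forces $\Phi(\btheta;1)>0$, and $u_1$ is increasing. This gives $|b_1^{(T)}|<w_1^{(T)}$, i.e.\ $\beta_1^{(T)}\in(-1,1)$. Together with $\beta_2^{(T)}\in(-1,1)$ and $w_2^{(T)}<0$, the network then satisfies \asmref{asm:base_case} if $\beta_1^{(T)}<\beta_2^{(T)}$, and \asmref{asm:neurons_active_only_on_one_point} otherwise.

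\textbf{Main obstacle.} The delicate point is the boundary case $v_1^{(T)}=0$ exactly, i.e.\ $\beta_1=-1$. There the ReLU is non-differentiable at $x_1$ and the update rule depends on the subgradient convention. If the implementation's convention treats $\beta_1=-1$ as inactive on $x_1$, the next step is already the specialized update and the argument goes through. Otherwise I would argue that $v_1$ still increases by a positive amount at that step and is then strictly positive. Alternatively, I would note that this event occurs only on a measure-zero set of initializations, which suffices for the almost-sure use of this proposition in \thmref{thm:no_PAC}.
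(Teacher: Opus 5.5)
Your proposal is correct and follows essentially the paper's route: both reduce the dynamics in this activation pattern to a scalar exponential recursion, apply \lemref{lem:exp_recursion} to show $v_1^{(t)}$ grows logarithmically, and conclude that it turns positive after a finite, initialization-dependent number of steps. The differences are minor: the paper uses the invariant $v_1^{(t)}+v_2^{(t)}=S$ to write the recursion for $v_1^{(t)}$, while you track $z_t=S-2v_1^{(t)}$, and your monotonicity of $u_1^{(t)}$ replaces the paper's appeal to the standing assumption $\Lcal(\btheta^{(t)})<0.5$. One small correction to your boundary discussion: if $v_1^{(T)}=0$ exactly and the kink is treated as inactive, the specialized updates keep $w_1-b_1$ frozen at $0$, so $\beta_1^{(t)}=-1$ forever, the strict conditions of \asmref{asm:base_case} are never met, and only the measure-zero exclusion disposes of this case (the paper also relies on it implicitly, through \propref{prop:measure_zero}).
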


\begin{proof}
    For a network that satisfies \asmref{asm:one_active_both_one_active_one_case_1}, the outputs of the network on the training points are
    \begin{align*}
        &\Phi(\btheta, x_1) = \relu{-w_1 + b_1} - \relu{-w_2 + b_2} = -w_1 + b_1 +w_2 - b_2, \\
        &\Phi(\btheta; x_2) = \relu{w_1 + b_1} - \relu{w_2 + b_2}=  w_1 + b_1,
    \end{align*}
    and the training loss is therefore
    \begin{equation}\label{eq:loss_two_one}
        \Lcal(\btheta) = \frac12\exp\p{-w_1+b_1+w_2-b_2} + \frac12\exp\p{-w_1 - b_1}.
    \end{equation}
    With the above, we can compute the partial derivatives to obtain
    \begin{align*}
        \frac{\partial \Lcal(\btheta)}{\partial w_1} &= -\frac12\exp(-w_1 + b_1 + w_2 - b_2) - \frac12\exp(-w_1-b_1), \\
        \frac{\partial \Lcal(\btheta)}{\partial b_1} &= \frac12\exp(-w_1 + b_1 + w_2 - b_2) - \frac12\exp(-w_1 - b_1), \\
        \frac{\partial \Lcal(\btheta)}{\partial w_2} &= \frac12\exp(-w_1 + b_1 + w_2 - b_2), \\
        \frac{\partial \Lcal(\btheta)}{\partial b_2} &= -\frac12\exp(-w_1 + b_1 + w_2 - b_2).
    \end{align*}
    Thus, the GD updates are given by
    \begin{align}
        w_1^{(t+1)} &= w_1^{(t)} + \frac12\eta \exp\p{-w_1^{(t)} + b_1^{(t)} + w_2^{(t)} - b_2^{(t)}} + \frac12\eta\exp\p{-w_1^{(t)}-b_1^{(t)}}, \label{eq:w_1_case_1}\\
        b_1^{(t+1)} &= b_1^{(t)} - \frac12\eta \exp\p{-w_1^{(t)} + b_1^{(t)} + w_2^{(t)} - b_2^{(t)}} + \frac12\eta\exp\p{-w_1^{(t)} - b_1^{(t)}}, \label{eq:b_1_case_1}\\
        w_2^{(t+1)} &= w_2^{(t)} - \frac12\eta \exp\p{-w_1^{(t)} + b_1^{(t)} + w_2^{(t)} - b_2^{(t)}}, \label{eq:w_2_case_1}\\
        b_2^{(t+1)} &= b_2^{(t)} + \frac12\eta \exp\p{-w_1^{(t)} + b_1^{(t)} + w_2^{(t)} - b_2^{(t)}}. \label{eq:b_2_case_1}
    \end{align}
    By the above update rules and \lemref{lem:specialization_invariance}, we have that the signs of the weights are invariant, as well as the breakpoint $\beta_2^{(t)}$. Since $\Lcal(\btheta^{(t)})<0.5$ for all $t\ge0$, the first neuron cannot turn inactive on $x_2$, and we thus deduce that we must remain under the conditions of \asmref{asm:one_active_both_one_active_one_case_1}, or transition to \asmref{asm:base_case} or \asmref{asm:neurons_active_only_on_one_point}. We will now show that we cannot remain in \asmref{asm:one_active_both_one_active_one_case_1} for more than a constant number of iterations.
    
    From Equations~(\ref{eq:w_1_case_1},\ref{eq:b_1_case_1}), we get
    \begin{equation}\label{eq:v_1_case_1}
        v_1^{(t+1)} = w_1^{(t+1)} - b_1^{(t+1)} = w_1^{(t)} - b_2^{(t)} + \eta \exp\p{-w_1^{(t)} + b_1^{(t)} + w_2^{(t)} - b_2^{(t)}} = v_1^{(t)} + \eta \exp\p{v_2^{(t)} - v_1^{(t)}},
    \end{equation}
    and from Equations~(\ref{eq:w_2_case_1},\ref{eq:b_2_case_1}) we obtain
    \begin{equation}\label{eq:v_2_case_1}
        v_2^{(t+1)} = w_2^{(t+1)} - b_2^{(t+1)} = w_2^{(t)} - b_2^{(t)} - \eta \exp\p{-w_1^{(t)} + b_1^{(t)} + w_2^{(t)} - b_2^{(t)}} = v_2^{(t)} - \eta \exp\p{v_2^{(t)} - v_1^{(t)}}.
    \end{equation}
    Using the shorthand $S \coloneqq v_1^{(0)} + v_2^{(0)}$, we observe that
    \[
        v_1^{(t+1)} + v_2^{(t+1)} =  v_1^{(t)} + \eta \exp\p{v_2^{(t)} - v_1^{(t)}} + v_2^{(t)} - \eta \exp\p{v_2^{(t)} - v_1^{(t)}} = v_1^{(t)} + v_2^{(t)}.
    \]
    By induction, we get that $v_1^{(t)} + v_2^{(t)} = S$ for all $t$. Substituting $v_2^{(t)} = S - v_1^{(t)}$ into the update rule for $v_1^{(t)}$ in \eqref{eq:v_1_case_1}, we have
    \[
        v_1^{(t+1)} = v_1^{(t)} + \eta \exp\p{S - 2v_1^{(t)}} = v_1^{(t)} + \eta\exp(S) \exp\p{-2v_1^{(t)}}.
    \]
    Applying \lemref{lem:exp_recursion} to the above recursion, we have that $v_1^{(t)}$ increases at a logarithmic rate. In particular, there exists an initialization-dependent constant $T$ such that $v_1^{(T)}>0$, implying $w_1^{(T)}>b_1^{(T)}$. Since $w_1^{(T)}>0$, we divide by $w_1^{(T)}$ and multiply by $-1$ to obtain
    \[
        \beta_1^{(T)}>-1,
    \]
    implying that the first neuron is no longer active on $x_2$.
\end{proof}

\subsubsection{Left pointing neuron active on both, right pointing neuron specialized}\label{app:case_2}
\begin{assumption}\label{asm:one_active_both_one_active_one_case_2}
    Given the training set $(x_1,y_1)=(-1,-1)$, $(x_2,y_2)=(1,1)$, consider the network $\Phi(\btheta^{(0)};x)=\relu{w_1^{(0)}x+b_1^{(0)}}-\relu{w_2^{(0)}x+b_2^{(0)}}$. We assume that at initialization we have $w_1^{(0)} > 0$, $w_2^{(0)} < 0$, $\beta_1^{(0)}\in(-1,1)$ and $\beta_2^{(0)} > 1$, and that $\Lcal(\btheta^{(t)})<0.5$ for all $t\ge0$.
\end{assumption}

\begin{proposition}\label{prop:one_active_both_one_active_one_case_2}
    Under \asmref{asm:one_active_both_one_active_one_case_2}, consider the optimization dynamics of GD on the exponential loss with a fixed step size $\eta>0$. Then, there exists a constant $T\ge0$ that depends only on $\btheta^{(0)}$ and $\eta$, such that after at most $T$ iterations, the network satisfies \asmref{asm:base_case} or \asmref{asm:neurons_active_only_on_one_point}.
\end{proposition}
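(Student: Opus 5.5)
This is the mirror image of \propref{prop:one_active_both_one_active_one_case_1}: here the second (left-pointing) neuron is active on both instances and the first neuron has specialized. I would follow that proof step by step with the roles of the neurons exchanged.

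\textbf{Step 1: network outputs and loss.} Under \asmref{asm:one_active_both_one_active_one_case_2} we have $\beta_1\in(-1,1)$ with $w_1>0$, so neuron 1 is active only on $x_2$. We also have $\beta_2>1$ with $w_2<0$, so neuron 2 is active on both instances. Hence
\[
\Phi(\btheta;x_1)=w_2-b_2,\qquad \Phi(\btheta;x_2)=w_1+b_1-w_2-b_2,
\]
and the loss is
\[
\Lcal(\btheta)=\tfrac12\exp(w_2-b_2)+\tfrac12\exp(-w_1-b_1+w_2+b_2)=\tfrac12\exp(v_2)+\tfrac12\exp(u_2-u_1).
\]

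\textbf{Step 2: GD updates and invariance.} Differentiating gives the following updates:
\begin{itemize}
\item $w_1$ and $b_1$ both increase by $\tfrac12\eta\exp(u_2-u_1)$;
\item $w_2$ changes by $-\tfrac12\eta\exp(v_2)-\tfrac12\eta\exp(u_2-u_1)$;
\item $b_2$ changes by $\tfrac12\eta\exp(v_2)-\tfrac12\eta\exp(u_2-u_1)$.
\end{itemize}
\lemref{lem:specialization_invariance} applies to neuron 1, so $w_1$ stays positive and $\beta_1$ stays in $(-1,\beta_1^{(0)})$. The update of $w_2$ keeps $w_2<0$. By hypothesis $\Lcal<0.5$ for all $t$, and \lemref{lem:loss_at_least_half} shows that neuron 2 cannot move its breakpoint to $\beta_2\le-1$. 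So, as in the previous case, the iterates either stay under \asmref{asm:one_active_both_one_active_one_case_2} or move to \asmref{asm:base_case} or \asmref{asm:neurons_active_only_on_one_point}.

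\textbf{Step 3: reduce to a scalar recursion.} It remains to rule out staying in \asmref{asm:one_active_both_one_active_one_case_2} forever. Neuron 2 leaves $x_2$ exactly when $u_2=w_2+b_2$ becomes negative, since that gives $\beta_2<1$. From the updates,
\[
u_1^{(t+1)}=u_1^{(t)}+\eta\exp\p{u_2^{(t)}-u_1^{(t)}},\qquad u_2^{(t+1)}=u_2^{(t)}-\eta\exp\p{u_2^{(t)}-u_1^{(t)}}.
\]
Therefore $u_1+u_2\equiv S\coloneqq u_1^{(0)}+u_2^{(0)}$ is conserved. Substituting $u_1=S-u_2$ gives the closed recursion
\[
u_2^{(t+1)}=u_2^{(t)}-\eta e^{-S}\exp\p{2u_2^{(t)}}.
\]

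\textbf{Step 4: finite exit time.} Apply \lemref{lem:exp_recursion} to $a_t=-u_2^{(t)}$, which satisfies $a_{t+1}=a_t+\eta e^{-S}\exp(-2a_t)$, i.e.\ $C_1=\eta e^{-S}>0$ and $C_2=2$. The lemma's lower bound gives $-u_2^{(t)}\ge\tfrac12\ln\p{\exp(-2u_2^{(0)})+2\eta e^{-S}t}$, which tends to infinity. So there is an explicit $T$, depending only on $\btheta^{(0)}$ and $\eta$, with $u_2^{(T)}<0$. Since $w_2^{(T)}<0$, dividing $w_2^{(T)}+b_2^{(T)}<0$ by $w_2^{(T)}$ gives $\beta_2^{(T)}<1$. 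Neuron 2 is then inactive on $x_2$, and both neurons have specialized.

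\textbf{Main obstacle.} The delicate point is the transition step. I must make sure $\beta_2$ drops below $1$ without jumping to $\le -1$. I would handle this through the assumption $\Lcal(\btheta^{(t)})<0.5$ together with \lemref{lem:loss_at_least_half}, exactly as in the proof of \propref{prop:one_active_both_one_active_one_case_1}. This leaves only the two specialized configurations, according to whether $\beta_1<\beta_2$ or $\beta_2\le\beta_1$.
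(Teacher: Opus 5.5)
Your proposal is correct and follows the paper's proof essentially step for step. It uses the same loss and GD updates, the invariance of $\beta_1$ via \lemref{lem:specialization_invariance}, and the constraint $\Lcal<0.5$ with \lemref{lem:loss_at_least_half} to rule out $\beta_2\le -1$; it then conserves $u_1+u_2$ and applies \lemref{lem:exp_recursion} to the closed recursion for $u_2$, forcing $u_2<0$ and hence $\beta_2<1$ within $T$ steps (the boundary case $u_2=0$ is left, as in the paper, to the measure-zero exclusion of \propref{prop:measure_zero}).
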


\begin{proof}
    For a network that satisfies \asmref{asm:one_active_both_one_active_one_case_2}, the values of the network on the training points are
    \begin{align*}
        \Phi(\btheta, x_1) &= \relu{-w_1 + b_1} - \relu{-w_2 + b_2} = w_2 - b_2, \\
        \Phi(\btheta; x_2) &= \relu{w_1 + b_1} - \relu{w_2 + b_2}=  w_1 + b_1 - w_2 - b_2,
    \end{align*}
    and the training loss is therefore
    \[
        \Lcal(\btheta) = \frac12\exp(w_2 - b_2) + \frac12\exp(w_2+b_2-w_1-b_1).
    \]
    With the above, we can compute the partial derivatives to obtain
    \begin{align*}
        \frac{\partial \Lcal(\btheta)}{\partial w_1} &= -\frac12\exp(w_2+b_2-w_1-b_1), \\
        \frac{\partial \Lcal(\btheta)}{\partial b_1} &= -\frac12\exp(w_2+b_2-w_1-b_1), \\
        \frac{\partial \Lcal(\btheta)}{\partial w_2} &= \frac12\exp(w_2-b_2) + \frac12\exp(w_2+b_2-w_1-b_1), \\
        \frac{\partial \Lcal(\btheta)}{\partial b_2} &= -\frac12\exp(w_2 - b_2) + \frac12\exp(w_2+b_2-w_1-b_1).
    \end{align*}
    Thus, the GD updates are given by
    \begin{align}
        w_1^{(t+1)} &= w_1^{(t)} + \frac12\eta\exp\p{w_2^{(t)}+b_2^{(t)}-w_1^{(t)}-b_1^{(t)}}, \label{eq:w_1_case_2}\\
        b_1^{(t+1)} &= b_1^{(t)} + \frac12\eta \exp\p{w_2^{(t)}+b_2^{(t)}-w_1^{(t)}-b_1^{(t)}}, \label{eq:b_1_case_2}\\
        w_2^{(t+1)} &= w_2^{(t)} - \frac12\eta\exp\p{w_2^{(t)}-b_2^{(t)}} - \frac12\eta\exp\p{w_2^{(t)}+b_2^{(t)}-w_1^{(t)}-b_1^{(t)}}, \label{eq:w_2_case_2}\\
        b_2^{(t+1)} &= b_2^{(t)} +\frac12\eta\exp\p{w_2^{(t)} - b_2^{(t)}} - \frac12\eta\exp\p{w_2^{(t)}+b_2^{(t)}-w_1^{(t)}-b_1^{(t)}}. \label{eq:b_2_case_2}
    \end{align}

    By the above update rules and \lemref{lem:specialization_invariance}, we have that the signs of the weights are invariant, as well as the breakpoint $\beta_1^{(t)}$. Since $\Lcal(\btheta^{(t)})<0.5$ for all $t\ge0$, the second neuron cannot turn inactive on $x_1$, and we thus deduce that we must remain under the conditions of \asmref{asm:one_active_both_one_active_one_case_2}, or transition to \asmref{asm:base_case} or \asmref{asm:neurons_active_only_on_one_point}. We will now show that we cannot remain in \asmref{asm:one_active_both_one_active_one_case_2} for more than a constant number of iterations.

    From Equations~(\ref{eq:w_1_case_2},\ref{eq:b_1_case_2}) we get
    \begin{equation}\label{eq:u_1_case_2}
        u_1^{(t+1)} = w_1^{(t+1)} + b_1^{(t+1)} = w_1^{(t)} + b_1^{(t)} + \eta \exp(w_2^{(t)} + b_2^{(t)} - w_1^{(t)} - b_1^{(t)}) = u_1^{(t)} + \eta \exp(u_2^{(t)} - u_1^{(t)}),
    \end{equation}
    and from Equations~(\ref{eq:w_2_case_2},\ref{eq:b_2_case_2}) we obtain
    \begin{equation}\label{eq:u_2_case_2}
        u_2^{(t+1)} = w_2^{(t+1)} + b_2^{(t+1)} = w_2^{(t)} + b_2^{(t)} - \eta \exp(w_2^{(t)} + b_2^{(t)} - w_1^{(t)} - b_1^{(t)}) = u_2^{(t)} - \eta \exp(u_2^{(t)} - u_1^{(t)}).
    \end{equation}
    Using the shorthand $S \coloneqq u_1^{(0)} + u_2^{(0)}$, we observe that
    \[
        u_1^{(t+1)} + u_2^{(t+1)} =  u_1^{(t)} + \eta \exp\p{u_2^{(t)} - u_1^{(t)}} + u_2^{(t)} - \eta \exp\p{u_2^{(t)} - u_1^{(t)}} = u_1^{(t)} + u_2^{(t)}.
    \]
    By induction we get that $u_1^{(t)} + u_2^{(t)} = S$ for all $t$. Substituting $u_1^{(t)} = S - u_2^{(t)}$ into the update rule for $u_2$ in \eqref{eq:u_2_case_2}, we have
    \[
        u_2^{(t+1)} = u_2^{(t)} - \eta \exp\p{2u_2^{(t)} - S} = u_2^{(t)} - \eta\exp(-S) \exp\p{2u_2^{(t)}}.
    \]
    Applying \lemref{lem:exp_recursion} to the above recursion, we have that $u_2^{(t)}$ decreases at a logarithmic rate. In particular, there exists an initialization-dependent constant $T$ such that $u_2^{(T)}<0$, implying $w_2^{(T)}<-b_1^{(T)}$. Since $w_2^{(T)}<0$, we divide by $w_2^{(T)}$ to obtain
    \[
        \beta_2^{(T)}<1,
    \]
    implying that the second neuron is no longer active on $x_1$.
\end{proof}

\subsubsection{Both neurons are right-active}\label{sec:one_active_both_one_active_one_case_3}
\begin{assumption}[Both neurons are right-active]\label{asm:one_active_both_one_active_one_case_3}
    Given the training set $(x_1,y_1)=(-1,-1)$, $(x_2,y_2)=(1,1)$, consider the network $\Phi(\btheta^{(0)};x)=\relu{w_1^{(0)}x+b_1^{(0)}}-\relu{w_2^{(0)}x+b_2^{(0)}}$. We assume that at initialization we have $w_1^{(0)} > 0$, $w_2^{(0)} > 0$, $\beta_1^{(0)}\in(-1,1)$ and $\beta_2^{(0)} < -1$, and that $\Lcal(\btheta^{(t)})<0.5$ for all $t\ge0$.
\end{assumption}

\begin{proposition}\label{prop:one_active_both_one_active_one_case_3}
    Under \asmref{asm:one_active_both_one_active_one_case_3}, consider the optimization dynamics of GD on the exponential loss with a fixed step size $\eta>0$. Then, there exists a constant $T\ge0$ that depends only on $\btheta^{(0)}$ and $\eta$, such that after at most $T$ iterations, the network satisfies either \asmref{asm:base_case}, \asmref{asm:neurons_active_only_on_one_point} or \asmref{asm:one_active_both_one_active_one_case_2}.
\end{proposition}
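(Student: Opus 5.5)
The plan is to mirror the proof of \propref{prop:one_active_both_one_active_one_case_2}. It turns out that, under \asmref{asm:one_active_both_one_active_one_case_3}, the loss has exactly the same algebraic form as in that case.

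\textbf{Step 1: the loss and the GD updates.} Since $w_1>0$ and $\beta_1\in(-1,1)$, the first neuron is active only on $x_2$. Since $w_2>0$ and $\beta_2<-1$, the second neuron is active on both instances. Hence
\[
\Phi(\btheta;x_1)=w_2-b_2,\qquad \Phi(\btheta;x_2)=w_1+b_1-w_2-b_2,
\]
and
\[
\Lcal(\btheta)=\tfrac12\exp(w_2-b_2)+\tfrac12\exp(u_2-u_1).
\]
The partial derivatives, and therefore the GD updates, coincide with Equations~(\ref{eq:w_1_case_2}--\ref{eq:b_2_case_2}). This gives three facts:
\begin{itemize}
\item $w_1$ and $b_1$ increase by the same positive amount, so \lemref{lem:specialization_invariance} keeps $w_1>0$ and $\beta_1^{(t)}\in(-1,1)$.
\item $w_2$ is strictly decreasing.
\item $u_1^{(t)}+u_2^{(t)}=S$ is invariant, and $u_2^{(t+1)}=u_2^{(t)}-\eta\exp(-S)\exp(2u_2^{(t)})$.
\end{itemize}
By the $C_1<0$ branch of \lemref{lem:exp_recursion}, $u_2^{(t)}\to-\infty$ logarithmically. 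So there is an initialization-dependent $T$ with $u_2^{(T)}<0$.

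\textbf{Step 2: the second neuron's weight must change sign by time $T$.} I will show that $w_2$ cannot stay positive up to time $T$, using the standing hypothesis $\Lcal(\btheta^{(t)})<0.5$ together with \lemref{lem:loss_at_least_half}. As long as $w_2^{(t)}>0$ (and $w_1^{(t)}>0$), \lemref{lem:loss_at_least_half} rules out $\beta_2^{(t)}\ge-1$. So $\beta_2^{(t)}<-1$, which means $b_2^{(t)}>w_2^{(t)}>0$ and hence $u_2^{(t)}>0$. This is incompatible with $u_2^{(T)}<0$. Therefore, at some first time $t^\star\le T$, we must have $w_2^{(t^\star)}\le 0$.

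\textbf{Step 3: classify the resulting configuration.} This is the step I expect to be the most delicate.
\begin{itemize}
\item \emph{The boundary case $w_2^{(t^\star)}=0$.} Here \lemref{lem:loss_at_least_half} forces $b_2^{(t^\star)}>0$. The neuron is then a positive constant, active on both points, so the same update formulas apply. The strict decrease of $w_2$ gives $w_2^{(t^\star+1)}<0$ one step later.
\item \emph{Once $w_2<0$.} The second neuron is left-active, so \lemref{lem:loss_at_least_half} forces $\beta_2>-1$. The sign and breakpoint of the first neuron are preserved by Step 1.
\end{itemize}
It remains to sort the possibilities for $\beta_2$ relative to $\beta_1$:
\begin{itemize}
\item If $\beta_2\in(-1,1)$, we are in \asmref{asm:base_case} or \asmref{asm:neurons_active_only_on_one_point}, according to whether $\beta_1<\beta_2$ or $\beta_2\le\beta_1$.
\item If $\beta_2>1$, we are in \asmref{asm:one_active_both_one_active_one_case_2}.
\item The degenerate value $\beta_2=1$ (the neuron sitting exactly on $x_1$) needs a separate remark. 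I would handle it by noting that it means $u_2=0$. On that exact configuration, one more GD step (using the relevant one-sided update) moves $u_2$ strictly negative while keeping $w_2<0$. That places us in \asmref{asm:base_case} or \asmref{asm:neurons_active_only_on_one_point} within one additional iteration.
\end{itemize}
All of this happens within at most $T+2$ iterations, with $T$ depending only on $\btheta^{(0)}$ and $\eta$, which proves the proposition.
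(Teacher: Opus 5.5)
Your proof is correct, and its central step takes a different route from the paper's. Both arguments reduce to the case-2 updates and the invariant $u_1^{(t)}+u_2^{(t)}=S$. The paper then works with $u_1$: it shows $u_1^{(t)}-u_1^{(0)}\ge C\ln t$ and notes that each step lowers $w_2$ by at least half the increment of $u_1$. Hence $w_2^{(t)}\le w_2^{(0)}-\tfrac12\p{u_1^{(t)}-u_1^{(0)}}$, which is negative once $C\ln T>2w_2^{(0)}$. You never track $w_2$. Instead you use the activity constraint: while $w_2>0$ and $\Lcal<0.5$, the second neuron must be active on $x_2$, so $u_2>0$, yet the invariant drives $u_2$ downward.

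Your route gives an explicit exit time without any logarithmic lemma. While $u_2^{(t)}>0$, each step lowers $u_2$ by more than $\eta e^{-S}$, so $T\le\lceil u_2^{(0)}e^{S}/\eta\rceil$ suffices. This also repairs your citation. The recursion $u_2^{(t+1)}=u_2^{(t)}-\eta e^{-S}e^{2u_2^{(t)}}$ has exponent coefficient $-2$, so \lemref{lem:exp_recursion} applies only to $-u_2$, through its $C_1>0$ branch, not its $C_1<0$ branch. The paper's route, in exchange, gives a direct bound on $w_2$ itself rather than arguing by contradiction. Your Step~3 is more explicit than the paper's proof, which just asserts the three possible exits and leaves boundary configurations to \propref{prop:measure_zero}.

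One correction to that extra care concerns the case $\beta_2=1$. This means $u_2=w_2x_2+b_2=0$, so the breakpoint sits on $x_2$, not on $x_1$. Whether one step then pushes $u_2$ negative depends on the derivative convention for $\relu{\cdot}$ at $0$. If the value used there is $g\in[0,1]$, the step changes $u_2$ by $-\eta g\exp\p{-\Phi(\btheta;x_2)}$. Under the common convention $g=0$, $w_2$ and $b_2$ move by equal and opposite amounts, so $u_2$ stays exactly $0$ forever and none of the three target assumptions is ever reached. Either fix a convention with $g>0$ explicitly, or discard this measure-zero set of initializations, as the paper does in the proof of \thmref{thm:no_PAC}.
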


\begin{proof}
    For a network that satisfies \asmref{asm:one_active_both_one_active_one_case_3}, the training dynamics are identical to those analyzed in a previous case (see \appref{app:case_2}).

    By the update rules in Equations~(\ref{eq:w_1_case_2}--\ref{eq:b_2_case_2}) and by \lemref{lem:specialization_invariance}, we have that the sign of $w_1^{(t)}$ is invariant, as well as the breakpoint $\beta_1^{(t)}$. Since $\Lcal(\btheta^{(t)})<0.5$ for all $t\ge0$, the second neuron cannot turn inactive on $x_1$, and we thus deduce that we must remain under the conditions of \asmref{asm:one_active_both_one_active_one_case_3}, or transition to either one of \asmref{asm:base_case}, \asmref{asm:neurons_active_only_on_one_point} or \asmref{asm:one_active_both_one_active_one_case_2}. We will now show that we cannot remain in \asmref{asm:one_active_both_one_active_one_case_3} for more than a constant number of iterations.

    Recall the shorthand $S \coloneqq u_1^{(0)} + u_2^{(0)}$, we plug $u_2^{(t)}=S-u_1^{(t)}$ in \eqref{eq:u_1_case_2}, to obtain
    \[
        u_1^{(t+1)} = u_1^{(t)} + \eta \exp\p{S - 2u_1^{(t)}} = u_1^{(t)} + \eta\exp(S) \exp\p{-2u_1^{(t)}}.
    \]
    Applying \lemref{lem:exp_recursion} to the above recursion, we have that
    \[
        u_1^{(t)}-u_1^{(0)} = \sum_{i=0}^t \eta\exp(S) \exp\p{-2u_1^{(t)}} \ge C\ln(t),
    \]
    for some initialization-dependent constant $C>0$. Let $T$ be large enough such that $C\ln(T) > 2w_2^{(0)}$, then by \eqref{eq:w_2_case_2} we have
    \begin{align*}
        w_2^{(t)} &= w_2^{(0)} -\frac12\sum_{i=0}^t\eta\exp\p{v_2^{(t)}} -\frac12\sum_{i=0}^t\eta\exp(S) \exp\p{-2u_1^{(t)}} \\
        &\le w_2^{(0)} -\frac12\sum_{i=0}^t\eta\exp(S) \exp\p{-2u_1^{(t)}} < w_2^{(0)} - w_2^{(0)} = 0,
    \end{align*}
    and therefore $w_2^{(0)}$ changes sign after at most $T$ iterations, so we are no longer under \asmref{asm:one_active_both_one_active_one_case_3}.
\end{proof}

\subsubsection{Both neurons are left-active}\label{sec:one_active_both_one_active_one_case_4}
\begin{assumption}[Both neurons are left-active]\label{asm:one_active_both_one_active_one_case_4}
    Given the training set $(x_1,y_1)=(-1,-1)$, $(x_2,y_2)=(1,1)$, consider the network $\Phi(\btheta^{(0)};x)=\relu{w_1^{(0)}x+b_1^{(0)}}-\relu{w_2^{(0)}x+b_2^{(0)}}$. We assume that at initialization we have $w_1^{(0)} < 0$, $w_2^{(0)} < 0$, $\beta_1^{(0)}>1$ and $\beta_2^{(0)} \in (-1,1)$, and that $\Lcal(\btheta^{(t)})<0.5$ for all $t\ge0$.
\end{assumption}

\begin{proposition}\label{prop:one_active_both_one_active_one_case_4}
    Under \asmref{asm:one_active_both_one_active_one_case_4}, consider the optimization dynamics of GD on the exponential loss with a fixed step size $\eta>0$. Then, there exists a constant $T\ge0$ that depends only on $\btheta^{(0)}$ and $\eta$, such that after at most $T$ iterations, the network satisfies either \asmref{asm:base_case}, \asmref{asm:neurons_active_only_on_one_point} or \asmref{asm:one_active_both_one_active_one_case_1}.
\end{proposition}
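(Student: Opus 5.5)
The plan is to mirror the argument of \propref{prop:one_active_both_one_active_one_case_3}, with the roles of the two instances reflected. Under \asmref{asm:one_active_both_one_active_one_case_4}, both weights are negative. Since $\beta_1^{(0)}>1$, the first neuron is active on both $x_1$ and $x_2$. Since $\beta_2^{(0)}\in(-1,1)$, the second neuron is active only on $x_1$. I would first evaluate the network on the data:
\[
\Phi(\btheta;x_1)=(-w_1+b_1)-(-w_2+b_2)=v_2-v_1,\qquad \Phi(\btheta;x_2)=w_1+b_1 .
\]
This is exactly the loss in \eqref{eq:loss_two_one}. Hence, as long as this activation pattern persists, the GD updates are precisely Equations~(\ref{eq:w_1_case_1}--\ref{eq:b_2_case_1}).

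The first step is to control which states can be reached. The second neuron has $w_2<0$, and its update has the form $w_2\mapsto w_2-x$, $b_2\mapsto b_2+x$ with $x>0$. By \lemref{lem:specialization_invariance}, $w_2$ stays negative and $\beta_2^{(t)}$ moves monotonically toward $1$ while staying in $(-1,1)$. The first neuron must remain active on $x_2$: otherwise $\Phi(\btheta;x_2)\le 0$, which contradicts $\Lcal(\btheta^{(t)})<0.5$. So the only way to leave the assumption is through a change in the sign of $w_1$ or the breakpoint $\beta_1$. I would check the possible exits one at a time:
\begin{itemize}
\item If $w_1$ becomes positive while the first neuron is still active on both points, then $\beta_1<-1$, which is \asmref{asm:one_active_both_one_active_one_case_1}.
\item If $w_1$ becomes positive and the first neuron is active only on $x_2$, we land in \asmref{asm:base_case} or \asmref{asm:neurons_active_only_on_one_point}.
\item The configurations excluded by \lemref{lem:loss_at_least_half} cannot occur.
\end{itemize}

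The second step shows that $w_1$ must become positive after a constant number of iterations. Exactly as in \eqref{eq:v_1_case_1} and \eqref{eq:v_2_case_1}, the quantity $v_1^{(t)}+v_2^{(t)}=S$ is conserved. This gives the recursion $v_1^{(t+1)}=v_1^{(t)}+\eta e^{S}e^{-2v_1^{(t)}}$, and \lemref{lem:exp_recursion} shows that $v_1^{(t)}-v_1^{(0)}\ge C\ln(1+c t)$ for initialization-dependent constants $C,c>0$. Dropping the nonnegative term $\tfrac12\eta\exp(-u_1^{(t)})$ in \eqref{eq:w_1_case_1} gives
\[
w_1^{(t+1)}-w_1^{(t)}\ \ge\ \tfrac12\eta e^{S}e^{-2v_1^{(t)}}=\tfrac12\bigl(v_1^{(t+1)}-v_1^{(t)}\bigr).
\]
Summing, $w_1^{(t)}\ge w_1^{(0)}+\tfrac12(v_1^{(t)}-v_1^{(0)})\ge w_1^{(0)}+\tfrac C2\ln(1+ct)$. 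Choosing $T$ with $\tfrac C2\ln(1+cT)>-w_1^{(0)}$ then forces $w_1^{(T)}>0$, so we have left \asmref{asm:one_active_both_one_active_one_case_4} within $T$ iterations.

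The main obstacle is the boundary step at which $w_1$ crosses zero, in particular if it lands exactly on $w_1=0$. In that case \lemref{lem:loss_at_least_half} forces $b_1>0$, so the neuron is constant and active on both points. I would handle this by observing that the gradient of the first neuron is then unchanged: it is still active on both instances, so the same update with a strictly positive increment applies. Hence $w_1>0$ on the next step, with $\beta_1<-1$ or $\beta_1\in[-1,1)$. This lands in one of the three target assumptions, completing the case analysis.
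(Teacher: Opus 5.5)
Your proof is correct and follows essentially the same route as the paper. You reuse the case-1 update rules, use \lemref{lem:specialization_invariance} and the loss constraint to show that the only possible exit is $w_1$ becoming nonnegative, and use the conserved quantity $v_1^{(t)}+v_2^{(t)}$ with \lemref{lem:exp_recursion} to force $w_1^{(T)}>0$ after a constant number of steps. Tracking the increasing $v_1^{(t)}$ instead of the decreasing $v_2^{(t)}$ (as the paper does) is equivalent by the conservation law, and it has the small advantage of only needing the well-behaved $C_1>0$ case of \lemref{lem:exp_recursion}. Your explicit handling of the boundary step $w_1=0$ is a detail the paper leaves implicit.
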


\begin{proof}
    For a network that satisfies \asmref{asm:one_active_both_one_active_one_case_4}, the training dynamics are identical to those analyzed in a previous case (see \appref{app:case_1}).

    By the update rules in Equations~(\ref{eq:w_1_case_1}--\ref{eq:b_2_case_1}) and by \lemref{lem:specialization_invariance}, we have that the sign of $w_2^{(t)}$ is invariant, as well as the breakpoint $\beta_2^{(t)}$. Since $\Lcal(\btheta^{(t)})<0.5$ for all $t\ge0$, the first neuron cannot turn inactive on $x_2$, and we thus deduce that we must remain under the conditions of \asmref{asm:one_active_both_one_active_one_case_4}, or transition to either one of \asmref{asm:base_case}, \asmref{asm:neurons_active_only_on_one_point} or \asmref{asm:one_active_both_one_active_one_case_1}. We will now show that we cannot remain in \asmref{asm:one_active_both_one_active_one_case_4} for more than a constant number of iterations.

    Recall the shorthand $S \coloneqq v_1^{(0)} + v_2^{(0)}$, we plug $v_1^{(t)}=S-v_2^{(t)}$ in \eqref{eq:v_2_case_1}, to obtain
    \[
        v_2^{(t+1)} = v_2^{(t)} - \eta \exp\p{2v_2^{(t)} - S} = v_2^{(t)} - \eta\exp(-S) \exp\p{2v_2^{(t)}}.
    \]
    Applying \lemref{lem:exp_recursion} to the above recursion, we have that
    \[
        v_2^{(t)}-v_2^{(0)} = -\sum_{i=0}^t \eta\exp(-S) \exp\p{2v_2^{(t)}} \le -C\ln(t),
    \]
    for some initialization-dependent constant $C>0$. Let $T$ be large enough such that $C\ln(T) > -2w_1^{(0)}$, then by \eqref{eq:w_1_case_1} we have
    \begin{align*}
        w_1^{(t)} &= w_1^{(0)} +\frac12\sum_{i=0}^t\eta\exp\p{-u_1^{(t)}} +\frac12\sum_{i=0}^t\eta\exp(-S) \exp\p{2v_2^{(t)}} \\
        &\ge w_1^{(0)} + \frac12\sum_{i=0}^t\eta\exp(-S) \exp\p{2v_2^{(t)}} > w_1^{(0)} - w_1^{(0)} = 0,
    \end{align*}
    and therefore $w_1^{(0)}$ changes sign after at most $T$ iterations, so we are no longer under \asmref{asm:one_active_both_one_active_one_case_4}.
\end{proof}

\subsection{Both neurons are active on both training points}
    \begin{assumption}[Both neurons are both active]\label{asm:linear_setting}
        Given the training set $(x_1,y_1)=(-1,-1)$, $(x_2,y_2)=(1,1)$, suppose that the network $\Phi(\btheta^{(0)};x)=\relu{w_1^{(0)}x+b_1^{(0)}}-\relu{w_2^{(0)}x+b_2^{(0)}}$ satisfies $\Lcal(\btheta^{(0)})<0.5$, and that both neurons are active on both instances.
    \end{assumption}
        
    \begin{proposition}\label{prop:linear_setting}
        Under Assumption~\ref{asm:linear_setting}, consider the optimization dynamics of GD on the exponential loss. Assume that for any step size $\eta>0$, there exists $T>0$ that depends only on $\btheta^{(0)}$ and $\eta$, such that after $T$ iterations of GD, we have that at least one neuron turns inactive on a single instance, and .
    \end{proposition}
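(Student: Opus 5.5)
The plan is to exploit the fact that, while both neurons are active on both instances, the network is linear in $x$. This makes the GD dynamics collapse to a simple two-dimensional recursion with two conserved quantities. Concretely, whenever $\btheta^{(t)}$ satisfies \asmref{asm:linear_setting} we have $\Phi(\btheta^{(t)};x)=(w_1^{(t)}-w_2^{(t)})x+(b_1^{(t)}-b_2^{(t)})$. Write $a_t\coloneqq w_1^{(t)}-w_2^{(t)}$ and $c_t\coloneqq b_1^{(t)}-b_2^{(t)}$, and set
\[
    L_1^{(t)}\coloneqq\tfrac12\exp\p{-a_t+c_t},\qquad L_2^{(t)}\coloneqq\tfrac12\exp\p{-a_t-c_t}.
\]
A direct computation of the partial derivatives, exactly as in \propref{prop:base_case_invariance}, gives the following updates:
\begin{itemize}
\item $w_1$ increases by $\eta(L_1^{(t)}+L_2^{(t)})$, and $w_2$ decreases by the same amount;
\item $b_1$ changes by $-\eta(L_1^{(t)}-L_2^{(t)})$, and $b_2$ changes by $+\eta(L_1^{(t)}-L_2^{(t)})$.
\end{itemize}
Hence, as long as we stay in this regime, the sums $S_w\coloneqq w_1^{(t)}+w_2^{(t)}$ and $B\coloneqq b_1^{(t)}+b_2^{(t)}$ are invariant. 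Moreover $a_{t+1}=a_t+2\eta(L_1^{(t)}+L_2^{(t)})$.

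The key steps, in order, are as follows.

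First, I record what "active on both instances" means. For neuron $1$ it requires $-w_1+b_1>0$ and $w_1+b_1>0$, i.e.\ $b_1>|w_1|$; likewise neuron $2$ requires $b_2>|w_2|$. In particular $b_1,b_2>0$, so $b_1^{(t)}<B$ throughout the regime, with $B=b_1^{(0)}+b_2^{(0)}$ a constant.

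Second, I lower-bound the growth of $a_t$. Since $L_1^{(t)}+L_2^{(t)}=\exp(-a_t)\cosh(c_t)\ge\exp(-a_t)$, we get $a_{t+1}\ge a_t+2\eta\exp(-a_t)$. The comparison argument in the proof of \lemref{lem:exp_recursion} (with $C_1=2\eta$ and $C_2=1$, using only the lower-bound half) then yields
\[
    a_t\ge\ln\p{\exp(a_0)+2\eta t}.
\]
Third, I use invariance of $S_w$ to get $w_1^{(t)}=\tfrac12(S_w+a_t)\ge\tfrac12\p{S_w+\ln(\exp(a_0)+2\eta t)}$. This tends to infinity at a rate depending only on $\btheta^{(0)}$ and $\eta$. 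Choose
\[
    T\coloneqq\max\set{0,\;\frac{\exp(2B-S_w)-\exp(a_0)}{2\eta}}+1,
\]
so that the bound forces $w_1^{(T)}>B$. If the regime persisted up to $T$, we would have $w_1^{(T)}>B>b_1^{(T)}$, contradicting $b_1^{(T)}>|w_1^{(T)}|$. Therefore the activation pattern must change at some iteration $t\le T$.

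The step I expect to be the delicate one is pinning down \emph{how} the pattern changes at the first exit time: we want exactly one neuron to lose activity on a single instance, rather than, e.g., a neuron jumping to inactive on both instances in one large step. My plan is to argue at the first exit iteration $t^\star$ using the explicit one-step update and the invariants. At $t^\star-1$ both $b_j$ are positive and $c_{t^\star-1}\in(-a_{t^\star-1},a_{t^\star-1})$ by the loss being below $0.5$. The bias update magnitude is $\eta|L_1-L_2|\le\eta\Lcal<\eta/2$, while $w_1$ and $w_2$ move monotonically outward. From this I would check that the quantities $w_1+b_1$ and $-w_2+b_2$ (activity of neuron $1$ on $x_2$ and of neuron $2$ on $x_1$) stay positive. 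Indeed, $w_1+b_1$ changes by $\eta(L_1+L_2)-\eta(L_1-L_2)=2\eta L_2>0$, and symmetrically $-w_2+b_2$ changes by $2\eta L_1>0$. Hence each neuron remains active on its own instance, and whatever deactivation occurs is on the opposite instance only. This is exactly the claimed conclusion, and it lands us in one of the settings handled by the preceding propositions (\asmref{asm:base_case}, \asmref{asm:neurons_active_only_on_one_point}, \asmref{asm:one_active_both_one_active_one_case_1} or \asmref{asm:one_active_both_one_active_one_case_2}).
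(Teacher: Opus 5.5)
Your proof is correct and uses the same skeleton as the paper: the linear-regime GD updates, the conserved sums $w_1+w_2$ and $b_1+b_2$, and a logarithmic-growth bound from an exponential recursion as in \lemref{lem:exp_recursion}. What differs is the quantity you track.

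The paper uses the two invariants to close an \emph{exact} recursion for $v_1^{(t)}=w_1^{(t)}-b_1^{(t)}$, namely $v_1^{(t+1)}=v_1^{(t)}+\eta\exp(v_1^{(0)}+v_2^{(0)})\exp(-2v_1^{(t)})$. Since $v_1^{(0)}<0$, the pre-activation $-v_1$ of neuron $1$ on $x_1$ must cross zero if the regime persists. You instead only lower-bound the slope, $a_{t+1}\ge a_t+2\eta\exp(-a_t)$ via $\cosh(c_t)\ge1$. You then play $w_1^{(t)}=\tfrac12(S_w+a_t)\to\infty$ against the cap $b_1^{(t)}<B$ that comes from $b_2^{(t)}>0$. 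This discards information and gives a looser exit time. In exchange it needs no closed-form recursion and yields a fully explicit $T$ in terms of the conserved quantities; round your $T$ up to an integer.

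Your first-exit computation is a genuine gain in rigor. It shows that at each step of the regime $w_1+b_1$ grows by $2\eta L_2^{(t)}$ and $-w_2+b_2$ grows by $2\eta L_1^{(t)}$. The paper infers $-b_1^{(T)}/w_1^{(T)}\in[-1,1]$ from $v_1^{(T)}\ge0$ without checking $w_1^{(T)}+b_1^{(T)}>0$. It also leaves implicit that the regime might end earlier because neuron $2$ deactivates on $x_2$. Your contradiction-plus-first-exit framing settles both points.

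One inaccuracy is in your closing remark. When the regime ends, $w_2$ may still be positive, or $w_1$ still negative. So the exit can also land in \asmref{asm:one_active_both_one_active_one_case_3} or \asmref{asm:one_active_both_one_active_one_case_4}, as the paper's proof of \thmref{thm:no_PAC} lists. This does not affect the proposition itself.
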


    \begin{proof}
        We begin with evaluating $\Phi(\btheta^{(t)}; x)$ at the data points $x_1=-1$ and at $x_2=1$. Since both neurons are active on both instances, we have
        \begin{align*}
            &\Phi\p{\btheta^{(t)};x_1} = \relu{-w_1^{(t)} + b_1^{(t)}} - \relu{-w_2^{(t)} + b_2^{(t)}} = -w_1^{(t)}+b_1^{(t)} + w_2^{(t)} - b_2^{(t)}, \\
            &\Phi\p{\btheta^{(t)};x_2} = \relu{w_1^{(t)} + b_1^{(t)}} - \relu{w_2^{(t)} + b_2^{(t)}} = w_1^{(t)}+b_1^{(t)} - w_2^{(t)} - b_2^{(t)}.
        \end{align*}
        The loss at iteration $t$ is thus given by
        \begin{equation}\label{eq:loss_linear_setting}
            \Lcal\p{\btheta^{(t)}} = \frac{1}{2}\exp\p{-w_1^{(t)}+b_1^{(t)} + w_2^{(t)} - b_2^{(t)}} + \frac{1}{2}\exp\p{-w_1^{(t)}-b_1^{(t)} + w_2^{(t)} + b_2^{(t)}}.
        \end{equation}
        Using the shorthands $A_t\coloneqq\exp\p{-w_1^{(t)}+b_1^{(t)} + w_2^{(t)} - b_2^{(t)}}$ and $B_t\coloneqq\exp\p{-w_1^{(t)}-b_1^{(t)} + w_2^{(t)} + b_2^{(t)}}$, the partial derivatives can be written compactly as
        \begin{align*}
            \frac{\partial\Lcal(\btheta^{(t)})}{\partial w_1^{(t)}} &= -\frac{1}{2}A_t - \frac{1}{2}B_t, \\
            \frac{\partial \Lcal(\btheta^{(t)})}{\partial b_1^{(t)}} &= \frac{1}{2}A_t - \frac{1}{2}B_t, \\
            \frac{\partial \Lcal(\btheta^{(t)})}{\partial w_2^{(t)}} &= \frac{1}{2}A_t + \frac{1}{2}B_t, \\
            \frac{\partial \Lcal(\btheta{(t)})}{\partial b_2^{(t)}} &= - \frac{1}{2}A_t + \frac{1}{2}B_t.
        \end{align*}
        Using the above to compute the gradient step update for each parameter, we get
        \begin{align}
            w_1^{(t+1)} &= w_1^{(t)} + \frac{1}{2}\eta A_t + \frac{1}{2}\eta B_t, \label{eq:w_1_linear_rec}\\
            b_1^{(t+1)} &= b_1^{(t)} - \frac{1}{2}\eta A_t + \frac{1}{2}\eta B_t, \label{eq:b_1_linear_rec}\\
            w_2^{(t+1)} &= w_2^{(t)} - \frac{1}{2}\eta A_t - \frac{1}{2}\eta B_t,\nonumber\\
            b_2^{(t+1)} &= b_2^{(t)} + \frac{1}{2}\eta A_t - \frac{1}{2}\eta B_t.\nonumber
        \end{align}
        We now bound the difference $v_1^{(t)}\coloneqq w_1^{(t)}-b_1^{(t)}$, by subtracting \eqref{eq:b_1_linear_rec} from \eqref{eq:w_1_linear_rec} to obtain
        \begin{equation}\label{eq:v_1_linear_step_size}
            v_1^{(t+1)} = v_1^{(t)} + \eta A_t.
        \end{equation}
        Additionally, summing the gradient updates of the weights and the biases separately,  we get
        \begin{align*}
            w_1^{(t+1)} + w_2^{(t+1)} &= w_1^{(t)} + w_2^{(t)}, \\
            b_1^{(t+1)} + b_2^{(t+1)} &= b_1^{(t)} + b_2^{(t)}.
        \end{align*}
        Recursively applying the above, we arrive at
        \begin{align*}
            w_1^{(t)} + w_2^{(t)} &= w_1^{(0)} + w_2^{(0)},\\
            b_1^{(t)} + b_2^{(t)} &= b_1^{(0)} + b_2^{(0)}.
        \end{align*}
        By adding the right-hand-side and subtracting the left-hand-side of the above, we can simplify $A_t$ as follows
        \begin{align*}
            A_t &= \exp\p{-w_1^{(t)}+b_1^{(t)} + w_2^{(t)} - b_2^{(t)}} = \exp\p{-2w_1^{(t)}+2b_1^{(t)} + w_1^{(0)} - b_1^{(0)} + w_2^{(0)} - b_2^{(0)}} \\
            &=\exp\p{v_1^{(0)}+v_2^{(0)}}\exp\p{-2v_1^{(t)}}.
        \end{align*}
        Plugging the above in \eqref{eq:v_1_linear_step_size}, we obtain the recursion
        \begin{equation}\label{eq:v_1_linear_recursion}
            v_1^{(t+1)} = v_1^{(t)} + \eta\exp\p{v_1^{(0)}+v_2^{(0)}}\exp\p{-2v_1^{(t)}}.
        \end{equation}
        By our assumption that both neurons are active on both instances, we have that $b_1^{(0)}>0$ (since otherwise at least one of $w_1^{(0)}x_1+b_1^{(0)}$ and $w_1^{(0)}x_2+b_1^{(0)}$ is negative). Moreover, we have that $|w_1^{(0)}|<b_1^{(0)}$ (since otherwise the breakpoint of the first neuron which is $-b_1^{(0)}/w_1^{(0)}$ will be in $[-1,1]$). We therefore have that $v_1^{(0)}<0$, and by \eqref{eq:v_1_linear_recursion} we get that it is monotonically increasing. 
        
        Using \lemref{lem:exp_recursion}, we have that $v_1^{(t)}-v_1^{(0)}\ge C\ln(t)$ for all $t\ge1$ and some $C>0$ that depends only on $\btheta^{(0)}$ and $\eta$. In particular, for some $T$ large enough such that $C\ln(T)\ge v_1^{(0)}$, it must hold that $v_1^{(T)}\ge0$, implying $|w_1^{(T)}|\ge b_1^{(T)}$. Namely, it holds that $-b_1^{(T)}/w_1^{(T)}\in[-1,1]$, and that the first neuron has turned inactive on one of the instances, concluding the proof of the proposition.
    \end{proof}


\section{Proofs for \secref{sec:slow_convergence}}

\subsection{Proof of \thmref{thm:no_PAC}}\label{app:no_PAC_proof}

    Before we prove the theorem, we will state and prove the following proposition, which proves that almost all initializations $\btheta^{(0)}$ such that $\Lcal(\btheta^{(0)})<0.5$, the limiting equilibrium point GD converges to is nonzero. Additionally, this proposition also allows us to ignore certain pathological edge cases of Lebesgue measure zero.
    \begin{proposition}\label{prop:measure_zero}
        Consider optimizing the objective in \eqref{eq:obj} using GD with step size $\eta\in(0,0.5]$. Then, for almost all $\btheta^{(0)}$ such that $\Lcal(\btheta^{(0)})<0.5$ and for all $t\ge0$, the GD iterates $\btheta^{(t)}=(w_1^{(t)},b_1^{(t)},w_2^{(t)},b_2^{(t)})$ satisfy
        \begin{enumerate}
            \item\label{item:n1}
            $|w_1^{(t)}|\neq|b_1^{(t)}|$,
            \item\label{item:n2}
            $|w_2^{(t)}|\neq|b_2^{(t)}|$,
            \item\label{item:nonzero_weights}
            $w_1^{(t)}\neq0$ and $w_2^{(t)}\neq0$.
            \item\label{item:equilibrium}
            $w_1^{(t)} - b_1^{(t)} + w_2^{(t)} + b_2^{(t)}\neq 0$,
            \item\label{item:loss}
            $\Lcal\p{\btheta^{(t)}}<0.5$.
        \end{enumerate}    
    \end{proposition}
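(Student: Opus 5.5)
The approach is to treat each GD step as a map $F_\eta:\reals^4\to\reals^4$, $\btheta\mapsto\btheta-\eta\nabla\Lcal(\btheta)$, which is real analytic on each open activation region. The bad set in Items~\ref{item:n1}--\ref{item:equilibrium} at step $t$ is the preimage under $F_\eta^{t}$ of a finite union of hyperplanes, hence of measure zero. We will show that each step map is locally a diffeomorphism away from a null set, so that preimages of null sets are null. A countable union over $t$ then gives a null set.

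We argue by induction on $t$. Suppose $\btheta^{(t)}$ avoids the hyperplanes $\{|w_j|=|b_j|\}$ and $\{w_j=0\}$. Then $\btheta^{(t)}$ lies in the interior of an activation region: no preactivation $w_jx_i+b_j=w_j\pm b_j$ vanishes, so the activation pattern is locally constant. On that region $\Lcal$ is a sum of exponentials of linear forms, so $F_\eta$ is analytic there. By the computations already done in the appendix (the linear setting, the single-specialization cases, and full specialization), the update in every region with $\Lcal<0.5$ has the explicit form of Equations~(\ref{eq:w_1_simple_dynamics}--\ref{eq:b_2_simple_dynamics}) or its analogues. In these forms the dynamics decouple into one-dimensional recursions $a\mapsto a+c\,e^{-ka}$ in coordinates like $u_j,v_j$, with the other coordinates conserved. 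The derivative of $a\mapsto a+c e^{-ka}$ is $1-ck\,e^{-ka}$. Where this vanishes only on a null set, the map is a local diffeomorphism off a null set, and preimages of null sets remain null.

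Item~\ref{item:equilibrium} needs separate treatment. In the fully specialized regime, $v_1+u_2$ is exactly invariant by \eqref{eq:stability_invariant}, so it suffices that this quantity is nonzero at the entry time. That is the preimage of a hyperplane under finitely many steps, handled as above. In the earlier regimes the regions are only visited for finitely many steps, by Propositions~\ref{prop:one_active_both_one_active_one_case_1}--\ref{prop:linear_setting} and Proposition~\ref{prop:specialized_dead_interval}. Item~\ref{item:loss}, that the loss stays below $0.5$, follows from descent: on each region the loss is smooth with locally bounded curvature. I would check directly that $\eta\le0.5$ guarantees $\Lcal(\btheta^{(t+1)})\le\Lcal(\btheta^{(t)})$ via the recursion structure, since each one-dimensional exponential recursion strictly decreases its loss term when $\eta c k e^{-ka}\le 1$, and this holds in the small-loss regime. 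Possible crossings of region boundaries in a single step also need handling. Lemma~\ref{lem:loss_at_least_half} excludes transitions into high-loss patterns, and the loss is continuous across boundaries.

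The main obstacle is the nondegeneracy of the Jacobian of $F_\eta$. With $\eta\le0.5$ and loss below $0.5$, each factor $c\,k\,e^{-ka}$ equals $2\eta$ times a loss term, which is at most $\eta\cdot 1<1$. So $1-ck\,e^{-ka}>0$ and the Jacobian is in fact nowhere singular. This is exactly where the restriction $\eta\le0.5$ enters. Consequently $F_\eta$ is a local diffeomorphism on each open region, preimages of measure-zero sets are measure zero, and the induction closes. If some mixed region has a non-triangular Jacobian, I would fall back on the fact that the determinant is a nonconstant analytic function and cite \citet{mityagin2015zero}: its zero set, and hence the critical set, is null.
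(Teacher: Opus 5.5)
Your proposal is correct in substance and follows the paper's strategy. The decisive fact in both is that for $\eta\le 1/2$ and $\Lcal(\btheta)<1/2$ the Jacobian of the GD map is nonsingular. The exceptional sets are preimages of finitely many hyperplanes under the $t$-fold GD map, hence null, and a countable union over $t$ finishes.

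The differences are technical:
\begin{itemize}
\item \emph{Nonsingularity.} The paper gets it uniformly from $\abs{\partial^2\Lcal/\partial\theta_{j_1}\partial\theta_{j_2}}\le\Lcal$, which makes $I_4-\eta\nabla^2\Lcal$ strictly diagonally dominant (since $4\eta\Lcal<1$). That covers all admissible activation patterns at once, so your region-by-region decoupling and your analytic-determinant fallback are not needed.
\item \emph{A harmless slip in your bookkeeping.} Where both neurons are active on the same $x_i$, the relevant factor is $2\eta e^{\pm\Phi(\btheta;x_i)}$, which is $4\eta$ times the corresponding summand of $\Lcal$, not $2\eta$ times it. It is still $<1$, because $e^{\pm\Phi(\btheta;x_i)}<1$ and $2\eta\le 1$.
\item \emph{The null-set step.} The paper shows the iterated map is an analytic open map, so the relevant coordinate combinations are analytic and not identically zero, and then invokes Mityagin's zero-set theorem. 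You instead pull back null sets under a $C^1$ local diffeomorphism. Your route is more elementary and needs no analyticity. It also sidesteps the paper's ``not the zero map'' argument, which strictly has to be run on each connected piece of the domain.
\item \emph{Item~\ref{item:equilibrium}.} It needs no separate treatment: $\{v_1+u_2=0\}$ is just another hyperplane, pulled back at every $t$. The detour through the invariant and the finite-visit propositions can be dropped; those propositions in any case assume Item~\ref{item:loss} for all $t$.
\end{itemize}

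The one step that fails as written is your handling of region crossings for Item~\ref{item:loss}. Appealing to \lemref{lem:loss_at_least_half} to exclude transitions is circular: the lemma only says such patterns have loss at least $1/2$, so you would already need the bound at $t+1$. Continuity of $\Lcal$ across boundaries also says nothing about a finite step that jumps from one region to another. The paper's own case analysis makes the same silent assumption, since it evaluates $\Lcal(\btheta^{(t+1)})$ with the time-$t$ pattern's formula.

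The repair is short. Write $z_{ji}\coloneqq w_jx_i+b_j$ and use $(x_1,1)\perp(x_2,1)$. One GD step then moves the four preactivations as follows, each only if it is currently positive (otherwise it is unchanged):
\begin{itemize}
\item $z_{11}$ decreases and $z_{21}$ increases, each by $\eta e^{\Phi(\btheta;x_1)}$;
\item $z_{12}$ increases and $z_{22}$ decreases, each by $\eta e^{-\Phi(\btheta;x_2)}$.
\end{itemize}
Since $\Lcal<1/2$ forces $z_{12},z_{21}>0$, the quantity $\Phi(\cdot;x_1)=\relu{z_{11}}-\relu{z_{21}}$ strictly decreases and $\Phi(\cdot;x_2)$ strictly increases, whether or not a ReLU switches off during the step. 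This gives Item~\ref{item:loss} for every $\eta>0$. The condition $\eta cke^{-ka}\le 1$ you attach to descent is therefore superfluous; $\eta\le 1/2$ is used only for the Jacobian.
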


    \begin{proof}
        We will first prove that if all items hold at an iteration $t$, then the last item also holds at the next iteration $t+1$.
    
        The assumption $\Lcal\p{\btheta^{(t)}}<0.5$ and \lemref{lem:loss_at_least_half} force the first neuron to be active on $x_2$ and the second neuron to be active on $x_1$. Thus, the only possible activation patterns for the network at such an iteration are when the first neuron is active only on $x_2$ or on both instances, and that the second neuron is active on $x_1$ or on both instances, for a total of four possible patterns (ignoring degenerate cases where breakpoints are ill-defined or coincide with data points due to Items~(\ref{item:n1}--\ref{item:nonzero_weights})). 
        \begin{itemize}
            \item
            Starting from the case where each neuron is active on one instance, we compute $\Phi(\btheta^{(t)}; x)$ at $x_1=-1$ and $x_2=1$ to obtain
            \begin{align*}
                \Phi(\btheta^{(t)};x_1) &= \relu{-w_1^{(t)} + b_1^{(t)}} - \relu{-w_2^{(t)} + b_2^{(t)}} = w_2^{(t)} - b_2^{(t)}, \\
                \Phi(\btheta^{(t)}, x_2) &= \relu{w_1^{(t)}+b_1^{(t)}} - \relu{w_2^{(t)} + b_2^{(t)}} = w_1^{(t)}+b_1^{(t)}.
            \end{align*}
            The loss at iteration $t$ is thus given by
            \begin{equation}\label{eq:loss_one_one}
                \Lcal\p{\btheta^{(t)}} = \frac{1}{2}\exp\p{w_2^{(t)}-b_2^{(t)}} + \frac{1}{2}\exp\p{-w_1^{(t)}-b_1^{(t)}}.
            \end{equation}
            With the above, we compute the partial derivatives as follows
            \begin{align*}
                \frac{\partial\Lcal(\btheta^{(t)})}{\partial w_1^{(t)}} &= -\frac{1}{2}\exp\p{-w_1^{(t)}-b_1^{(t)}}, \\
                \frac{\partial \Lcal(\btheta^{(t)})}{\partial b_1^{(t)}} &= -\frac{1}{2}\exp\p{-w_1^{(t)}-b_1^{(t)}}, \\
                \frac{\partial \Lcal(\btheta^{(t)})}{\partial w_2^{(t)}} &= \frac{1}{2}\exp\p{w_2^{(t)} - b_2^{(t)}}, \\
                \frac{\partial \Lcal(\btheta{(t)})}{\partial b_2^{(t)}} &= -\frac{1}{2}\exp\p{w_2^{(t)} - b_2^{(t)}},
            \end{align*}
            which imply the following gradient descent updates
            \begin{align*}
                w_1^{(t+1)} &= w_1^{(t)} + \frac{1}{2}\eta \exp\p{-w_1^{(t)} - b_1^{(t)}}, \\
                b_1^{(t+1)} &= b_1^{(t)} + \frac{1}{2}\eta \exp\p{-w_1^{(t)} - b_1^{(t)}}, \\
                w_2^{(t+1)} &= w_2^{(t)} - \frac{1}{2}\eta \exp\p{w_2^{(t)} - b_2^{(t)}}, \\
                b_2^{(t+1)} &= b_2^{(t)} + \frac{1}{2}\eta \exp\p{w_2^{(t)} - b_2^{(t)}}.
            \end{align*}
            Plugging the above in \eqref{eq:loss_one_one}, we get the update rule for the loss
            \[
                \Lcal\p{\btheta^{(t+1)}} = \frac{1}{2}\exp\p{w_2^{(t)}-b_2^{(t)} - \eta \exp\p{w_2^{(t)} - b_2^{(t)}}} + \frac{1}{2}\exp\p{-w_1^{(t)}-b_1^{(t)} -\eta \exp\p{-w_1^{(t)} - b_1^{(t)}}},
            \]
            namely, the loss over each instance strictly decreases, and therefore $\Lcal(\btheta^{(t+1)})<\Lcal(\btheta^{(t)})<0.5$.
    
            \item 
            Now suppose that the first neuron is active on both instances, and the second neuron is only active on $x_1$. Then the outputs of the network on the training points are given by
            \begin{align*}
                &\Phi(\btheta, x_1) = \relu{-w_1 + b_1} - \relu{-w_2 + b_2} = -w_1 + b_1 +w_2 - b_2, \\
                &\Phi(\btheta; x_2) = \relu{w_1 + b_1} - \relu{w_2 + b_2}=  w_1 + b_1,
            \end{align*}
            and the training loss is therefore
            \begin{equation}\label{eq:loss_two_one}
                \Lcal(\btheta) = \frac12\exp\p{-w_1+b_1+w_2-b_2} + \frac12\exp\p{-w_1 - b_1}.
            \end{equation}
            With the above, we can compute the partial derivatives to obtain
            \begin{align*}
                \frac{\partial \Lcal(\btheta)}{\partial w_1} &= -\frac12\exp(-w_1 + b_1 + w_2 - b_2) - \frac12\exp(-w_1-b_1), \\
                \frac{\partial \Lcal(\btheta)}{\partial b_1} &= \frac12\exp(-w_1 + b_1 + w_2 - b_2) - \frac12\exp(-w_1 - b_1), \\
                \frac{\partial \Lcal(\btheta)}{\partial w_2} &= \frac12\exp(-w_1 + b_1 + w_2 - b_2), \\
                \frac{\partial \Lcal(\btheta)}{\partial b_2} &= -\frac12\exp(-w_1 + b_1 + w_2 - b_2).
            \end{align*}
            Thus, the GD updates are given by
            \begin{align*}
                w_1^{(t+1)} &= w_1^{(t)} + \frac12\eta \exp\p{-w_1^{(t)} + b_1^{(t)} + w_2^{(t)} - b_2^{(t)}} + \frac12\eta\exp\p{-w_1^{(t)}-b_1^{(t)}}, \\
                b_1^{(t+1)} &= b_1^{(t)} - \frac12\eta \exp\p{-w_1^{(t)} + b_1^{(t)} + w_2^{(t)} - b_2^{(t)}} + \frac12\eta\exp\p{-w_1^{(t)} - b_1^{(t)}}, \\
                w_2^{(t+1)} &= w_2^{(t)} - \frac12\eta \exp\p{-w_1^{(t)} + b_1^{(t)} + w_2^{(t)} - b_2^{(t)}}, \\
                b_2^{(t+1)} &= b_2^{(t)} + \frac12\eta \exp\p{-w_1^{(t)} + b_1^{(t)} + w_2^{(t)} - b_2^{(t)}}.
            \end{align*}
            Plugging the above in \eqref{eq:loss_two_one}, we get the update rule for the loss over the first instance
            \[
                \ell\p{-\Phi\p{\btheta^{(t+1)};x_1}} = \exp\p{-w_1^{(t)} + b_1^{(t)} + w_2^{(t)} - b_2^{(t)} - 2\eta \exp\p{-w_1^{(t)} + b_1^{(t)} + w_2^{(t)} - b_2^{(t)}}},
            \]
            and for the second instance
            \[
                \ell\p{\Phi\p{\btheta^{(t+1)};x_2}} = \exp\p{-w_1^{(t)} - b_1^{(t)} -\eta\exp\p{-w_1^{(t)} - b_1^{(t)}}}.
            \]
            Therefore the training loss is strictly decreasing, and we have $\Lcal(\btheta^{(t+1)})<\Lcal(\btheta^{(t)})<0.5$.
    
            \item 
            Suppose that the first neuron is active on $x_2$, and the second neuron is active on both instances. Then the outputs of the network on the training points are given by
            \begin{align*}
                \Phi(\btheta, x_1) &= \relu{-w_1 + b_1} - \relu{-w_2 + b_2} = w_2 - b_2, \\
                \Phi(\btheta; x_2) &= \relu{w_1 + b_1} - \relu{w_2 + b_2}=  w_1 + b_1 - w_2 - b_2,
            \end{align*}
            and the training loss is therefore
            \begin{equation}\label{eq:loss_one_two}
                \Lcal(\btheta) = \frac12\exp(w_2 - b_2) + \frac12\exp(w_2+b_2-w_1-b_1).
            \end{equation}
            With the above, we can compute the partial derivatives to obtain
            \begin{align*}
                \frac{\partial \Lcal(\btheta)}{\partial w_1} &= -\frac12\exp(w_2+b_2-w_1-b_1), \\
                \frac{\partial \Lcal(\btheta)}{\partial b_1} &= -\frac12\exp(w_2+b_2-w_1-b_1), \\
                \frac{\partial \Lcal(\btheta)}{\partial w_2} &= \frac12\exp(w_2-b_2) + \frac12\exp(w_2+b_2-w_1-b_1), \\
                \frac{\partial \Lcal(\btheta)}{\partial b_2} &= -\frac12\exp(w_2 - b_2) + \frac12\exp(w_2+b_2-w_1-b_1).
            \end{align*}
            Thus, the GD updates are given by
            \begin{align*}
                w_1^{(t+1)} &= w_1^{(t)} + \frac12\eta\exp\p{w_2^{(t)}+b_2^{(t)}-w_1^{(t)}-b_1^{(t)}}, \\
                b_1^{(t+1)} &= b_1^{(t)} + \frac12\eta \exp\p{w_2^{(t)}+b_2^{(t)}-w_1^{(t)}-b_1^{(t)}}, \\
                w_2^{(t+1)} &= w_2^{(t)} - \frac12\eta\exp\p{w_2^{(t)}-b_2^{(t)}} - \frac12\eta\exp\p{w_2^{(t)}+b_2^{(t)}-w_1^{(t)}-b_1^{(t)}}, \\
                b_2^{(t+1)} &= b_2^{(t)} +\frac12\eta\exp\p{w_2^{(t)} - b_2^{(t)}} - \frac12\eta\exp\p{w_2^{(t)}+b_2^{(t)}-w_1^{(t)}-b_1^{(t)}}. 
            \end{align*}
            Plugging the above in \eqref{eq:loss_one_two}, we get the update rule for the loss over the first instance
            \[
                \ell\p{-\Phi\p{\btheta^{(t+1)};x_1}} = \exp\p{w_2^{(t)}-b_2^{(t)} - \eta \exp\p{w_2^{(t)} - b_2^{(t)}}},
            \]
            and for the second instance
            \[
                \ell\p{\Phi\p{\btheta^{(t+1)};x_2}} = \frac12\exp\p{w_2^{(t)}+b_2^{(t)}-w_1^{(t)}-b_1^{(t)} -2\eta\exp\p{w_2^{(t)}+b_2^{(t)}-w_1^{(t)}-b_1^{(t)}}}.
            \]
            Therefore the training loss is strictly decreasing, and we have $\Lcal(\btheta^{(t+1)})<\Lcal(\btheta^{(t)})<0.5$.

            \item
            Suppose that both neurons are active on both instances. Then the outputs of the network on the training points are given by
            \begin{align*}
                &\Phi\p{\btheta^{(t)};x_1} = \relu{-w_1^{(t)} + b_1^{(t)}} - \relu{-w_2^{(t)} + b_2^{(t)}} = -w_1^{(t)}+b_1^{(t)} + w_2^{(t)} - b_2^{(t)}, \\
                &\Phi\p{\btheta^{(t)};x_2} = \relu{w_1^{(t)} + b_1^{(t)}} - \relu{w_2^{(t)} + b_2^{(t)}} = w_1^{(t)}+b_1^{(t)} - w_2^{(t)} - b_2^{(t)}.
            \end{align*}
            The loss at iteration $t$ is thus given by
            \begin{equation}\label{eq:loss_linear_setting}
                \Lcal\p{\btheta^{(t)}} = \frac{1}{2}\exp\p{-w_1^{(t)}+b_1^{(t)} + w_2^{(t)} - b_2^{(t)}} + \frac{1}{2}\exp\p{-w_1^{(t)}-b_1^{(t)} + w_2^{(t)} + b_2^{(t)}}.
            \end{equation}
            Using the shorthands $A_t\coloneqq\exp\p{-w_1^{(t)}+b_1^{(t)} + w_2^{(t)} - b_2^{(t)}}$ and $B_t\coloneqq\exp\p{-w_1^{(t)}-b_1^{(t)} + w_2^{(t)} + b_2^{(t)}}$, the partial derivatives can be written compactly as
            \begin{align*}
                \frac{\partial\Lcal(\btheta^{(t)})}{\partial w_1^{(t)}} &= -\frac{1}{2}A_t - \frac{1}{2}B_t, \\
                \frac{\partial \Lcal(\btheta^{(t)})}{\partial b_1^{(t)}} &= \frac{1}{2}A_t - \frac{1}{2}B_t, \\
                \frac{\partial \Lcal(\btheta^{(t)})}{\partial w_2^{(t)}} &= \frac{1}{2}A_t + \frac{1}{2}B_t, \\
                \frac{\partial \Lcal(\btheta{(t)})}{\partial b_2^{(t)}} &= - \frac{1}{2}A_t + \frac{1}{2}B_t.
            \end{align*}
            Using the above to compute the gradient step update for each parameter, we get
            \begin{align*}
                w_1^{(t+1)} &= w_1^{(t)} + \frac{1}{2}\eta A_t + \frac{1}{2}\eta B_t, \\
                b_1^{(t+1)} &= b_1^{(t)} - \frac{1}{2}\eta A_t + \frac{1}{2}\eta B_t, \\
                w_2^{(t+1)} &= w_2^{(t)} - \frac{1}{2}\eta A_t - \frac{1}{2}\eta B_t,\nonumber\\
                b_2^{(t+1)} &= b_2^{(t)} + \frac{1}{2}\eta A_t - \frac{1}{2}\eta B_t.\nonumber
            \end{align*}
            Substituting the above in \eqref{eq:loss_linear_setting}, we can write the loss update rule for the first instance at iteration $t+1$ as
            \begin{align*}
                \ell\p{\Phi\p{\btheta^{(t+1)};x_1},y_1} &= \exp\p{-w_1^{(t+1)}+b_1^{(t+1)} + w_2^{(t+1)} - b_2^{(t+1)}}\\
                &= \exp\p{-w_1^{(t)}+b_1^{(t)} + w_2^{(t)} - b_2^{(t)} - 2\eta A_t} \\
                &= \exp\p{-2\eta A_t}\cdot\ell\p{\Phi\p{\btheta^{(t)};x_1},y_1},
            \end{align*}
            and since $A_t>0$, we obtain that the loss on $x_1$ strictly decreases with $t$.
            Likewise, we can verify that
            \[
                \ell\p{\Phi\p{\btheta^{(t+1)};x_2},y_2} = \exp\p{-2\eta B_t}\cdot\ell\p{\Phi\p{\btheta^{(t)};x_2},y_2},
            \]
            and therefore $\Lcal(\btheta^{(t+1)}) < \Lcal(\btheta^{(t)}) <0.5$. 
        \end{itemize}

        Fix any $\eta\in(0,0.5]$, and define the GD map $G:\reals^4\to\reals^4$ by $G(\btheta)\coloneqq\btheta-\eta\nabla\Lcal(\btheta)$. We will show that if $\Lcal(\btheta)<0.5$ then this is an analytic open map by analyzing its Jacobian, $J_G$. We have
        \[
            J_G(\btheta) = I_4-\eta\nabla^2\Lcal(\btheta),
        \]
        where $I_4$ is the four-dimensional identity matrix. For ease of notation, let $\theta_1\coloneqq w_1$, $\theta_2\coloneqq b_1$, $\theta_3\coloneqq w_2$ and $\theta_4\coloneqq b_2$. We now bound the entries of the Hessian as follows: For any activation pattern and any $i\in\{1,2\}$, $\Phi(\btheta;x_i)$ computes an affine transformation of the form $\sum_{j=1}^4\xi_{i,j}\theta_j$ for some $\xi_{i,j}\in\{-1,0,1\}$, $j\in[4]$. From this, we obtain that $G$ is analytic (it is the difference between the identity map and a composition between an affine transformation and an exponent) wherever it is differentiable, and we also have that for all $j_1,j_2\in[4]$,
        \[
            \frac{\partial^2}{\partial\theta_{j_1}\partial\theta_{j_2}}\Lcal(\btheta) = \frac{1}{2}\xi_{1,j_1}\xi_{1,j_2}\exp\p{\Phi\p{\btheta;x_1}} + \frac{1}{2}\xi_{2,j_1}\xi_{2,j_2}\exp\p{-\Phi\p{\btheta;x_2}}.
        \]
        Bounding the above in absolute value yields
        \[
            \abs{\frac{\partial^2}{\partial\theta_{j_1}\partial\theta_{j_2}}\Lcal(\btheta)} \le \frac{1}{2}\exp\p{\Phi\p{\btheta;x_1}} + \frac{1}{2}\exp\p{-\Phi\p{\btheta;x_2}} = \Lcal(\btheta) < 0.5.
        \]
        Thus, for $\eta\in(0,0.5]$ we have that $J_G(\btheta)$ is strictly diagonally dominant, and hence non-singular. This implies that $G$ is an open map on any domain where it is differentiable.
    
        We now prove Items~(\ref{item:n1}--\ref{item:loss}) by induction. First, we have that $\Lcal(\btheta^{(0)})<0.5$ holds by our assumption, and that if \itemref{item:loss} holds for $t$ then it also holds for $t+1$ by the previous case analysis.
        
        By removing the measure zero set where Items~(\ref{item:n1}--\ref{item:equilibrium}) are violated, we have that the induction base holds for $t=0$. For the induction step, assume that these items hold for $t$. Since $\Lcal(\btheta^{(t)})<0.5$, we have that $G$ is an analytic open map, and its composition with itself $t+1$ times, $G^{t+1} \coloneqq G\circ\ldots\circ G$, is also an analytic open map. Thus, the maps 
        \begin{itemize}
            \item
            $\beta_{1,\pm}^{t+1}(\btheta^{(0)})\coloneqq G_1^{t+1}(\btheta^{(0)})\pm G_2^{t+1}(\btheta^{(0)})$,
            \item 
            $\beta_{2,\pm}^{t+1}(\btheta^{(0)})\coloneqq G_3^{t+1}(\btheta^{(0)})\pm G_4^{t+1}(\btheta^{(0)})$,
            \item 
            $S^{t+1}\coloneqq G_1^{t+1}(\btheta^{(0)})-G_2^{t+1}(\btheta^{(0)}) + G_3^{t+1}(\btheta^{(0)}) + G_4^{t+1}(\btheta^{(0)})$,
            \item 
            $G_1^{t+1}(\btheta^{(0)})$ and $G_3^{t+1}(\btheta^{(0)})$,
        \end{itemize}
        where $G^{k+1}_j$ returns the $j^{\textrm{th}}$ coordinate of $G^{t+1}$, are all analytic. Moreover, none of these can be the zero map, since $G^{t+1}$ is an open map, and thus any open set is mapped by $G^{t+1}$ to an open set $U$. Let $\btheta=(w_1,b_1,w_2,b_2)\in U$ be arbitrary, we can choose $\varepsilon>0$ small enough such that $(w_1+\varepsilon,b_1,w_2,b_2)\in U$, and thus either one of $w_1-b_1+w_2+b_2\neq0$ or $w_1+\varepsilon-b_1+w_2+b_2\neq0$ hold, and thus $S^{t+1}$ cannot collapse an open set to zero, and it is not the zero map. A similar argument proves that the remaining maps are not the zero map either, and therefore it holds that the zero-set (the preimage of zero) of these maps has measure zero \citep{mityagin2015zero}. By removing these measure-zero sets from the set of valid initializations for $\btheta^{(0)}$, we have that Items~(\ref{item:n1}--\ref{item:equilibrium}) hold. Finally, by taking an infinite yet countable union over the measure-zero sets removed at each iteration of GD, we are removing a set of measure zero from the set $\{\btheta^{(0)}:\Lcal(\btheta^{(0)})<0.5\}$, which concludes the proof of the proposition.
    \end{proof}

    With the above proposition, we turn to prove our main theorem.

    \begin{proof}[Proof of \thmref{thm:no_PAC}]
        By \propref{prop:measure_zero}, we have for almost all $\btheta^{(0)}$ such that $\Lcal(\btheta^{(0)})<0.5$, that the GD iterates $\btheta^{(t)}$ satisfy Items~(\ref{item:n1}--\ref{item:loss}). In particular, the breakpoints $\beta_1^{(t)},\beta_2^{(t)}$ are well-defined and never overlap a data instance for all $t\ge0$, the loss at any iteration $t\ge0$ satisfies $\Lcal(\btheta^{(t)})<0.5$, and $v_1^{(t)}+u_2^{(t)}\neq 0$.

        By the previous conditions and \lemref{lem:loss_at_least_half}, the first neuron must be active on $x_2$ and the second neuron must be active on $x_1$. In addition, each neuron may or may not be active on the remaining instance. We split our analysis according to the activation pattern as follows:
        \begin{enumerate}
            \item\label{item:both_active_on_both}
            If both neurons are active on both instances, then \asmref{asm:linear_setting} holds. By \propref{prop:linear_setting}, after a constant number of iterations, we transition to the setting of \asmref{asm:one_active_both_one_active_one_case_3} or \asmref{asm:one_active_both_one_active_one_case_4}, which is analyzed in \itemref{item:wrong_sign}; to the setting of \asmref{asm:one_active_both_one_active_one_case_1} or \asmref{asm:one_active_both_one_active_one_case_2}, which is analyzed in \itemref{item:right_sign}; to the setting of \asmref{asm:neurons_active_only_on_one_point}, which is analyzed in \itemref{item:specialized_dead_interval}; or to the setting of \asmref{asm:base_case}, which is analyzed in \itemref{item:specialized_with_intersection}.
            
            \item\label{item:wrong_sign}
            If both neurons point to the same direction ($\sign(w_1^{(t)})=\sign(w_2^{(t)})$), then \asmref{asm:one_active_both_one_active_one_case_3} or \asmref{asm:one_active_both_one_active_one_case_4} holds. By Propositions~\ref{prop:one_active_both_one_active_one_case_3},\ref{prop:one_active_both_one_active_one_case_4}, after a constant number of iterations, we transition to the setting of \asmref{asm:one_active_both_one_active_one_case_1} or \asmref{asm:one_active_both_one_active_one_case_2}, which is analyzed in \itemref{item:right_sign}; to the setting of \asmref{asm:neurons_active_only_on_one_point}, which is analyzed in \itemref{item:specialized_dead_interval}; or to the setting of \asmref{asm:base_case}, which is analyzed in \itemref{item:specialized_with_intersection}.
            
            \item\label{item:right_sign}
            If one neuron has specialized and the other points to the opposite direction and is active on both instances, then \asmref{asm:one_active_both_one_active_one_case_1} or \asmref{asm:one_active_both_one_active_one_case_2} holds. By Propositions~\ref{prop:one_active_both_one_active_one_case_1},\ref{prop:one_active_both_one_active_one_case_2}, after a constant number of iterations, we transition to the setting of \asmref{asm:neurons_active_only_on_one_point}, which is analyzed in \itemref{item:specialized_dead_interval}; or to the setting of \asmref{asm:base_case}, which is analyzed in \itemref{item:specialized_with_intersection}.
            
            \item\label{item:specialized_dead_interval}
            If both neurons have specialized and are inactive on a subinterval of $(-1,1)$, \asmref{asm:neurons_active_only_on_one_point} holds. By \propref{prop:specialized_dead_interval}, after a constant number of iterations, we transition to the setting of \asmref{asm:base_case}, which is analyzed in \itemref{item:specialized_with_intersection}.

            \item\label{item:specialized_with_intersection}
            Finally, if both neurons have specialized and are active on a subinterval of $(-1,1)$, \asmref{asm:base_case} holds. By \propref{prop:slow_margin_convergence}, we have
            \[
                x^\star(t) = \Theta\p{\frac{1}{\ln(t)}}. 
            \]
        \end{enumerate}
        Since the settings in Items~(\ref{item:both_active_on_both}--\ref{item:specialized_dead_interval}) can only hold for a constant number of iterations before transitioning to one of the next items, after a constant number of iterations, we transition to the final setting in \itemref{item:specialized_with_intersection}, from which the theorem follows.
    \end{proof}

    \subsection{Proof of \thmref{thm:non_asymptotic}}\label{app:proof_non_asymptotic}

    Before we prove the theorem, we will need a few auxiliary lemmas. The following lemma provides explicit constants for bounding the growth rate of the weights.
    
    \begin{lemma}\label{lem:weight_dif_ubound}
        Under Assumption~\ref{asm:base_case} where all parameters are initialized using the He initialization (and therefore $b_1^{(0)} = b_2^{(0)} = 0$) and under the assumption that $w_1^{(0)} \in (0, 3)$ and $w_2^{(0)} \in (-2, 0)$ we have that
        \[ w_1^{(t)} - w_2^{(t)} \leq 5 + \ln \left(1 + 2 \exp\p{\eta} \eta  t \right). \]
        
    \end{lemma}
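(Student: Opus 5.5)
We bound the two weights separately with the explicit upper and lower bounds already obtained, and then combine them. Under \asmref{asm:base_case}, \propref{prop:base_case_invariance} shows that the specialized update rules in Equations~(\ref{eq:w_1_simple_dynamics}--\ref{eq:b_2_simple_dynamics}) hold for all $t$. We therefore use the upper bound of \lemref{lem:converge_rate_w1} for $w_1^{(t)}$ and the lower bound of \lemref{lem:converge_rate_w2} for $w_2^{(t)}$, which gives $-w_2^{(t)}\le\frac12\ln\p{\exp(-2w_2^{(0)})+2\eta C_2\exp(-u_2^{(0)})t}$. With zero biases we have $v_1^{(0)}=w_1^{(0)}$ and $u_2^{(0)}=w_2^{(0)}$, so the constants simplify to $C_1=\exp\p{\eta\exp(-w_1^{(0)})}\le\exp(\eta)$ (since $w_1^{(0)}>0$) and $C_2=\exp\p{\eta\exp(w_2^{(0)})}\le\exp(\eta)$ (since $w_2^{(0)}<0$). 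We then obtain
\[
w_1^{(t)}\le\tfrac12\ln\p{e^{2w_1^{(0)}}+2\eta e^{\eta}e^{w_1^{(0)}}t},\qquad -w_2^{(t)}\le\tfrac12\ln\p{e^{-2w_2^{(0)}}+2\eta e^{\eta}e^{-w_2^{(0)}}t}.
\]

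Next, in each bound we factor out the initialization term. For the first,
\[
\tfrac12\ln\p{e^{2w_1^{(0)}}\p{1+2\eta e^{\eta}e^{-w_1^{(0)}}t}}=w_1^{(0)}+\tfrac12\ln\p{1+2\eta e^{\eta}e^{-w_1^{(0)}}t}\le w_1^{(0)}+\tfrac12\ln\p{1+2\eta e^{\eta}t},
\]
where the inequality uses $e^{-w_1^{(0)}}\le1$. For the second, $e^{-2w_2^{(0)}}\ge1$, so factoring gives
\[
-w_2^{(0)}+\tfrac12\ln\p{1+2\eta e^{\eta}e^{w_2^{(0)}}t}\le -w_2^{(0)}+\tfrac12\ln\p{1+2\eta e^{\eta}t}.
\]

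Adding the two bounds yields
\[
w_1^{(t)}-w_2^{(t)}\le w_1^{(0)}-w_2^{(0)}+\ln\p{1+2\eta e^{\eta}t}<3+2+\ln\p{1+2e^{\eta}\eta t},
\]
using $w_1^{(0)}<3$ and $-w_2^{(0)}<2$, which is the claimed bound.

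The main obstacle is only bookkeeping. The constant inside the exponent of \lemref{lem:exp_recursion} must come out as at most $e^{\eta}$ in both cases, and this depends on the sign conditions on $w_1^{(0)}$ and $w_2^{(0)}$. The factor $2$ in front of $\eta C_2$ also needs checking. The lemma statement for $w_2$ carries this factor, but the displayed bound for $\exp(-D_t)$ uses $\tilde G_2$ without the $2$. I would recompute directly from \lemref{lem:exp_recursion} with $C_2=2$ and coefficient $\frac12\eta e^{-u_2^{(0)}}$. This gives coefficient $\eta e^{-u_2^{(0)}}\exp\p{\eta e^{-u_2^{(0)}}e^{2w_2^{(0)}}}$, which is even smaller than what was used above, so the final bound still holds.
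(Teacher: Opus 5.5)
Your proposal is correct and follows the paper's proof essentially step for step. Both use the upper bound of \lemref{lem:converge_rate_w1} and the lower bound of \lemref{lem:converge_rate_w2}, simplify to $C_1,C_2\le e^{\eta}$ under zero biases, factor out the initialization terms, and finish with $w_1^{(0)}-w_2^{(0)}<5$. The only cosmetic difference is that you bound each logarithm separately by $\frac12\ln(1+2\eta e^{\eta}t)$, using $e^{-w_1^{(0)}}\le 1$ and $e^{w_2^{(0)}}\le 1$; the paper instead merges the two logarithms and applies $(1+A)(1+B)\le(1+\max\{A,B\})^2$. Your check that recomputing from \lemref{lem:exp_recursion} gives a coefficient no larger than the stated $2\eta C_2$ is also right, so the looser stated bound is safe to use.
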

    \begin{proof}
        Using \lemref{lem:converge_rate_w1} and \lemref{lem:converge_rate_w2} we get
        \[ w_1^{(t)} - w_2^{(t)} \leq \frac{1}{2} \ln \left( \exp\p{2w_1^{(0)}} + 2 \eta C_1 \exp\p{v_1^{(0)}} \cdot t \right) +  \frac{1}{2} \ln \left( \exp\p{-2w_2^{(0)}} +  2\eta \exp\p{-u_2^{(0)}} C_2 \cdot t\right)\]
        where $C_1 = \exp \left( \eta \exp\p{v_1^{(0)}} \exp\p{-2w_1^{(0)}} \right)$ and $C_2 = \exp \left( \eta \exp\p{-u_2^{(0)}} \exp\p{2w_2^{(0)}} \right)$.
        Since $b_1^{(0)} = b_2^{(0)} = 0$ we can simplify the constant to get
        \[ w_1^{(t)} - w_2^{(t)} \leq \frac{1}{2} \ln \left( \exp\p{2w_1^{(0)}} + 2 \eta C_1 \exp\p{w_1^{(0)}} \cdot t \right) +  \frac{1}{2} \ln \left( \exp\p{-2w_2^{(0)}} +  2\eta \exp\p{-w_2^{(0)}} C_2 \cdot t\right)\]
        where $C_1 = \exp \left( \eta \exp\p{-w_1^{(0)}} \right)$ and $C_2 = \exp \left( \eta \exp\p{w_2^{(0)}} \right)$.
        Merging the $\ln$ functions we get
        \[ w_1^{(t)} - w_2^{(t)} \leq \frac{1}{2} \ln \left( \left[\exp\p{2w_1^{(0)}} + 2 \eta C_1 \exp\p{w_1^{(0)}} \cdot t \right] \left[ \exp\p{-2w_2^{(0)}} +  2\eta \exp\p{-w_2^{(0)}} C_2 \cdot t \right] \right) \]
        Factor the leading exponential in each term:
        \begin{align*}
            & \exp\p{2w_1^{(0)}} + 2 \eta C_1 \exp\p{w_1^{(0)}} \cdot t = \exp\p{2w_1^{(0)}} \left(1 +  2 \eta C_1 \exp\p{-w_1^{(0)}} \cdot t \right) \\
            & \exp\p{-2w_2^{(0)}} +  2\eta \exp\p{-w_2^{(0)}} C_2 \cdot t = \exp\p{-2w_2^{(0)}} \left( 1 + 2\eta \exp\p{w_2^{(0)}} C_2 \cdot t \right)
        \end{align*}
        Plugging it in the bound we get
        \[ w_1^{(t)} - w_2^{(t)} \leq \frac{1}{2} \ln \left( \left[ \exp\p{2w_1^{(0)} - 2w_2^{(0)}} \right] \left[1 +  2 \eta C_1 \exp\p{-w_1^{(0)}} \cdot t \right] \left[ 1 + 2\eta \exp\p{w_2^{(0)}} C_2 \cdot t \right] \right). \]
        Once again by using logarithm rules we get
        \[ w_1^{(t)} - w_2^{(t)} \leq w_1^{(0)} - w_2^{(0)} + \frac{1}{2} \ln \left( \left[1 +  2 \eta C_1 \exp\p{-w_1^{(0)}} \cdot t \right] \left[ 1 + 2\eta \exp\p{w_2^{(0)}} C_2 \cdot t \right] \right). \]
        Using the inequality $(1+A)(1+B) \leq (1 + \max\{A, B \})^2$ for all $A,B >0$ we get
        \[ w_1^{(t)} - w_2^{(t)} \leq w_1^{(0)} - w_2^{(0)} + \ln \left(1 + \eta A t \right). \]
        where $A = \max\{2 C_1 \exp\p{-w_1^{(0)}}, 2 C_2 \exp\p{w_2^{(0)}}\}$.
        Using the assumptions that $w_1^{(0)} \in (0,3)$ and $w_2^{(0)} \in (-2, 0)$ we get $w_1^{(0)} - w_2^{(0)} < 5$ and $A < 2 \exp\p{\eta}$ and thus
        \[ w_1^{(t)} - w_2^{(t)} \leq 5 + \ln \left(1 + 2 \exp\p{\eta} \eta  t \right). \]
    \end{proof}

    The following lemma shows that under a zero biases assumption and a tamed step size, the convergence rate of the system to a balanced GD step for both neurons is monotone.

    \begin{lemma}\label{lem:monotone_limit}
        Under \asmref{asm:base_case}, suppose that we optimize the objective in \eqref{eq:obj} using GD with step size $\eta\in(0,0.5]$. Further assume that $b_1^{(0)}=b_2^{(0)}$. Then,
        \[
            \lim _{t \to \infty} b_2^{(t)} - b_1^{(t)} = \lim _{t \to \infty} w_1^{(t)} + w_2^{(t)} = \frac{v_1^{(0)} + u_2^{(0)}}{2}.
        \]
        where the convergence is monotone.
    \end{lemma}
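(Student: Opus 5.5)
The limit itself is already given by \lemref{lem:limit_of_sum}, so what remains is monotonicity. The plan is to show that the scalar recursion for $S_t\coloneqq w_1^{(t)}+w_2^{(t)}$ derived in that proof never overshoots its fixed point $S^\star=\frac{v_1^{(0)}+u_2^{(0)}}{2}$. Monotonicity of $b_2^{(t)}-b_1^{(t)}$ then comes for free. Indeed, the invariant in \eqref{eq:stability_invariant} gives $b_2^{(t)}-b_1^{(t)} = (v_1^{(0)}+u_2^{(0)}) - S_t = 2S^\star - S_t$, which is an affine, order-reversing function of $S_t$.

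Recall from \eqref{eq:s_t} that $S_{t+1}-S_t=\alpha_t g(S_t)$, where $\alpha_t=\eta\exp(-D_t)$ and $g(s)=\exp(v_1^{(0)}-s)-\exp(s-u_2^{(0)})$. Since $g(S^\star)=0$, the mean value theorem gives some $\xi_t$ between $S_t$ and $S^\star$ with
\[
S_{t+1}-S^\star=(S_t-S^\star)\bigl(1+\alpha_t g'(\xi_t)\bigr),\qquad g'(s)=-\exp(v_1^{(0)}-s)-\exp(s-u_2^{(0)})<0 .
\]
It therefore suffices to show $0<\alpha_t\abs{g'(\xi)}<1$ for every $\xi$ between $S_t$ and $S^\star$. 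If so, $S_{t+1}-S^\star$ has the same sign as $S_t-S^\star$ and strictly smaller magnitude. Hence $S_t$ is monotone, increasing if $S_0<S^\star$ and decreasing otherwise, and it is constant if $S_0=S^\star$.

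The key step, and the only real obstacle, is bounding $\alpha_t\abs{g'}$ on that interval. Set $h(\xi)\coloneqq\exp(v_1^{(0)}-\xi-D_t)+\exp(\xi-u_2^{(0)}-D_t)$, so that $\alpha_t\abs{g'(\xi)}=\eta h(\xi)$. Since $h$ is convex, its maximum on the interval is attained at an endpoint.
\begin{itemize}
\item At $\xi=S_t$: using $w_1=\tfrac12(S_t+D_t)$, $w_2=\tfrac12(S_t-D_t)$ and the conserved quantities $v_1^{(t)}=v_1^{(0)}$, $u_2^{(t)}=u_2^{(0)}$ from Lemmas~\ref{lem:converge_rate_w1} and \ref{lem:converge_rate_w2}, the two terms become $a\coloneqq\exp(-w_1^{(t)}-b_1^{(t)})$ and $b\coloneqq\exp(w_2^{(t)}-b_2^{(t)})$. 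Thus $h(S_t)=a+b=2\Lcal(\btheta^{(t)})$.
\item At $\xi=S^\star$: the two terms coincide, and $h(S^\star)=2\exp\bigl(\tfrac{v_1^{(0)}-u_2^{(0)}}{2}-D_t\bigr)=2\sqrt{ab}\le a+b$ by AM--GM.
\end{itemize}
Hence $\sup_\xi \alpha_t\abs{g'(\xi)}\le 2\eta\Lcal(\btheta^{(t)})$.

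It remains to bound the loss. Under \asmref{asm:base_case}, which persists by \propref{prop:base_case_invariance}, we have $\beta_1^{(t)}<1$ with $w_1^{(t)}>0$, so $u_1^{(t)}>0$ and $a<1$. Similarly $\beta_2^{(t)}>-1$ with $w_2^{(t)}<0$, so $v_2^{(t)}<0$ and $b<1$. Therefore $\Lcal(\btheta^{(t)})<1$, and with $\eta\le 0.5$ we get $\alpha_t\abs{g'}<1$, which closes the argument. If instead one assumes $\Lcal<0.5$, as in the paper's main setting, the bound is comfortably below $1/2$.

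The hypothesis $b_1^{(0)}=b_2^{(0)}$ does not seem to be needed for this argument. I would note this, and observe that it only serves to identify the monotone numerator with the He-initialization setting used in \thmref{thm:non_asymptotic}, where it gives $S^\star=\tfrac12(w_1^{(0)}+w_2^{(0)})$.
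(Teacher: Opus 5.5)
Your proof is correct, but it reaches monotonicity by a different route than the paper.

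\textbf{The paper's route.} It fixes a single constant $L=\sup_{s\in[S_0,S^\star]}\abs{g'(s)}$ over the whole initial interval and uses $g(S_t)\le L(S^\star-S_t)$ to rule out overshooting. It then replaces $e^{-D_t}$ by $e^{-D_0}$ via the monotonicity of the weights. Finally it evaluates $Le^{-D_0}$ explicitly, using $b_1^{(0)}=b_2^{(0)}=0$, to conclude $\eta Le^{-D_0}\le 1$.

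\textbf{Your route.} You work locally in time on $[S_t,S^\star]$. You observe that the two endpoint values of the effective step $\alpha_t\abs{g'}$ are $2\eta\Lcal(\btheta^{(t)})$ and $2\eta\sqrt{ab}$, where $a,b$ are the per-sample losses. The no-overshoot condition therefore reduces to both training points being correctly classified, which the specialized regime guarantees.

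\textbf{What this buys.} You get a structural explanation in place of a computation.
\begin{itemize}
\item The equal-bias hypothesis becomes unnecessary. Your identity applied at $t=0$ gives $Le^{-D_0}=2\Lcal(\btheta^{(0)})<2$, so the paper's route did not need it either.
\item You avoid the explicit evaluation of $L$, where the paper's write-up contains sign slips, although its conclusion survives them.
\item You obtain strict contraction $\abs{S_{t+1}-S^\star}<\abs{S_t-S^\star}$, under a step-size condition that only loosens as the loss decreases.
\end{itemize}

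\textbf{A detail you inherit from the paper.} The paper's \eqref{eq:s_t} drops a factor $\tfrac12$: by \eqref{eq:w_1_simple_dynamics}, the true increment is $\tfrac12\eta e^{-D_t}g(S_t)$. Your bound therefore actually reads $\alpha_t\abs{g'}\le\eta\Lcal(\btheta^{(t)})$, which only strengthens your conclusion.
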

    \begin{proof}
        First, by \lemref{lem:limit_of_sum} we have that the limits hold, and therefore it is only left to prove monotonicity. Since $w_1^{(t)} + w_2^{(t)} - b_1^{(t)} + b_2^{(t)} = v_1^{(0)} + u_2^{(0)}$ for all $t\ge0$ by Equations~(\ref{eq:w_1_simple_dynamics}--\ref{eq:b_2_simple_dynamics}), it suffices to prove monotonicity for just the sum of the weights.
        
        For the sake of brevity, we define $S_t \coloneqq w_1^{(t)} + w_2^{(t)}$. Compute
        \begin{align*}
            S_{t+1} &= w_1^{(t+1)} + w_2^{(t+1)} \\
            &= w_1^{(t)} + \eta \exp(-2w_1^{(t)} + v_1^{(0)}) + w_2^{(t)} - \eta \exp(2w_2^{(t)} - u_2^{(0)}) \\
            &= S_t + \eta \left[ \exp(-2w_1^{(t)} + v_1^{(0)}) - \exp(2w_2^{(t)} - u_2^{(0)}) \right]
        \end{align*}
        Define $D_{t} \coloneqq w_1^{(t)} - w_2^{(t)} > 0$. We can rewrite $w_1^{(t)} = \frac{1}{2}(S_t + D_t), ~ w_2^{(t)} = \frac{1}{2}(S_t - D_t)$, and thus we can also rewrite $S_{t+1} - S_t$ using $v_1^{(0)}, u_2^{(0)}, S_t$ and $D_t$ as follows
        \begin{align*}
            S_{t+1} - S_t &= \eta \left[ \exp(-2w_1^{(t)} + v_1^{(0)}) - \exp(2w_2^{(t)} - u_2^{(0)}) \right] \\
            &= \eta \left[ \exp(-S_t -D_t + v_1^{(0)}) - \exp(S_t - D_t - u_2^{(0)}) \right] \\
            &= \eta \exp(-D_t)\left[ \exp(v_1^{(0)} - S_t) - \exp(S_t - u_2^{(0)})\right] \\
            &= \eta \exp(-D_t) g(S_t)
        \end{align*}
        where $g(s) \coloneqq \exp(v_1^{(0)}-s) - \exp(s - u_2^{(0)})$. We notice that if $S_t = S^\star = \frac{v_1^{(0)} + u_2^{(0)}}{2}$, then $S_{t+1} = S_t$, implying that this is a fixed point. Since $\eta \exp(-D_t) > 0$, the only term that affects the sign of $S_{t+1} - S_t$ is $g(S_t)$. If $S_t < S^\star$ then $g(S_t) < 0$ which implies that $S_{t+1} > S_t$; and if $S_t > S^\star$, then $g(S_t) > 0$, which implies that $S_{t+1} < S_t$. Assume without loss of generality that $S_t < S^\star$ (the other direction is symmetric). We have that $g(s)$ is decreasing since 
        \[
            g^\prime(s) = -\exp(v_1^{(0)} - s) - \exp(s - u_2^{(0)}) < 0.
        \]
        Denote $L \coloneqq \sup_{s \in [S_0, S^\star]}|g^\prime(s)|$. We will now compute $L$. We have $g^{\prime\prime}(s) = \exp(v_1^{(0)} - s) - \exp(s - u_2^{(0)})$. Equating this to zero to find the extrema points, we get that $S^\star$ is an extremum point of $g^\prime(s)$. Since 
        \[
            g^{(3)}(s) = -\exp(v_1^{(0)} - s) - \exp(s - u_2^{(0)}) < 0,    
        \]
        then $S^\star$ is a maximum point of $g^\prime(s)$. But since $g^\prime(s) < 0$, then $S^\star$ is a minimum point of $|g^\prime(s)|$. This implies that the maximum point in the interval must be attained at the boundary, thus
        \[
            L = \max\{\exp(v_1^{(0)}-S_0) + \exp(S_0 - u_2^{(0)}), \exp(v_1^{(0)} - S^\star) + \exp(S^\star - u_2^{(0)}) \}.    
        \]
        Since $g(S)$ is $L$-Lipschitz, we get
        \begin{align*}
            |g(S_t) - g(S^\star)| \leq L |S_t - S^\star|.
        \end{align*}
        But $g(S)$ is also decreasing, and $g(S^\star) = 0$ and $g(S_t) \geq g(S^\star) = 0$, so we get that
        \begin{align*}
            g(S_t) &= g(S_t) - g(S^\star) \leq L(S^\star - S_t),
        \end{align*}
        and thus by the above equation we compute
        \[
            S_{t+1}-S_t = \eta \exp(-D_t) g(S_t) \leq \eta L (S^\star - S_t) \exp(-D_t).
        \]
        Rearranging yields
        \[
            S_{t+1} < \eta L \exp(-D_t) S^\star + (1-\eta L \exp(-D_t)) S_t,
        \]
        and further simplifying, we have
        \[
            S_{t+1} - S^\star < (1-\exp(-D_t) \eta L)(S_t - S^\star).
        \]
        
        Combining Assumption~\ref{asm:base_case} and Proposition~\ref{prop:base_case_invariance}, we have that $w_1^{(t)} > 0$ and increases, and that $w_2^{(t)} < 0$ and decreases, hence $\exp(-D_t)$ decreases, and we can upper bound it by $\exp(-D_0) = \exp(w_2^{(0)} - w_1^{(0)})$.
        Plugging this upper bound in the above displayed equation, we get
        \begin{equation}\label{eq:S_t_monotone}
            S_{t+1} - S^\star < (1 - \eta L \exp(w_2^{(0)} - w_1^{(0)})) (S_t - S^\star).
        \end{equation}
        Next, we upper bound $L$ as follows. The assumption that $b_1^{(0)}=b_2^{(0)}=0$ yields
        \[
            \exp\p{v_1^{(0)} - S^\star} + \exp(S^\star - u_2^{(0)}) = 2\exp\p{\frac{w_2^{(0)}-w_1^{(0)}}{2}} \le \exp\p{w_2^{(0)}-w_1^{(0)}}.
        \]
        Similarly, we have
        \[
            \exp\p{v_1^{(0)}-S_0} + \exp\p{S_0 - u_2^{(0)}} = \exp\p{-w_2^{(0)}} + \exp\p{w_1^{(0)}}.
        \]
        Thus, when the above is the maximum in $L$, we have
        \[
            \frac{\exp(w_1^{(0)} - w_2^{(0)})}{L} = \p{\exp\p{w_2^{(0)}} + \exp\p{-w_1^{(0)}}}^{-1} \ge \frac12,
        \]
        and if the maximum in $L$ is attained by the second argument, we have
        \[
            \frac{\exp(w_1^{(0)} - w_2^{(0)})}{L} \ge 1.
        \]
        Therefore, in both cases, we have by our assumption that $\eta\le 0.5 \le \frac{\exp(w_1^{(0)} - w_2^{(0)})}{L}$, implying that $1-\eta L \exp(w_2^{(0)} - w_1^{(0)}) \ge 0$, and thus by \eqref{eq:S_t_monotone} and by using the assumption that $S_t < S^\star$, we conclude that
        \[
            S_{t+1} \in (S_{t},S^\star].
        \]
    \end{proof}

    The following proposition shows that under a zero biases assumption, the difference between the biases throughout optimization increases at a linear rate to a constant that depends on the sum of the weights.
    
    \begin{proposition}\label{prop:non_asymptotic}
        Under \asmref{asm:base_case}, suppose that we optimize the objective in \eqref{eq:obj} using GD with step size $\eta\in(0,0.3]$. Suppose that $w_2^{(0)}<0<w_1^{(0)}$, $w_1^{(0)}+w_2^{(0)}>1$ and $b_1^{(0)}=b_2^{(0)}=0$. Then,
        \[
            b_2^{(t)} - b_1^{(t)} \ge \frac{w_1^{(0)}+w_2^{(0)}}{11}-\frac{1}{10}
        \]
        for all $t\ge t_0$, where
        \[
            t_0\coloneqq\floor{\frac{w_1^{(0)} + w_2^{(0)}}{2\eta \exp\p{\frac{-w_1^{(0)}+w_2^{(0)}}{2}}}}.
        \]
    \end{proposition}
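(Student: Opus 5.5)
The approach is to exploit the conserved quantities of the specialized dynamics, reduce the claim to a single time $t_0$ using the monotonicity of \lemref{lem:monotone_limit}, and then run a contradiction argument on $[0,t_0]$ in which the growth of $D_t\coloneqq w_1^{(t)}-w_2^{(t)}$ stays controlled as long as the bias gap is small.

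\emph{Reduction to a single time.} Write $m\coloneqq w_1^{(0)}+w_2^{(0)}>1$, $\Delta_t\coloneqq b_2^{(t)}-b_1^{(t)}$ and $S_t\coloneqq w_1^{(t)}+w_2^{(t)}$. Because $b_1^{(0)}=b_2^{(0)}=0$, we have $v_1^{(0)}=w_1^{(0)}$ and $u_2^{(0)}=w_2^{(0)}$. The invariant \eqref{eq:stability_invariant} then gives $S_t+\Delta_t=m$ for all $t$, with $\Delta_0=0$ and $S^\star=m/2$. Since $S_0=m>S^\star$, \lemref{lem:monotone_limit} (valid since $\eta\le0.3\le0.5$) shows that $S_t$ decreases monotonically to $m/2$. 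Hence $\Delta_t$ increases monotonically to $m/2$. It therefore suffices to prove $\Delta_{t_0}\ge\delta\coloneqq m/11-1/10$, and monotonicity extends this to all $t\ge t_0$.

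\emph{Increment of the bias gap.} By \eqref{eq:s_t},
\[
\Delta_{t+1}-\Delta_t=\eta e^{-D_t}\p{e^{w_1^{(0)}-\Delta_t}-e^{\Delta_t-w_2^{(0)}}}
=\eta e^{-D_t}e^{D_0/2}\cdot 2\sinh\p{\tfrac m2-\Delta_t}.
\]
Suppose for contradiction that $\Delta_{t_0}<\delta$. By monotonicity, $\Delta_t<\delta$ for every $t\le t_0$, so on this range $\sinh(m/2-\Delta_t)\ge\sinh(m/2-\delta)$.

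\emph{Controlling $D_t$ (the main obstacle).} The step size is multiplied by $e^{-D_t}$, and the generic bounds of \lemref{lem:weight_dif_ubound} lose a factor of order $e^{m/2}$ over $t_0$ steps, which would destroy the estimate. Instead I would use the tandem structure of the updates: $w_1$ moves exactly with $b_1$, and $-w_2$ moves exactly with $b_2$. This gives
\[
D_t-D_0=b_1^{(t)}+b_2^{(t)}=\Delta_t+2b_1^{(t)}.
\]
Each increment of $b_1$ equals $\tfrac\eta2 e^{-2w_1^{(t)}+w_1^{(0)}}\le\tfrac\eta2 e^{-w_1^{(0)}}$, since $w_1$ is increasing by \propref{prop:base_case_invariance}. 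Therefore
\[
b_1^{(t_0)}\le \tfrac{\eta t_0}{2}e^{-w_1^{(0)}}\le \tfrac{m}{4}e^{-m/2}\le \tfrac{1}{2e},
\]
where the middle step plugs in the definition of $t_0$. Consequently, for $t\le t_0$ we get $D_t-D_0\le\delta+1/e$, i.e.\ $e^{-D_t}\ge e^{-D_0}e^{-\delta-1/e}$.

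\emph{Summing and concluding.} Summing the increments over $t<t_0$ and using $t_0\ge m/(2\eta e^{-D_0/2})-1$ yields
\[
\Delta_{t_0}\ge \p{\tfrac{m}{2}-\eta e^{-D_0/2}}e^{-\delta-1/e}\cdot 2\sinh\p{\tfrac m2-\delta}.
\]
Since $\eta e^{-D_0/2}\le 0.3$ and $\delta$ is roughly $m/11$, the factor $e^{-\delta}\sinh(m/2-\delta)$ is bounded below by a constant times $1-e^{-(m-2\delta)}$. Together with $m>1$, an elementary numerical estimate then shows that the right-hand side exceeds $\delta$. This contradicts $\Delta_{t_0}<\delta$.

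I expect the only delicate point to be this final numerical verification for all $m>1$, including the floor in $t_0$ and the $-1/10$ slack. For $m$ close to $1$ I would check it directly; for large $m$ it is immediate, because the right-hand side grows exponentially in $m$ while $\delta$ grows only linearly.
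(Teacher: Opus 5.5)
Your proof is correct, but it follows a different route from the paper's. The paper bounds the two biases separately and linearly in $t$: $b_1^{(t)}\le\frac{\eta t}{2}e^{-w_1^{(0)}}$ (the same estimate you use) and $b_2^{(t)}\ge\frac{\eta t}{2}e^{-D_0/2}$. It subtracts them at $t=t_0$ and only then invokes \lemref{lem:monotone_limit}.

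You instead follow the gap $\Delta_t$ through its exact increment, and let the contradiction hypothesis $\Delta_t<\delta$ do double duty. It keeps the $\sinh$ factor bounded below. Through $D_t-D_0=\Delta_t+2b_1^{(t)}$, it also keeps the effective step $e^{-D_t}$ within a constant factor of $e^{-D_0}$.

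The gain is that you never need to control $w_2^{(t)}-b_2^{(t)}$ on $[0,t_0]$. The paper's bound on $b_2$ requires $w_2^{(t)}-b_2^{(t)}\ge -D_0/2$ for every $t<t_0$. However, it derives this threshold condition only as a \emph{necessary} condition: it shows the condition can persist at most until $t_0$, not that it does. In fact, \lemref{lem:exp_recursion} applied to $a_t=b_2^{(t)}-w_2^{(t)}$ gives $b_2^{(t)}-w_2^{(t)}\ge\ln\bigl(e^{-w_2^{(0)}}+\eta t\bigr)$. So the condition already fails before $t_0$ whenever $m\ge 2$, which covers the regime of \thmref{thm:non_asymptotic}. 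The paper's inequality for $b_2^{(t_0)}$ is still true, for instance via $b_2^{(t)}\ge\frac12\ln\bigl(1+\eta t e^{w_2^{(0)}}\bigr)$ and concavity. But that needs a separate argument, which your bootstrap avoids.

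What the paper's route buys, once repaired, is a direct computation that produces the constants $\frac{1}{11}$ and $\frac{1}{10}$. Yours ends in a numerical comparison that you should write out explicitly. When you do, note that the true increment is $\Delta_{t+1}-\Delta_t=\frac{\eta}{2}e^{-D_t}\bigl(e^{w_1^{(0)}-\Delta_t}-e^{\Delta_t-w_2^{(0)}}\bigr)$; \eqref{eq:s_t} drops the factor $\frac12$ present in the update rules. Your final lower bound is therefore halved. Even with your crude estimate $\eta e^{-D_0/2}\le 0.3$, it still exceeds $\delta$ for every $m>1$ by more than a factor of five, with the tightest case near $m\approx 2$. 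So the conclusion stands.
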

    Consider $\frac{1}{2}\eta \exp\p{-w_1^{(t)} - b_1^{(t)}}$. By the monotonicity of the weights in \propref{prop:base_case_invariance}, 
    \[
        \frac{1}{2}\eta \exp\p{-w_1^{(t)} - b_1^{(t)}}\le \frac{1}{2}\eta \exp\p{-w_1^{(0)} - b_1^{(0)}} = \frac{1}{2}\eta \exp\p{-w_1^{(0)}}    
    \]
    for all $t$, so
    \begin{equation}\label{eq:b_1_ubound}
        b_1^{(t)} \le b_1^{(0)} + t\cdot\frac{1}{2}\eta \exp\p{-w_1^{(0)}} = t\cdot\frac{1}{2}\eta \exp\p{-w_1^{(0)}}.
    \end{equation}
    On the other hand, as long as $w_2^{(t)} - b_2^{(t)} \ge \frac{-w_1^{(0)}+w_2^{(0)}}{2}$, then the growth rate of $b_2^{(t)}$ can be lower bounded as follows
    \begin{equation}\label{eq:b_2_lbound}
        b_2^{(t)} = b_2^{(0)} +\frac12\sum_{i=0}^{t-1}\eta \exp\p{w_2^{(i)} - b_2^{(i)}} \ge t\cdot\frac{1}{2}\eta \exp\p{\frac{-w_1^{(0)}+w_2^{(0)}}{2}},
    \end{equation}
    and we also have by subtracting \eqref{eq:b_2_simple_dynamics} from \eqref{eq:w_2_simple_dynamics} that
    \[
        w_2^{(t+1)} - b_2^{(t+1)} = w_2^{(t)} - b_2^{(t)} - \eta \exp\p{w_2^{(t)} - b_2^{(t)}} \le w_2^{(t)} - b_2^{(t)} - \eta \exp\p{\frac{-w_1^{(0)}+w_2^{(0)}}{2}},
    \]
    which recursively implies that
    \[
        w_2^{(t)} - b_2^{(t)} \le w_2^{(0)} - b_2^{(0)} - t\eta \exp\p{\frac{-w_1^{(0)}+w_2^{(0)}}{2}} = w_2^{(0)} - t\eta \exp\p{\frac{-w_1^{(0)}+w_2^{(0)}}{2}}.
    \]
    By the above, $w_2^{(t)} - b_2^{(t)}$ decreases at a linear rate if it is lower bounded, which entails that a necessary condition for $w_2^{(t)} - b_2^{(t)} \ge \frac{-w_1^{(0)}+w_2^{(0)}}{2}$ to hold is
    \[
        \frac{-w_1^{(0)}+w_2^{(0)}}{2} \le w_2^{(0)} - t\eta \exp\p{\frac{-w_1^{(0)}+w_2^{(0)}}{2}}.
    \]
    Solving for $t$, we have
    \begin{equation}\label{eq:t_upper_bound}
        t \le \frac{w_1^{(0)} + w_2^{(0)}}{2\eta \exp\p{\frac{-w_1^{(0)}+w_2^{(0)}}{2}}}.
    \end{equation}
    The assumptions $w_2^{(0)}<0<w_1^{(0)}$ and $w_1^{(0)} + w_2^{(0)}>1$ imply $-w_1^{(0)} < w_2^{(0)}$, yielding that at the first iteration we have
    \[
        w_2^{(0)} - b_2^{(0)} = w_2^{(0)} \ge \frac{-w_1^{(0)}+w_2^{(0)}}{2},
    \]
    thus \eqref{eq:t_upper_bound} holds for at least one iteration. Moreover, since our assumptions also entail
    \[
        \frac{w_1^{(0)} + w_2^{(0)}}{2\eta \exp\p{\frac{-w_1^{(0)}+w_2^{(0)}}{2}}} > \frac{1}{2\eta \exp\p{\frac{-1+0}{2}}} > \frac{0.3}{\eta} \ge 1,
    \]
    hence we can take the floor function of the right-hand side in \eqref{eq:t_upper_bound},
    \begin{equation}\label{eq:t_0_bounds}
        \frac{w_1^{(0)} + w_2^{(0)}}{2\eta \exp\p{\frac{-w_1^{(0)}+w_2^{(0)}}{2}}} - 1 \le t_0\coloneqq \floor{\frac{w_1^{(0)} + w_2^{(0)}}{2\eta \exp\p{\frac{-w_1^{(0)}+w_2^{(0)}}{2}}}} \le \frac{w_1^{(0)} + w_2^{(0)}}{2\eta \exp\p{\frac{-w_1^{(0)}+w_2^{(0)}}{2}}},
    \end{equation}
    and multiply by it since it is strictly positive. 
    Subtracting \eqref{eq:b_1_ubound} from \eqref{eq:b_2_lbound} yields
    \[
        b_2^{(t)} - b_1^{(t)} \ge t\cdot\frac{1}{2}\eta \exp\p{\frac{-w_1^{(0)}+w_2^{(0)}}{2}} - t\cdot\frac{1}{2}\eta \exp\p{-w_1^{(0)}}.
    \]
    Substituting $t$ with $t_0$, we have
    \begin{equation}\label{eq:bias_dif_lbound}
        b_2^{(t_0)} - b_1^{(t_0)} \ge \frac{1}{2}\eta t_0 \exp\p{\frac{-w_1^{(0)}+w_2^{(0)}}{2}} - \frac{1}{2}\eta t_0 \exp\p{-w_1^{(0)}}.
    \end{equation}
    Lower bounding the first summand in the right-hand side of the above using \eqref{eq:t_0_bounds}, we have
    \begin{equation}\label{eq:lb_first_summand}
        \frac12\eta t_0\exp\p{\frac{-w_1^{(0)}+w_2^{(0)}}{2}} \ge \frac{w_1^{(0)}+w_2^{(0)}}{4} - \frac12\eta\exp\p{\frac{-w_1^{(0)}+w_2^{(0)}}{2}} \ge \frac{w_1^{(0)}+w_2^{(0)}}{4} - 0.1,
    \end{equation}
    where in the last inequality, we used $\eta\le0.3$ and $-w_1^{(0)}+w_2^{(0)}<1$ which hold by our assumptions.

    We now lower bound the second summand in \eqref{eq:bias_dif_lbound}. We have
    \[
        -\frac12\eta t_0\exp\p{-w_1^{(0)}} \ge -\frac{w_1^{(0)}+w_2^{(0)}}{4}\exp\p{-\frac{w_1^{(0)}+w_2^{(0)}}{2}} \ge -\frac{w_1^{(0)}+w_2^{(0)}}{4}\exp\p{-\frac12}.
    \]
    Plugging the above and \eqref{eq:lb_first_summand} in \eqref{eq:bias_dif_lbound} yields the lower bound
    \[
        b_2^{(t_0)} - b_1^{(t_0)} \ge \frac{w_1^{(0)}+w_2^{(0)}}{4}\p{1-\exp\p{-\frac12}} -0.1 \ge \frac{w_1^{(0)}+w_2^{(0)}}{11}-\frac{1}{10}.
    \]
    Finally, we use the monotonicity of $b_2^{(t)} - b_1^{(t)}$ which holds by our assumption on the step size $\eta$ and \lemref{lem:monotone_limit}, to conclude the proof of the proposition for all $t\ge t_0$.

    \begin{proof}[Proof of \thmref{thm:non_asymptotic}]
        The theorem follows in a straightforward manner, by utilizing \propref{prop:non_asymptotic} to lower bound the numerator of $x^{\star}(t)$, using \lemref{lem:weight_dif_ubound} to upper bound the denominator, and plugging the bounds in the theorem statement to obtain explicit constants.
    \end{proof}
\end{document}